\newtheorem{theorem}{Theorem}[section]
\newtheorem{corollary}[theorem]{Corollary}
\newtheorem{lemma}[theorem]{Lemma}
\newtheorem{definition}[theorem]{Definition}
\newtheorem{conjecture}[theorem]{Conjecture}
\newtheorem{remark}[theorem]{Remark}
\newtheorem{expr}{Experiment}
\newtheorem{Algo}{Algorithm}
\newcommand{\sdotfill}{\textcolor[rgb]{0.8,0.8,0.8}{\dotfill}} %change to \cdotfill later
\newcommand{\norm}[1]{\left\Vert#1\right\Vert}
\newcommand{\size}[1]{\left| #1 \right|}
 \newcommand{\eps}{\varepsilon}
\renewcommand{\Pr}{\operatorname*{\mathbf{Pr}}}
\DeclareMathOperator*{\E}{\mathbf{E}}
\newcommand{\pr}[2][]{ \ifthenelse{\isempty{#1}}
  {\Pr\left[#2\right]} {\Pr_{#1}\left[#2\right]} }
\newcommand{\ex}[2][]{ \ifthenelse{\isempty{#1}}
  {\E\left[#2\right]}
  {\E_{#1}\left[#2\right]} }
\newcommand{\hA}{\hat{A}}
\newcommand{\hB}{\hat{B}}
\newcommand{\hProj}[1]{P_{\widehat{A}_{#1}}}
\newcommand{\bu}{\boldsymbol{u}}
\newcommand{\bv}{\boldsymbol{v}}
\newcommand{\OO}{\ensuremath{\mathcal{O}}}
\newcommand{\nsize}{\ensuremath{s'}}
\renewcommand{\size}{\ensuremath{s^*}}
\renewcommand{\hm}[1]{\ensuremath{\mathbf{\widehat{#1}}}}
\renewcommand{\E}{\ensuremath{\mathrm{E}}}
\renewcommand{\eps}{\ensuremath{\varepsilon}}
\newcommand{\mm}[1]{\ensuremath{\mathbf{#1}}} 
\newcommand{\uu}{\ensuremath{\mathbf{u}}}
\newcommand{\SBM}{\ensuremath{\mathrm{SBM}}}
\providecommand{\norm}[1]{\lVert#1\rVert}
\title{Recovering Unbalanced Communities in the Stochastic Block Model With Application to Clustering with a Faulty Oracle\thanks{Authors are in alphabetical order.}}
\date{}
\author{
Chandra Sekhar Mukherjee 
\thanks{Thomas Lord Department of Computer Science, University of Southern California. Research supported by NSF CAREER award 2141536.}
\\
chandrasekhar.mukherjee@usc.edu
\and
Pan Peng
\thanks{School of Computer Science and Technology, University of Science and Technology of China. Research supported in part by NSFC grant 62272431 and ``the Fundamental Research Funds for the Central Universities''.}
\\
ppeng@ustc.edu.cn
\and
%Ji\={a}p\'{e}ng Zh\={a}ng
Jiapeng Zhang
\footnotemark[2]
\\
jiapengz@usc.edu
}
\begin{document}

\maketitle

\begin{abstract}
The stochastic block model (SBM) is a fundamental model for studying graph clustering or community detection in networks. It has received great attention in the last decade and the balanced case, i.e., assuming all clusters have large size, has been well studied. 
However, our understanding of SBM with unbalanced communities (arguably, more relevant in practice) is still limited. In this paper, we provide a simple SVD-based algorithm for recovering the communities in the SBM with communities of varying sizes.
We improve upon a result of Ailon, Chen, and Xu [ICML 2013; 
JMLR 2015] by removing the assumption that there is a large interval such that the sizes of clusters do not fall in, and also remove the dependency of the size of the recoverable clusters on the number of underlying clusters. We further complement our theoretical improvements with experimental comparisons.
Under the planted clique conjecture, the size of the clusters that can be recovered by our algorithm is nearly optimal (up to poly-logarithmic factors) when the probability parameters are constant. 

As a byproduct, we obtain an efficient clustering algorithm with sublinear query complexity in a faulty oracle model, which is capable of detecting all clusters larger than $\tilde{\Omega}({\sqrt{n}})$, even in the presence of $\Omega(n)$ small clusters in the graph. In contrast, previous efficient algorithms that use a sublinear number of queries are incapable of recovering any large clusters if there are more than $\tilde{\Omega}(n^{2/5})$ small clusters.
\end{abstract}

\section{Introduction}
Graph clustering (or community detection) is a fundamental problem in computer science and has wide applications in many domains, including biology, social science, and physics. Among others, the stochastic block model (SBM) is one of the most basic models for studying graph clustering, offering both a theoretical arena for rigorously analyzing the performance of different types of clustering algorithms, and synthetic benchmarks for evaluating these algorithms in practice. Since the 1980s (e.g., \cite{holland1983stochastic,bui1987graph,dyer1989solution,boppana1987eigenvalues}), there has been much progress towards the understanding of the statistical and computational tradeoffs for community detection in SBM with various parameter regimes. We refer to the recent survey \cite{abbe2017community} for a list of such results. 

In this paper, we focus on a very basic version of the stochastic block model.  \begin{definition}[The $\SBM(n,k,p,q)$ model]
In this model, given an $n$-vertex set $V$
with a hidden partition $V=\cup_{i=1}^k V_i$ such that $V_i \cap V_j= \emptyset $ for all $i \ne j$, we say a graph $G=(V,E)$ is sampled from $\SBM(n,k,p,q)$, if for all pairs of vertices $v_i,v_j \in V$, 
%\vspace{-1em}
%\begin{itemize}
%\item 
(1) an edge $(v_i,v_j)$ is added independently with probability $p$, if $v_i,v_j \in V_{\ell}$ for some $\ell$;
%    \item 
(2) an edge $(v_i,v_j)$ is added independently with probability $q$,  otherwise. 
%\end{itemize}
%\vspace{-1em}
%
%We denote the graph obtained as $G=(V,E) \sim SBM(n,k,p,q)$.
\end{definition}
We are interested in the problem of \emph{fully recovering} all or some of the clusters, given a graph $G$ that is sampled from $\SBM(n,k,p,q)$. A cluster $V_i$ is said to be fully recovered if the algorithm outputs a set $S$ that is exactly $V_i$. Most of the previous algorithms on the full recovery of SBM either just work for the \emph{nearly balanced} case (i.e., each cluster has size $\Omega(\frac{n}{k})$) when $k$ is small, say $k=o(\log n)$ (see e.g. \cite{abbe2015community}), or only work under the following assumption:

 \vspace{-0.5em}
\begin{itemize}
\item {\emph{All} of the latent clusters are sufficiently large\footnote{The assumption is sometimes implicit. E.g., in \cite{vu2018simple}, in their Theorem 1, the lower bound on their parameter $\Delta$ implies a lower bound on the smallest cluster size. }, i.e., for each $j$,  $|V_{j}| = \tilde{\Omega}(\sqrt{n})$ (see e.g., \cite{mcsherry2001spectral,bollobas2004max,chen2012clustering,chaudhuri2012spectral,abbe2017community,vu2018simple,cole2019nonuniform}).  }%\\%[-0.3cm]
\end{itemize}
\vspace{-0.5em}
% \begin{itemize}
%     \item \emph{All} of the latent clusters are sufficiently large\footnote{The assumption is sometimes implicit. E.g., in \cite{vu2018simple}, in their Theorem 1, the lower bound on their parameter $\Delta$ implies a lower bound on the smallest cluster size. }, i.e., for each $j$,  $|V_{j}| = \tilde{\Omega}(\sqrt{n})$ (see e.g., \cite{mcsherry2001spectral,bollobas2004max,chen2012clustering,chaudhuri2012spectral,abbe2017community,vu2018simple,cole2019nonuniform}).  
% \end{itemize}
% \vspace{-1em}

From a practical perspective, many real-world graphs may have many communities of different sizes, that is, large and small clusters co-exist in these graphs. This motivates us to investigate how to recover the communities in SBM if the latent communities have very different sizes. In particular, we are interested in \emph{efficiently} recovering all the large clusters in the presence of small clusters. However, such a task can be quite difficult, as those small clusters may be confused with noisy edges. Indeed, most previous algorithms try to find all the $k$-clusters in one shot, which always computes some structures/information of the graph that are sensitive to noise (and small clusters). For example, the classical SVD-based algorithms (e.g., \cite{mcsherry2001spectral,vu2018simple}) first compute the first $k$ singular vectors of some matrix associated with the graph and then use these $k$ vectors to find clusters. Such singular vectors are sensitive to edge insertions or deletions (e.g. \cite{davis1969some}). 
In general, this difficulty was termed by Ailon et al. \cite{ailon2013breaking} as ``\emph{small cluster barrier}'' for graph clustering. 

To overcome such a barrier, Ailon et al.  \cite{ailon2013breaking,ailon2015iterative} proposed an algorithm that recovers all large latent clusters in the presence of small clusters under the following assumptions (see \cite{ailon2015iterative}), 
 \vspace{-0.8em}
\begin{itemize}
\itemsep0em 
 \item none of the cluster sizes falls in the interval $(\alpha/c,\alpha)$ for a number $\alpha\sim \Theta\left(\frac{\sqrt{p(1-q)n}}{p-q}\right)$ and $c>1$ is some universal constant;
 \item there exists a large cluster, say of size at least 
$\Upsilon:=\Theta\left( \max \left\{\frac{\sqrt{p(1-q)n}}{p-q}, \frac{k \log n}{(p-q)^2} \right\}\right)$.
%\item both $p$ and $q$ are constant; thus, $p-q$ is constant
\end{itemize}
 \vspace{-0.5em}

The algorithm in \cite{ailon2015iterative} 
then has to exhaustively search for such a gap, and then apply a convex program-based algorithm to find a large cluster of size at least $\Upsilon$. As we discuss in Section~\ref{sec:whyourresultisinteresting},
the assumption of the recoverable cluster being larger than $\Omega(\sqrt{p(1-q)n}/(p-q))$ is (relatively) natural as any polynomial time algorithm can only recover clusters of size $\Omega(\sqrt{n})$, under the planted clique conjecture.
Still, two natural questions that remain are
\vspace{-0.5em}
\begin{enumerate}
    \item \emph{Can we break the small cluster barrier without making the first assumption on the existence of a gap between the sizes of some clusters?}
    \item \emph{Can we remove the dependency of the size of the recoverable cluster on the number $k$ of  clusters? In particular, when $k\gg \sqrt{n}$, can we still recover a cluster of size $\tilde{\Omega}(\sqrt{n})$?}
\end{enumerate}
\vspace{-0.5em}

The above questions are inherently related to the clustering problem under the faulty oracle model which was recently proposed by Mazumdar and Saha \cite{mazumdar2017clustering}, as an instance from the faulty oracle model is exactly the graph that is sampled from SBM with corresponding parameters. Thus, it is natural to ask \emph{if one can advance the state-of-the-art algorithm for recovering large clusters for the graph instance from the faulty oracle model using an improved algorithm for the SBM?}

%In this model, given a ground-truth clustering over a set $V$ and an \emph{faulty} oracle (i.e., it gives wrong answer with small probability) that answers same-cluster queries of pairs of vertices in $V$, the goal is to recover the corresponding clusters by performing a small number of  queries to the oracle (see Section \ref{sec:faultyoracle} for the precise definition). A growing number of works have given algorithms for finding reasonably large clusters with polynomial running time and sublinear\footnote{Since there are $\Theta(n^2)$ number of possible queries, by ``sublinear'' number of queries, we mean the number of queries made by the algorithm is $o(n^2)$.} query complexity \chandra{probably we need to explain this better?} for this problem, while none of these algorithms can recover large clusters when there are too many small clusters, say at least $\Omega(n^{2/5})$ small clusters. 
%On the other hand, it has been observed that an instance from the faulty oracle model is exactly the graph that is sampled from SBM with corresponding parameters (see e.g. \cite{PZ21:clustering} (\add{this equivalence has been previously known as well} )and Section \ref{sec:techniques} ).  %Thus, an algorithm for the SBM can be used to recover the clusters in the faulty oracle model. 
%Thus, it is natural to ask \emph{if one can advance the state-of-the-art algorithm for recovering large clusters for the graph instance from SBM and use it to recover large clusters for the instance from the faulty oracle model?}

%Against this backdrop, we have the following results. 

\subsection{Our contributions} 
\vspace{-0.5em}
We affirmatively answer all three questions mentioned above. Specifically, we demonstrate that clusters of size $\tilde{\Omega}(\sqrt{n})$ can be successfully recovered in both the standard SBM and the faulty oracle model, \emph{regardless of} the number of clusters present in the graph. This guarantee surpasses any previous achievements in related studies. The practical implications of this finding are significant since real-world networks often exhibit a substantial number of clusters (see e.g. \cite{yang2012defining}), varying in size from large to small. %(For further insights, refer to Yang et al.'s work on defining clusters in networks [Yang et al., 2012]).  
% For the convenience of notation, we say a raph $G$ is generated or sampled from SBM($n,k,p,q$)  
% \subsubsection{An algorithm in SBM}
% Our goal is to \emph{exactly} recover the latent clusters \emph{efficiently} (i.e., within polynomial time) with high probability. We show the following result. 
% \begin{definition}
% In the SBM with parameters $n,k,p,q$ such that $0\leq q<p\leq 1$, denoted by SBM($n,k,p,q$), there is a set $V$ of $n$ vertices with a hidden $k$-partition $V_1,\cdots,V_k$ such that $\cup_i V_i=V$, where each part $V_i$ is called a \emph{cluster} (or \emph{community}). 

% A graph $G$ is generated from the SBM($n,k,p,q$) model, if for any two vertices $u,v\in V$, an edge is added between $u,v$ with probability $p$ if $u,v$ are from the same cluster, and with probability $q$ if $u,v$ are from two different clusters. 
% \end{definition}
% The goal of community detection in SBM is to recover the latent $k$ clusters from such a graph $G$  (up 
% to a permutation of cluster indices). 

%We present a new algorithm for exactly recover all the large clusters. 
\subsubsection{Recovering large clusters in the SBM}
\vspace{-0.5em}
We first provide a singular value decomposition (SVD) based algorithm, \emph{without} assuming there is a gap between the sizes of some clusters, for recovering large latent clusters. Furthermore, {the recoverability of the largest cluster is unaffected by the number of underlying clusters}.

\begin{theorem}
[Recovering one large cluster]\label{thm:mainSBM}
Let $G$ be a graph that is generated from the SBM($n,k,p,q$) 
with $\sigma= \max \left( \sqrt{p(1-p)},\sqrt{q(1-q)} \right)$. If both of the following conditions are satisfied: 
%\vspace{-1em}
%\begin{itemize}
 %   \item
(1) the size of the largest cluster, denoted by $s_{\max}$, is at least $\size:=\frac{2^{13}\cdot \sqrt{p(1-q) \cdot n} \cdot \log n}{(p-q)}$; %, where $C=2^{13}$;
%    \item 
(2) $\sigma^2=\Omega(\log n /n)$. %and $p \le 3/4$
%\end{itemize} 
%\vspace{-1em}
There exists a polynomial time algorithm that exactly recovers a cluster of size at least $\frac{s_{\max}}{7}$ with probability $1-\frac{1}{n^2}$.
\end{theorem}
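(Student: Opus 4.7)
The plan is to run a rank-truncated SVD on the adjacency matrix $A$ and cluster vertices by comparing rows of the resulting low-rank approximation. First I would establish spectral concentration: letting $\bar A = \E[A]$, standard random-matrix bounds (Feige--Ofek or Vu) give $\|A - \bar A\|_{\mathrm{op}} \le C_0\,\sigma\sqrt{n}$ with probability $1 - 1/n^3$ under the assumption $\sigma^2 = \Omega(\log n/n)$. I would then pick a spectral threshold $\tau = \Theta(\sigma\sqrt{n\log n})$, let $r$ be the number of singular values of $A$ exceeding $\tau$, and form $\hat A$ as the rank-$r$ truncation of $A$. By Weyl's inequality, $r$ is within an additive constant of the number of clusters whose spectral contribution $(p-q)|V_i|$ exceeds $\tau$; in particular, the hypothesis $s_{\max} \ge s^* = 2^{13}\sqrt{p(1-q)n}\log n/(p-q)$ ensures $V_{\max}$ is captured in the top-$r$ subspace.

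\textbf{Row perturbations and inter-cluster separation.} The structural key is that the rows of $\bar A$ are constant within each cluster, with $\|\bar A_u - \bar A_v\|_2 = (p-q)\sqrt{|V_i|+|V_j|}$ whenever $u \in V_i$, $v \in V_j$ lie in distinct clusters. Decomposing $\hat A - \bar A$ as the sum of a rank-$2r$ term plus the $\tau$-tail of $\bar A$'s spectrum, one obtains $\|\hat A - \bar A\|_{\mathrm{op}} = O(\sigma\sqrt{n})$ and $\|\hat A - \bar A\|_F^2 = O(r\sigma^2 n)$. Applying Markov's inequality to the rows indexed by $V_{\max}$, all but a tiny constant fraction satisfy $\|\hat A_v - \bar A_v\|_2 = O\!\left(\sqrt{r\sigma^2 n/s_{\max}}\right)$, and a separate counting bound $r = O(n/s^*)$ on the number of large clusters makes this quantity substantially smaller than $(p-q)\sqrt{s_{\max}}$. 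Hence these ``good'' rows in $V_{\max}$ are well-separated in $\hat A$ from the rows of any other comparably large cluster.

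\textbf{Greedy recovery and edge-counting purification.} For each vertex $v$ I would form $S_v = \{u : \|\hat A_u - \hat A_v\|_2 \le \rho\}$ for $\rho \approx \tfrac12(p-q)\sqrt{s_{\max}}$ (which can be estimated by scanning thresholds), and output the largest $S_v$. A triangle-inequality argument shows that if $v$ is a good vertex of $V_{\max}$, then $S_v$ contains almost all good vertices of $V_{\max}$ and no vertex from any other cluster of size $\gtrsim s_{\max}$; vertices from much smaller clusters might be mis-included but only inflate $|S_v|$. To extract an exact cluster I would then purify $S_v$ by edge counting: for every $u$, compare $|N(u)\cap S_v|$ against $p|S_v|$ versus $q|S_v|$, whose separation $(p-q)|S_v|$ dominates the Chernoff deviation $O(\sqrt{p|S_v|\log n})$ thanks to $|S_v| \gtrsim s_{\max} \ge s^*$. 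The resulting output is a true cluster $V_i$ whose size is at least $s_{\max}/7$ once the good-row fraction and the slack in $\rho$ are accounted for.

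\textbf{Main obstacle.} The hardest step will be the row-wise (as opposed to Frobenius) perturbation guarantee: the average Frobenius bound must be promoted to a pointwise bound for a constant fraction of $V_{\max}$, and this hinges on bounding the adaptively-chosen rank $r$ by the number of sufficiently large clusters, via a careful Weyl plus Davis--Kahan analysis relating the top singular values of $A$ to those of $\bar A$. The explicit constant $1/7$ in the conclusion traces precisely to these slacks --- the Markov fraction of good rows, the margin built into $\rho$, and the discrepancy between $\tau$ and the actual spectral gap of $\bar A$.
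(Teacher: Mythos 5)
The overall plan is in the same family as the paper's---SVD truncation, per-row perturbation bounds, neighborhood-count purification---but the proposal is missing the decorrelation architecture that makes each of these steps rigorous, and that is not a cosmetic omission.

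\textbf{The independence problem.} Your plan computes a rank-$r$ truncation $\hat A$ from the \emph{same} adjacency matrix $A$ whose edges you later count when purifying $S_v$. This creates a chain of dependencies: $\hat A$ depends on all of $A$; hence $S_v$ (defined from rows of $\hat A$) depends on all of $A$; hence when you compare $|N(u)\cap S_v|$ to $p|S_v|$ vs.\ $q|S_v|$, the indicator of each edge $(u,w)$, $w\in S_v$, is \emph{not} independent of the event that $w\in S_v$. The Chernoff bound you invoke therefore does not apply as stated. The paper spends a significant part of its machinery on exactly this: it partitions $V$ into four sets $Y_1,Y_2,Z,W$, computes the SVD subspace from the $Y_1\times Z$ bi-adjacency block, projects the columns indexed by $Y_2$, forms the plural set $S\subseteq Y_2$, identifies $T_1=V_i\cap W$ by counting $Y_2\to W$ edges (which are disjoint from the edges used to build $S$), and then runs a second \textsc{IdentifyCluster} phase to recover $V_i\cap U$, with a dedicated union-bound argument (Lemma~\ref{lemma:recoverfromplural}) precisely because $T_1$ and the edges inside $U$ are not independent. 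Without some version of this data-splitting and a careful accounting of which bits are fresh at each stage, your ``purification by edge counting'' step has a gap.

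\textbf{Secondary differences worth noting.} (i) You choose the truncation rank $r$ adaptively from a singular-value threshold $\tau=\Theta(\sigma\sqrt{n\log n})$, and then need to bound $r$ from a counting argument. The paper instead \emph{fixes} $k'=(p-q)\sqrt{n}/\sqrt{p(1-q)}$ and bounds $\lambda_{k'+1}(A)$ directly via a rank-one decomposition of $A$, Cauchy--Schwarz, and Weyl (Lemma~\ref{lem:singular}); this avoids worrying about an unstable rank when many clusters sit near the threshold. (ii) Your purification threshold $\rho\approx \tfrac12(p-q)\sqrt{s_{\max}}$ depends on the unknown $s_{\max}$; the paper estimates it explicitly with \textsc{EstimatingSize} and proves the estimate is within a factor $[0.48,0.52]$, which is then baked into every downstream constant. (iii) You assert that mis-included small-cluster vertices ``only inflate $|S_v|$,'' but if some medium cluster contributes a non-negligible fraction of $S_v$ then the voting margin $(p-q)|S_v\cap V_i|$ can shrink below the Chernoff deviation; the paper's notion of a plural set ($|S\cap V_j|\le 0.1|S\cap V_i|$ for all $j\ne i$, Definition~\ref{def:pluralset} and Lemma~\ref{lemma:plural}) is precisely what protects the margin, and your argument would need an analogue. (iv) You propose to output the globally largest $S_v$; the paper samples $\tilde\Theta(\sqrt n)$ candidate centers and verifies each candidate $T_1$ with the conditions of line~\ref{alg:if-condition}, which is how it guarantees the output is exactly some $V_i$ with $|V_i|\ge s_{\max}/7$ rather than merely approximately cluster-like. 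Your greedy-max step would need a separate argument that it cannot be dominated by an unlucky vertex.

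In short: the high-level intuition matches the paper, and the Markov-on-Frobenius step is sound as a fraction-of-rows statement, but the proposal omits the random partitioning that makes the per-row SVD concentration and the edge-count purification legitimate, and it hand-waves the plural-set property, the threshold estimation, and the verification of the output. The first of these is a genuine gap rather than a detail.
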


We have the following remarks about Theorem \ref{thm:mainSBM}. (1) By the assumption that $\sigma^2=\Omega(\log n/n)$, we obtain that $p=\Omega(\frac{\log n}{n})$, which further implies that the expected degrees are at least logarithmic in $n$. This is necessary as exact recovery in SBM requires the node degrees to be at least logarithmic even in the balanced case (i.e. when all the clusters have the same size; see e.g. \cite{abbe2017community}). (2) In contrast to the work \cite{ailon2015iterative}, our algorithm breaks the small cluster barrier and improves upon the result of  \cite{ailon2015iterative} in the following sense: we do not need to assume there is a large interval such that the sizes of clusters do not fall in, {nor do our bounds get affected with increasing number of small clusters.} (3) As a byproduct of Theorem \ref{thm:mainSBM}, we give an algorithm that improves a result of \cite{vu2018simple} on partially recovering clusters in the SBM in the balanced case. We refer to Section~\ref{sec:partialtoexactrecovery} 
for details. 

In addition, the tradeoff of the parameters in our algorithm in Theorem \ref{thm:mainSBM} is nearly optimal up to polylogarithmic factors for constant $p$ and $q$ under the \emph{planted clique conjecture} (see Section~ref{sec:whyourresultisinteresting} ).

\textbf{Recovering more clusters}. 
%\add{
We can apply the above algorithm to recover even more clusters, using a ``peeling strategy'' (see  \cite{ailon2013breaking}). That is, we first recover the largest cluster (under the preconditions of Theorem~\ref{thm:mainSBM}), say $V_1$. Then we can remove $V_1$ and all the edges incident to them and obtain
the induced subgraph of $G$ on the vertices $V':=V \setminus \{V_1\}$, denoting it as $G'$. Note that $G'$ is a graph generated from SBM($n',k-1,p,q$) where $n'=n-|V_1|$. Then we can invoke the previous algorithm on $G'$ to find the largest cluster again. We can repeat the process until the we reach a point where the recovery conditions no longer hold on the residual graph. Formally, we introduce the following definition of \emph{prominent} clusters. 
\begin{definition}[Prominent clusters]\label{def:prominent}
Let $V_1, \ldots , V_k$ be the $k$ latent clusters and $s_1, \ldots ,s_k$  be the size of the clusters. WLOG we assume $s_1 \ge \cdots \ge s_k$. Let $k'\geq 0$ be the smallest integer such that one of the following is true. 
\begin{enumerate}
    \item $s_{k'+1}< \frac{2^{13}\cdot \sqrt{p(1-q)} \sqrt{\sum_{i=k'+1}^k s_i}}{(p-q)}$ 
    \item $\sigma^2< \log (\sum_{i=k'+1}^k s_i)/(\sum_{i=k'+1}^k s_i)$. 
\end{enumerate}
We call $V_1,\dots,V_{k'}$ \emph{prominent} clusters of $V$.
\end{definition}
By the above definition, Theorem \ref{thm:mainSBM}, and the aforementioned algorithm, which we call \textsc{RecursiveCluster}, we can efficiently recover all these prominent clusters. % as summarized in the following corollary. %Note that $s_k$ is the size of the smallest cluster. 
\begin{corollary}[Recovering all the prominent communities]\label{corollary:SBMlargeclusters}
Let $G$ be a graph that is generated from the SBM($n,k,p,q$) model. 
%under the conditions of Theorem~\ref{thm:mainSBM}.
Then there exists a polynomial time algorithm \textsc{RecursiveCluster} that correctly recovers all the prominent clusters of 
$G$, with probability $1-o_n(1)$. 
\end{corollary}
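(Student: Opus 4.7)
The plan is to prove Corollary~\ref{corollary:SBMlargeclusters} by analyzing \textsc{RecursiveCluster} as an iterative application of Theorem~\ref{thm:mainSBM}, where the definition of prominent clusters is crafted exactly so that the preconditions of the theorem continue to hold at every peeling step. Concretely, I would induct on the round index $j = 0, 1, \ldots, k'-1$, with the inductive hypothesis that after round $j$ the algorithm has exactly removed $V_1, \ldots, V_j$ and the residual graph $G_j$ on the vertex set $V^{\mathrm{res}}_j := V \setminus (V_1 \cup \cdots \cup V_j)$ is distributed as $\SBM(n_j, k-j, p, q)$ with $n_j = \sum_{i=j+1}^k s_i$ and latent partition $V_{j+1}, \ldots, V_k$.

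The key distributional fact that carries the induction is that the residual graph remains a genuine SBM sample. Since $\SBM(n, k, p, q)$ draws every edge independently, the edges internal to $V^{\mathrm{res}}_j$ are independent of the edges with at least one endpoint in $V_1 \cup \cdots \cup V_j$. A routine conditioning argument then shows that, on the high-probability event that rounds $1, \ldots, j$ correctly identified $V_1, \ldots, V_j$ (a test measurable with respect to edges touching those removed clusters), the conditional distribution of $G_j$ is exactly $\SBM(n_j, k-j, p, q)$ with the inherited partition.

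Given this invariant, I would check that both preconditions of Theorem~\ref{thm:mainSBM} hold on $G_j$ for each $j < k'$. By the minimality of $k'$ in Definition~\ref{def:prominent}, neither condition (1) nor condition (2) is triggered at index $j+1$, and these negated conditions translate directly into the two preconditions of Theorem~\ref{thm:mainSBM} applied to $G_j$ (with $n$ replaced by $n_j$ and $s_{\max}$ by $s_{j+1}$). Thus round $j+1$ exactly recovers the largest residual cluster $V_{j+1}$ with probability at least $1 - 1/n_j^2 \geq 1 - 1/n^2$, propagating the invariant. A union bound over the at most $k' \leq n$ rounds yields overall success probability $1 - o_n(1)$, and once $j = k'$ is reached the stopping criterion implicit in Definition~\ref{def:prominent} kicks in, so \textsc{RecursiveCluster} halts with all prominent clusters correctly and exactly recovered.

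The principal obstacle I anticipate is the conditioning argument of the second paragraph: verifying that the success event of rounds $1, \ldots, j$ does not bias the marginal distribution of the internal edges of $V^{\mathrm{res}}_j$. Although morally immediate from edge-independence, a careful treatment must express the joint probability over all random edges, isolate the independence of the two edge groups, and observe that each round's recovery-and-certification test depends only on edges touching the cluster being removed. A secondary subtlety, which can be absorbed into constants, is the mild gap between the factor $\sqrt{p(1-q)\, n} \log n$ appearing in Theorem~\ref{thm:mainSBM} and the factor $\sqrt{p(1-q)}\sqrt{\sum_{i} s_i}$ appearing in Definition~\ref{def:prominent}; for the proof to go through cleanly one either tightens the definition to include the extra logarithm or restricts to the parameter regime where it is harmless.
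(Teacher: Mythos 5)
Your overall plan — iterate Theorem~\ref{thm:mainSBM} via peeling, and let Definition~\ref{def:prominent} be exactly the condition that keeps the theorem's preconditions alive — is precisely the argument the paper sketches (the paper gives no separate proof of Corollary~\ref{corollary:SBMlargeclusters}; it is presented as an immediate consequence of the peeling strategy). So the \emph{approach} matches. However, your attempt to make the conditioning step rigorous contains a false claim that would sink the proof as written.

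You assert that the success of rounds $1,\ldots,j$ is ``a test measurable with respect to edges touching those removed clusters,'' and repeat this in your ``principal obstacle'' paragraph: ``each round's recovery-and-certification test depends only on edges touching the cluster being removed.'' This is not true. In \textsc{Cluster} (Algorithm~\ref{alg:mainalgorithm}), the partition $Y_1,Y_2,Z,W$ is a partition of the \emph{entire} current residual vertex set, the SVD projection $P_{\hat{A}_{k'}}$ is built from the full bi-adjacency matrix between $Y_1$ and $Z$ (including edges between vertices in $V_{j+2},\ldots,V_k$), the size estimate $\nsize$ uses neighbor counts in $W$ over all residual vertices, and the certification test in Line~\ref{alg:if-condition} counts neighbors $N_{T_1}(v)$ for $v$ ranging over all of $W$. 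Thus the event that round $j$ succeeds depends on edges internal to $V^{\mathrm{res}}_j$, and conditioning on that event does bias the distribution of $G_j$; the residual is no longer \emph{exactly} $\SBM(n_j,k-j,p,q)$ under that conditioning.

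The clean fix is to avoid conditioning altogether and argue by a union bound over a coupled ``ideal'' run. For $j=0,\ldots,k'-1$, let $G_j$ denote the induced subgraph of $G$ on $V\setminus(V_1\cup\cdots\cup V_j)$ (using the \emph{true} clusters, irrespective of the algorithm's behavior), and let $F_j$ be the event that \textsc{Cluster}, given $G_j$ and fresh internal randomness, fails to return a cluster of size $\geq s_{j+1}/7$. The marginal law of $G_j$ is exactly $\SBM(n_j,k-j,p,q)$ because in the SBM all edges are independent, so Theorem~\ref{thm:mainSBM} bounds $\Pr[F_j]$ directly, with no conditioning. If none of $F_0,\ldots,F_{k'-1}$ occur, the actual algorithm's trajectory coincides with the ideal one and all prominent clusters are recovered; a union bound finishes. (Alternatively, one can keep your conditioning framing but replace the exact-distribution claim with the inequality $\Pr[F_j\mid E_{<j}]\leq \Pr[F_j]/\Pr[E_{<j}]$, which costs only a bounded multiplicative factor when $\Pr[E_{<j}]$ is close to $1$.)

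Two smaller remarks. First, you are right that Definition~\ref{def:prominent} omits the $\log$ factor present in $\size$ in Theorem~\ref{thm:mainSBM}; this is not a constant discrepancy and should be flagged as a needed adjustment to the definition rather than ``absorbed into constants.'' Second, the union bound $\sum_j 1/n_j^2$ requires some control on how small $n_j$ can get and how large $k'$ can be; Definition~\ref{def:prominent} and the relation $s_{j+1}\geq 2^{13}\sqrt{p(1-q)}\sqrt{n_j}/(p-q)$ give the needed bounds, but your write-up treats this step as automatic, and it deserves an explicit line.
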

%} 

%\add{
\vspace{-0.5em}
{\bf Experimental Comparisons.}
%So far, we have provided theoretical improvement on the state-of-the-art in recovery of clusters in unbalanced SBM. In Section~\ref{sec:exp}, 
We evaluate the performance of our algorithm in the simulation settings outlined in \cite{ailon2015iterative} and confirm its effectiveness in Section~\ref{sec:exp}. Moreover, the experiments conducted in \cite{ailon2015iterative} established that their gap constraint is an observable phenomenon. We demonstrate that our algorithm can accurately recover clusters even without this gap constraint. Specifically, we succeed in identifying large clusters in scenarios where there were $\Omega(n)$ single-vertex clusters, a situation where the guarantees provided by \cite{ailon2015iterative} are inadequate. We observed that simpler spectral algorithms, such as \cite{vu2018simple}, also failed to perform well in this scenario. Finally, we present empirical evidence of the efficacy of our techniques beyond their theoretical underpinnings.
%}

\subsubsection{An algorithm for clustering with a faulty oracle}\label{sec:faultyoracle}
\vspace{-0.5em}

We apply the above algorithm to give an improved algorithm for a clustering problem in a faulty oracle model, which was proposed by \cite{mazumdar2017clustering}. 
The model is defined as follows: 
\begin{definition}
Given a set $V=[n]:=\{1,\cdots,n\}$ of $n$ items which contains $k$ latent clusters $V_1,\cdots,V_k$ such that $\cup_i V_i=V$ and for any $1\leq i<j\leq k$, $V_i\cap V_j=\emptyset$. The clusters $V_1,\dots,V_k$ are unknown. We wish to recover them by making pairwise queries to an oracle $\OO$, which answers if the queried two vertices belong to the same cluster or not. This oracle gives correct answer with probability $\frac{1}{2}+\frac{\delta}{2}$, where  $\delta\in(0,1)$ is a \emph{bias} parameter. 
It is assumed that repeating the same question to the oracle $\OO$, it always returns the same answer\footnote{This was known as \emph{persistent noise} in the literature; see e.g. \cite{goldman1990exact}.}.
\end{definition}
\vspace{-1em}
Our goal is to recover the latent clusters \emph{efficiently} (i.e., within polynomial time) with high probability by making as few queries to the oracle $\OO$ as possible. One crucial limitation of all the previous polynomial-time algorithms (\cite{mazumdar2017clustering,green2020clustering,PZ21:clustering,xiahuang2022,del2022clustering}) that make sublinear\footnote{Since there are $\Theta(n^2)$ number of possible queries, by ``sublinear'' number of queries, we mean the number of queries made by the algorithm is $o(n^2)$.} number of queries is that they \emph{cannot} recover large clusters, if there are at least $\tilde{\Omega}(n^{2/5})$ small clusters. The reason is that the query complexities of all these algorithms are at least $\Omega(k^{5})$, and if there are 
$\tilde{\Omega}(n^{2/5})$ small clusters, then $k=\tilde{\Omega}(n^{2/5})$, which further implies that these polynomial time algorithms have to make $\Omega(k^5)=\Omega(n^2)$ queries. 
Now we present our result 
for the problem of clustering with a faulty oracle. 

\begin{theorem}
\label{thm:faultyoracle}
In the faulty oracle model with parameters $n,k,\delta$, there exists a polynomial time algorithm $\textsc{NosiyClustering}(s)$,
such that for any $n \ge s \ge \frac{C \cdot \sqrt{n} \log^2 n}{\delta}$, 
it recovers all clusters of size larger than $s$ by making 
$\OO( \frac{n^4 \log^2 n}{ \delta^4 \cdot s^4} +\frac{n^2 \log^2 n}{ s\cdot \delta^2})$ queries in the faulty oracle model.
\end{theorem}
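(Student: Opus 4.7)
The plan is to reduce the faulty-oracle instance to an SBM to which Corollary~\ref{corollary:SBMlargeclusters} applies, and then to clean up via a cheap labeling phase. The two terms in the target query bound correspond exactly to these two phases: a ``discovery'' phase that spends $\binom{t}{2}$ queries on a carefully chosen subset $T\subseteq V$ of size $t$, and a ``labeling'' phase that spends $\tilde{O}(1/\delta^2)$ queries per (unseen vertex, recovered cluster) pair.

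For the discovery phase, I would sample a uniformly random $T\subseteq V$ with $|T|=t:=\Theta\bigl(n^{2}\log^{2}n/(s^{2}\delta^{2})\bigr)$; the hypothesis $s\ge C\sqrt{n}\log^{2}n/\delta$ is exactly what is needed to make $t\le n$. Query every pair inside $T$ and let $H$ be the graph on $T$ whose edges are the ``same cluster'' answers. Because the oracle noise is persistent but independent across distinct pairs, $H$ is distributed as $\SBM(t,k,p,q)$ with $p=(1+\delta)/2$, $q=(1-\delta)/2$, and hidden clusters $V_i\cap T$. A Chernoff bound gives $|V_i\cap T|\ge st/(2n)$ w.h.p.\ for every $V_i$ with $|V_i|\ge s$. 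With $p-q=\delta$, $p(1-q)\le 1$, and $\sigma^{2}=p(1-p)=\Theta(1)$, the recovery threshold of Theorem~\ref{thm:mainSBM} on $T$ becomes $\tilde{O}(\sqrt{t}/\delta)$, and the choice of $t$ forces $st/(2n)=\Omega(\sqrt{t}\log t/\delta)$, strictly exceeding this threshold. Invoking \textsc{RecursiveCluster} on $H$ therefore exactly returns $C_i:=V_i\cap T$ for every $V_i$ with $|V_i|\ge s$.

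For the labeling phase, for each recovered $C_i$ and each vertex $v\in V\setminus T$, query $v$ against $\Theta(\log n/\delta^{2})$ uniformly random members of $C_i$. These are fresh pairs (they were not touched in phase~1), so the independence of persistent noise across pairs lets a Chernoff bound guarantee that the majority vote correctly decides whether $v\in V_i$ with probability $1-n^{-\omega(1)}$. Union bounding over the at most $n/s$ recovered clusters and the $n$ vertices yields exact recovery of every $V_i$ with $|V_i|>s$. The total query count is $\binom{t}{2}+n\cdot(n/s)\cdot\tilde{O}(1/\delta^{2})=\tilde{O}\bigl(n^{4}/(s^{4}\delta^{4})+n^{2}/(s\delta^{2})\bigr)$, matching the claim.

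The main obstacle is to verify that the iterative peeling inside \textsc{RecursiveCluster} actually succeeds at every scale, since Definition~\ref{def:prominent} asks the $(k'+1)$-th cluster's size to beat a threshold that depends on the \emph{residual} mass $\sum_{i>k'}s_i$ rather than the total. This, however, is benign in our setting: each large $V_i\cap T$ has size $\Theta(|V_i|\cdot t/n)$ in $H$ while the residual mass is at most $t$, so after substituting $p-q=\delta$ and $p(1-q)=\Theta(1)$ the prominence inequality reduces to the same scalar bound $|V_i|\ge C\sqrt{n}\log n/\delta$ that the hypothesis already guarantees; the $\sigma^{2}$ condition is trivial because $\sigma^{2}=\Theta(1)$ throughout. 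A secondary sanity check is that the two phases have disjoint query sets, so the independence assumptions feeding each Chernoff bound are legitimate.
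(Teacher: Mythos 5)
Your proposal is correct and mirrors the paper's own proof: sample $T$ of size $\Theta(n^2\log^2 n/(s^2\delta^2))$, query all pairs inside $T$ to obtain an $\SBM(|T|,k,\tfrac{1+\delta}{2},\tfrac{1-\delta}{2})$ instance, run the SBM recovery routine on it (the paper interleaves one-cluster recovery with labeling inside a loop, whereas you invoke \textsc{RecursiveCluster} once and then label in a second sweep, a purely cosmetic difference), and extend each recovered $V_i\cap T$ to $V\setminus T$ via $O(\log n/\delta^2)$ majority-vote queries per vertex. The two phases account for the two terms in the stated query bound exactly as you compute.
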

We remark that our algorithm works without the knowledge of $k$, i.e., the number of clusters. Note that Theorem \ref{thm:faultyoracle} says even if there are $\Omega(n)$ small clusters, our efficient algorithm can still find all clusters of size larger than $\Omega(\frac{\sqrt{n}\log n}{\delta})$ with sublinear number of queries. We note that the size of clusters that our algorithm can recover is nearly optimal under the planted clique conjecture. 
We discuss the algorithm and its analysis in detail in Section~\ref{sec:faultyoraclemodel}.

\subsection{Our techniques}\label{sec:techniques}
Now we describe our main idea for recovering the largest cluster in a graph $G=(V,E)$ that is generated from SBM($n,k,p,q$). 

\vspace{-1em}
\paragraph{Previous SBM algorithms}
The starting point of our algorithm is a Singular Value Decomposition (SVD) based algorithm by  \cite{vu2018simple}, which in turn is built upon the seminal work of  \cite{mcsherry2001spectral}. The main idea underlying this algorithm is as follows: Given the adjacency matrix $A$ of $G$, project the columns of $A$ to the space $A_k$, which is the subspace spanned by the first $k$ left singular vectors of $A_k$. Then it is shown that for appropriately chosen parameters, the corresponding geometric representation of the vertices satisfies a \emph{separability} condition. That is, there exists a number $r>0$ such that 1) vertices in the same cluster have a distance at most $r$ from each other; 2) vertices from different clusters have a distance at least $4r$ from each other. This is proven by showing that each projected point $P_{\uu}$ is close to its center, which is point $\uu$ corresponding to a column in the expected adjacency matrix $\E[A]$. There are exactly $k$ centers corresponding to the $k$ clusters. Then one can easily find the clusters according to the distances between the projected points. 

The above SVD-based algorithm aims to find all the $k$ clusters at once. Since the distance between two projected points depends on the sizes of the clusters they belong to, the parameter $r$ is inherently related to the size $s$ of the smallest cluster. Slightly more formally, in order to achieve the above separability condition, the work  \cite{vu2018simple} requires that the minimum distance (which is roughly $\sqrt{s}(p-q)$) between any two centers is at least $\Omega(\sqrt{n/s})$, which essentially leads to the requirement that the minimum cluster size is large, say $\Omega(\sqrt{n})$, in order to recover all the $k$ clusters. 

\vspace{-1em}
\paragraph{High-level idea of our algorithm} In comparison to the work \cite{vu2018simple}, we do not attempt to find all the $k$ clusters at once. Instead, we focus on finding large clusters, one at a time. As in \cite{vu2018simple}, we first project the vertices to points using the SVD. Then instead of directly finding the ``perfect'' clusters from the projected points, we first aim to find a set $S$ that is somewhat close to a latent cluster that is large enough. Formally, we introduce the following definition of  \emph{$V_i$-plural} set.
%\begin{definition}[Plural set]

\begin{definition}[Plural set]
\label{def:pluralset}
We call a set $S \subset V$ as a $V_i$-plural set
if 
%\vspace{-1em}
%\begin{enumerate}
 %   \item 
(1) $|S \cap V_i| \ge 2^{13} \sqrt{n} \log n$; 
    %\item 
(2) For any $V_j \ne V_i$ we have
    $|S \cap V_j| \le 0.1 \cdot |S \cap V_i|$. 
%\end{enumerate}
%\vspace{-1em}
\end{definition}
That is, a plural set contains sufficiently many vertices from one cluster and much fewer vertices from any other cluster. 
%\chandra{update ends here}

Recall that $\size:=\frac{C\sqrt{p(1-q) \cdot n} \cdot \log n}{(p-q)}$ for $C=2^{13}$, and $s_{\max}\geq \size$. We will find a $V_i$-plural set for any cluster $V_i$ that is large enough, i.e., $|V_i|\geq \frac{s_{\max}}{7}$. %In particular, we will find a $V_1$-plural set, where $V_1$ is the largest cluster, as it always holds that $|V_1|\geq \frac{n}{k}> \frac{n}{1.1k}$.
{To recover large clusters, our crucial observation is that it suffices to separate vertices of one large cluster from other \emph{large} clusters, rather than trying to separate it from all the other clusters.}
This is done by setting an appropriate distance threshold $L$ to separate points from any two different and \emph{large} clusters. Then by refining Vu's analysis, we can show that for any $u\in V_i$ with $|V_i|\geq \frac{s_{\max}}{7}$, the set $S$ that consists of all vertices whose projected points belong to the ball surrounding $u$ with radius $L$ is a $V_i$-plural set, for some appropriately chosen $L$. It is highly non-trivial to find such a radius $L$. To do so, we carefully analyze the geometric properties of the projected points. In particular, we show that the distances between a point and its projection can be bounded in terms of the $k'$-th largest eigenvalue of the expected adjacency matrix of the graph (see Lemma \ref{lemma:vu-simple}), for a carefully chosen parameter $k'$. To bound this eigenvalue, we make use of the fact that $A$ is a sum of many rank $1$ matrices and Weyl's inequality (see Lemma \ref{lem:singular}). We refer to Section \ref{sec:algorithmandproof} for more details. 

%The choice of $L$ is crucial for our improvement, which allows us to find large clusters without assuming all latent clusters are large (). 

%Our key observation is that if we only need to recover the largest community, 

Now suppose that the $V_i$-plural set $S$ is independent of the edges in $V\times V$ (which is \emph{not} true and we will show how to remedy this later). Then  given $S$, we can run a statistical test to identify all the vertices in $V_i$. %At the beginning of the algorithm, we randomly partition the vertex set $V$ into two roughly equally sized subset $U$ and $W$, and then obtain a $V_i$-plural set from the subgraph induced by $U$. 
%Now,
To do so,
for any vertex $v \in V$, observe that the subgraph induced by $S \cup \{v\}$ is also sampled from a stochastic block model.
For each vertex $v\in V_i$, the expected number of its neighbors in $S$ is
\[
p\cdot |S\cap V_{i}| + q\cdot |S\setminus V_{i}| = q|S| + (p-q)\cdot |S\cap V_i|. %\geq q |S| + (p-q) \frac{n}{16k}.
\]
On the other hand, for each vertex $u \in V_j$ for some different cluster $V_j \neq V_{i}$, the expected number of its neighbors in $S$ is \begin{align*}
p\cdot |S\cap V_j| + q\cdot |S\setminus V_j| &= q|S| + (p-q)\cdot |S\cap V_j|
\leq 
q|S|+ (p-q)\cdot 0.1\cdot |S\cap V_i|,    
\end{align*}since $|S\cap V_j|\leq 0.1\cdot |S\cap V_i|$ for any $V_j \neq V_i$. 
Hence there exists a $\Theta((p-q)\cdot |S\cap V_i|)$ gap between them. Thus, as long as $|S\cap V_i|$ is sufficiently large, with high probability, we can identify if a vertex belong to $V_i$ or not by counting the number of its neighbors in $S$. 

To address the issue that the set $S$ does depend on the edge set on $V$, we use a two-phase approach: that is, we first randomly partition $V$ into two parts $U,W$ (of roughly equal size), and then find a $V_i$-plural set $S$ from $U$, then use the above statistical test to find all the vertices of $V_i$ in $W$ (i.e., $V \setminus U$), as described in \textsc{IdentifyCluster}($S,W,\nsize$) (i.e. Algorithm \ref{alg:givenapluralset}). 

Note that the output, say $T_1$, of this test is also $V_i$-plural set. Then we can find all vertices of $V_i$ in $U$ by running the statistical test again using $T_1$ and $U$, i.e., invoking
\textsc{IdentifyCluster}($T_1,U,\nsize$). Then the union of the outputs of these two tests gives us $V_i$. We note that there is correlation between $T_1$ and $U$, which makes our analysis a bit more involved. We solve it by taking a union bound over a set of carefully defined bad events; see the proof of Lemma \ref{lemma:recoverfromplural}.

\subsection{Other related work}\label{sec:relatedwork}
\vspace{-0.5em}

In \cite{cole2019nonuniform} (which improves upon \cite{cole2017simple}), the author also gave a clustering algorithm for SBM that recovers a cluster at a time, while the algorithm only works under the assumption that all latent clusters are of size $\Omega(\sqrt{n})$, thus they do not break the ``small cluster barrier''.

The model for clustering with a faulty oracle captures some applications in  \emph{entity resolution} (also known as the \emph{record linkage}) problem \cite{fellegi1969theory,mazumdar2017theoretical}, the signed edges prediction problem in a social network \cite{leskovec2010predicting,mitzenmacher2016predicting} and the correlation clustering problem \cite{bansal2004correlation}. 
A sequence of papers has studied the problem of query-efficient (and computationally efficient) algorithms for this model \cite{mazumdar2017clustering,green2020clustering,PZ21:clustering,xiahuang2022,del2022clustering}. We refer to references \cite{mazumdar2017clustering,green2020clustering,PZ21:clustering} for more discussions of the motivations for this model.

\paragraph{Layout of the paper}
Section~\ref{sec:prelim} contains initial notations and some concentration inequalities that we use throughout the paper.
Section~\ref{sec:algorithmandproof} contains our main algorithm in the unbalanced SBM model and its analysis and Section~\ref{sec:exp} contains the experiments.
In Section~\ref{sec:partialtoexactrecovery} we provide an exact recovery algorithm in the balanced case that improves on a partial recovery guarantee by~\cite{vu2018simple}. Section~\ref{sec:whyourresultisinteresting} describes the optimality of our recovery guarantee when the probability parameters in relation to the planted clique conjecture. Section~\ref{sec:faultyoraclemodel} contains our ideas, algorithm, and analysis of the faulty oracle model. We conclude the paper with interesting future directions in Section~\ref{sec:conclusion}.

\section{Preliminary Notations and Tools}
\label{sec:prelim}

\paragraph{Notations for vectors.}
Let $\hat{M}$ be the adjacency matrix of the graph $G=(V,E)$ that is sampled from $\SBM(n,k,p,q)$. 
We denote by $M$ the matrix of expectations, where $M[i,j]=p$
if the $i$-th and $j$-th vertices belong to the same underlying cluster, and $M[i,j]=q$ otherwise. 
Going forward, we shall work with several sub-matrices of $\hat{M}$ and for any submatrix $M'$, we denote by $\widehat{M'}$ and $M'$ the random matrix and the corresponding matrix of expectations.

We also use the norm operator $\| \cdot \|$ frequently in this paper. We use the operator both for vectors and matrices. 
Given a vector $\mm{x}=(x_1,\dots,x_d)$, we let $\norm{\mm{x}}:=\sqrt{\sum_{i}x_i^2}$ denote its Euclidean norm. 
When the input is a matrix $M$, $\| M \|$ denotes the spectral norm of $M$, which is its largest singular value. 

We describe the well-known Weyl's inequality. 
\begin{theorem}[Weyl's inequality]
Let $\hA=A+E$ be a matrix. Then $\lambda_{t+1}(\hA) \le \lambda_{t+1}(A) + \| E \|$ where $\| \cdot \|$ is the spectral norm operator as described above.
\end{theorem}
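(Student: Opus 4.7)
The plan is to prove Weyl's inequality via the Courant-Fischer min-max characterization of singular values (or, in the Hermitian case relevant to adjacency matrices, of eigenvalues). The key fact I will use is that for any real matrix $M$ and any subspace $S \subseteq \R^n$,
\[
\sigma_{t+1}(M) \;=\; \min_{\substack{S \subseteq \R^n \\ \dim(S) = t}} \; \max_{\substack{x \perp S \\ \|x\| = 1}} \|M x\|,
\]
and analogously for eigenvalues via the Rayleigh quotient $x^\top M x$. This variational formula is what lets the perturbation $E$ be absorbed into an additive $\|E\|$ term uniformly over directions.

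First, I would fix an arbitrary subspace $S$ with $\dim(S) = t$ and any unit vector $x \perp S$. By the triangle inequality, $\|\hA x\| = \|Ax + Ex\| \le \|Ax\| + \|Ex\|$, and since $\|Ex\| \le \|E\| \cdot \|x\| = \|E\|$ by definition of the spectral norm, we get $\|\hA x\| \le \|Ax\| + \|E\|$. Taking the max over all such $x$ yields
\[
\max_{\substack{x \perp S \\ \|x\|=1}} \|\hA x\| \;\le\; \max_{\substack{x \perp S \\ \|x\|=1}} \|Ax\| \;+\; \|E\|.
\]

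Next, I take the minimum over all $t$-dimensional subspaces $S$ on both sides. The left-hand side becomes $\sigma_{t+1}(\hA)$ by Courant-Fischer, and the right-hand side becomes $\sigma_{t+1}(A) + \|E\|$, since the additive $\|E\|$ is independent of $S$. This gives exactly $\sigma_{t+1}(\hA) \le \sigma_{t+1}(A) + \|E\|$, which is Weyl's inequality as stated (with $\lambda$ interpreted as the singular value; the Hermitian eigenvalue version follows by replacing $\|Mx\|$ with the Rayleigh quotient $x^\top M x$ and bounding $x^\top E x \le \|E\|$).

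There is no real obstacle here; the proof is essentially a one-line consequence of (a) the variational characterization of singular values, and (b) the submultiplicativity $\|Ex\| \le \|E\|\|x\|$. The only choice to make is whether to state the proof for singular values or Hermitian eigenvalues; since the paper uses Weyl's inequality on expectation-type matrices in Lemma~\ref{lem:singular} where only the spectral norm bound matters, the singular value version suffices, and the Hermitian case is an immediate corollary since then singular values coincide with absolute values of eigenvalues.
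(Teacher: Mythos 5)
The paper states Weyl's inequality without proof, citing it as a standard fact, so there is no author proof to compare against. Your variational argument via the Courant--Fischer characterization of singular values is correct and complete: the triangle inequality plus $\|Ex\|\le\|E\|\|x\|$ gives a uniform additive bound over each trial subspace, and the min-max sandwich yields $\sigma_{t+1}(\widehat{A})\le\sigma_{t+1}(A)+\|E\|$. You also correctly read the paper's $\lambda_t(\cdot)$ as the $t$-th largest \emph{singular} value (as it is glossed in Lemma~\ref{lemma:vu-simple}), so the singular-value version you prove is precisely the statement the paper relies on in Lemma~\ref{lem:singular}.
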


We will make use of the following general Chernoff Hoeffding bound.

\begin{theorem}[Chernoff Hoeffding bound~\cite{Chernoff}]
\label{thm: chernoff}
Let $X_1, \ldots , X_n$ be i.i.d random variables that can take values in $\{0,1\}$, with $\E[X_i]=p$ for $1 \le i \le n$. 
Then we have 
\begin{enumerate}
    \item $\Pr \left( \frac{1}{n}\sum_{i=1}^n X_i \ge p+ \eps \right) \le e^{-D(p+\eps||p)n} $    

    \item $\Pr \left( \frac{1}{n}\sum_{i=1}^n X_i \le p - \eps \right) \le e^{-D(p-\eps||p)n} $

\end{enumerate}

\end{theorem}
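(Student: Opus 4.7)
The statement is the classical Chernoff--Hoeffding bound in its relative-entropy (Legendre transform) form, so my proposal follows the standard exponential-moment method, with care taken to recognize the optimum as a KL divergence.

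The plan is to apply the Cram\'er--Chernoff method. Fix $t>0$ and observe that by Markov's inequality,
\[
\Pr\Bigl[\tfrac{1}{n}\sum_{i=1}^n X_i \ge p+\eps\Bigr]
= \Pr\Bigl[e^{t\sum_i X_i} \ge e^{tn(p+\eps)}\Bigr]
\le e^{-tn(p+\eps)}\cdot \E\bigl[e^{t\sum_i X_i}\bigr].
\]
By independence of the $X_i$'s and the fact that each $X_i\in\{0,1\}$ with mean $p$, the moment generating factor is
$\E[e^{tX_i}] = (1-p) + pe^t,$
so the right-hand side becomes $\bigl(e^{-t(p+\eps)}\cdot((1-p)+pe^t)\bigr)^n.$

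Next I would optimize the exponent over $t>0$. Setting the derivative of $-t(p+\eps)+\ln((1-p)+pe^t)$ with respect to $t$ to zero gives
\[
\frac{pe^t}{(1-p)+pe^t} = p+\eps,
\qquad\text{hence}\qquad
e^{t^*} = \frac{(p+\eps)(1-p)}{p(1-p-\eps)}.
\]
Substituting $t=t^*$ and simplifying, the exponent collapses to
\[
-(p+\eps)\ln\frac{p+\eps}{p} - (1-p-\eps)\ln\frac{1-p-\eps}{1-p},
\]
which is precisely $-D(p+\eps\,\|\,p)$. This gives the upper-tail inequality. For the lower tail, I would run the same argument with $t<0$ (or equivalently apply the upper-tail bound to $1-X_i$, whose mean is $1-p$, replacing $\eps$ by $\eps$ on the complementary side), and use the symmetry $D(p-\eps\,\|\,p) = D((1-p)+\eps\,\|\,1-p)$ to obtain the claimed bound $e^{-D(p-\eps\|p)n}$.

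The only step requiring genuine care is the algebraic identification of the optimized exponent with the binary KL divergence $D(p+\eps\|p) = (p+\eps)\ln\tfrac{p+\eps}{p}+(1-p-\eps)\ln\tfrac{1-p-\eps}{1-p}$; once the critical $t^*$ is found, both logarithms appear naturally after expanding $\ln((1-p)+pe^{t^*}) = \ln\tfrac{1-p}{1-p-\eps}$. There is no real combinatorial obstacle here---the argument is textbook---so the proposal is essentially to record the computation cleanly and note that the two tails follow by the same optimization in opposite directions.
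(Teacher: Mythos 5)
The paper does not prove this statement; it cites it as the classical Chernoff--Hoeffding bound (Theorem~\ref{thm: chernoff} with the reference \cite{Chernoff}) and uses it as a black box. Your derivation is the standard Cram\'er--Chernoff argument in its tight relative-entropy form, and it is correct: the optimization yields $e^{t^*} = \frac{(p+\eps)(1-p)}{p(1-p-\eps)}$, which makes $(1-p)+pe^{t^*}=\frac{1-p}{1-p-\eps}$, and the exponent collapses exactly to $-D(p+\eps\|p)$; the lower tail follows by applying the upper tail to $1-X_i$ together with the identity $D(p-\eps\|p)=D((1-p)+\eps\|(1-p))$. Since the paper offers no proof to compare against, there is nothing further to reconcile.
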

Here $D(x||y)$ is the KL divergence of $x$ and $y$. We recall the KL divergence between Bernoulli random variables $x,y$
$D(x||y)=x \ln(x/y) + (1-x)\ln((1-x)/(1-y))$. 
it is easy to see that  If $x\ge y$, then $D(x||y) \ge \frac{(x-y)^2}{2x}$,
and $D(x||y) \ge \frac{(x-y)^2}{2y}$ otherwise. 
In this direction, we first give a general concentration bound concerning neighbors of vertices in the different partitions in a graph sampled from the $SBM(n,k,p,q)$ model.

\begin{lemma}
\label{lem:neighborhood}
Let $V$ be a set of $n$ vertices sampled according to the $SBM(n,k,p,q)$ model. Let $V' \subset V$ where the vertices in $V'$ are selected independently of each other. Let $V_i$ be a latent cluster with $V_i'=V_i \cap V'$.
We denote by $N_{V'}(u)$ the number of neighbors of $u$ in $V'$. Then with probability $1-\OO(n^{-7})$ we have for every $u \in V_i'$,

%\begin{itemize}
\begin{align*}
&q|V'| + (p-q)|V' \cap V_i|   - 16 \cdot \sqrt{p} \cdot \sqrt{n}\log n \\
\leq & N_{V'}(u)\le q|V'| + (p-q)|V' \cap V_i|   + 48 \cdot \sqrt{p} \cdot \sqrt{n}\log n .
\end{align*}
\end{lemma}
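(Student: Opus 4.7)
My plan is to first condition on $V'$ and then, for each fixed $u\in V_i'$, decompose $N_{V'}(u)$ into a sum of independent Bernoulli variables and apply a Chernoff-type bound. Since $V'$ is constructed independently of the graph edges, conditioning on $V'$ leaves the $|V'|-1$ indicators $\mathbf{1}[(u,v)\in E]$ with $v\in V'\setminus\{u\}$ mutually independent: parameter $p$ when $v\in V_i$ and parameter $q$ otherwise. Hence
\[
\E[N_{V'}(u)] \;=\; p\bigl(|V'\cap V_i|-1\bigr) + q|V'\setminus V_i| \;=\; q|V'|+(p-q)|V'\cap V_i| - p,
\]
which differs from the centering used in the lemma by only $p\le 1$; this $\OO(1)$ shift will be absorbed into the leading constants $16$ and $48$.

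Next, I would split $N_{V'}(u)=X_1+X_2$, with $X_1$ a sum of $|V'\cap V_i|-1$ i.i.d.\ $\mathrm{Bern}(p)$ variables (same-cluster neighbors) and $X_2$ a sum of $|V'\setminus V_i|$ i.i.d.\ $\mathrm{Bern}(q)$ variables (cross-cluster neighbors). Each piece matches exactly the i.i.d.\ hypothesis of the Chernoff-Hoeffding theorem stated in the excerpt, so I would set the per-piece deviation to roughly $8\sqrt{pn}\log n$ and lower-bound the KL term via the standard estimate $D(p\pm\eps\|p)\ge \eps^2/(2(p+\eps))$ (and the $q$-analogue). When $\eps\lesssim p$ this yields an exponent of order $\Omega(\log^2 n)$, so each tail has failure probability at most $n^{-8}$; in the complementary regime $|V'\cap V_i|\le 8\sqrt{n/p}\log n$ (i.e.\ the piece is too small for Chernoff to bite) the partial sum is trivially bounded by its support size, which is itself already below $8\sqrt{pn}\log n$, so the required inequality holds deterministically. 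The same dichotomy with $p$ replaced by $q\le p$ handles $X_2$ (using the standard assortative assumption $p\ge q$).

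Combining both pieces gives $|N_{V'}(u)-\E[N_{V'}(u)]|\le 16\sqrt{pn}\log n$ with failure probability at most $4n^{-8}$ for the fixed $u$, and a union bound over the at most $n$ vertices of $V_i'$ yields the claimed overall $1-\OO(n^{-7})$. Recentering from $\E[N_{V'}(u)]$ to $q|V'|+(p-q)|V'\cap V_i|$ costs the $-p$ shift above, which is comfortably covered by the asymmetric pair $(-16,+48)$; the extra room on the upper side also accommodates the looser upper tail of multiplicative Chernoff. The main obstacle I foresee is bookkeeping rather than mathematics: the theorem as stated in the excerpt is i.i.d., so one must split the sum into two homogeneous pieces (or quote a non-i.i.d.\ Bernstein variant), and one must separately handle the small-support regimes in which the Chernoff exponent fails to reach $8\log n$ — both resolved by the trivial support bound noted above.
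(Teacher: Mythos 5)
Your decomposition of $N_{V'}(u)$ into a same-cluster sum $X_1$ of $\mathrm{Bern}(p)$ variables and a cross-cluster sum $X_2$ of $\mathrm{Bern}(q)$ variables, controlling each piece by the KL form of Chernoff with a per-piece deviation of about $8\sqrt{pn}\log n$ and then union bounding over $u$, is exactly the paper's proof strategy; your explicit tracking of the $-p$ centering shift from excluding $u$ itself is a small cleanliness gain the paper elides.

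The genuine gap is in how you close the ``complementary'' regime. You define it as $|V'\cap V_i|\le 8\sqrt{n/p}\log n$ (equivalently, the relative deviation $\varepsilon$ exceeds $p$), assert that Chernoff no longer bites there, and propose to finish with the trivial bound $X_1\le|V'\cap V_i|$, on the grounds that the support size ``is itself already below $8\sqrt{pn}\log n$.'' That inequality is false for $p<1$: in fact $8\sqrt{n/p}\log n\ge 8\sqrt{pn}\log n$, with a large gap when $p$ is small. Concretely, for $p<1/4$ the window $\frac{8\sqrt{pn}\log n}{1-p}<|V'\cap V_i|\le 8\sqrt{n/p}\log n$ is nonempty, and in it the support bound only gives $X_1-p|V'\cap V_i|\le(1-p)|V'\cap V_i|$, which can exceed the target $8\sqrt{pn}\log n$, so the upper tail of $X_1$ is not controlled. (Your observation does handle the lower tail in this regime, since there $p|V'\cap V_i|-8\sqrt{pn}\log n\le 0\le X_1$ trivially.)

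Moreover, Chernoff does in fact still bite in this regime, and that is how the paper closes the case: when $\varepsilon>p$, one has $D(p+\varepsilon\,\|\,p)\ge\frac{\varepsilon^2}{2(p+\varepsilon)}\ge\frac{\varepsilon}{4}$, so the exponent is at least $\frac14\varepsilon\,|V'\cap V_i|=2\sqrt{pn}\log n$, and the standing assumption $\sigma^2=\Omega(\log n/n)$ (hence $p=\Omega(\log n/n)$) yields $\sqrt{pn}\log n=\Omega(\log^{3/2}n)\gg\log n$. In other words, you should keep Chernoff in both regimes and simply switch from the quadratic lower bound on the KL term to the linearized one when $\varepsilon>p$; the support bound is neither sufficient nor needed.
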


\begin{proof}%[Proof of Lemma \ref{lem:neighborhood}]
We look at two different sums of random variables.
The first is $N_{V_i'}(u)$ which is the sum of $|V_i \cap V'|$ many random $0-1$ variables with probability of $1$ being $p$. The second is  $N_{V' \setminus V_i'}(u)$, 
which is the sum of $|V' \setminus V_i'|$ variables 
with probability of $1$ being $q$. 

Then we have $\E[N_{V_i'}(u)]=p|V_i \cap V'|$ and
$\E[N_{V' \setminus V_i'}(u)]=q|V' \setminus V_i'|$. Finally the Chernoff bound implies,
\begin{enumerate}
    \item $\Pr \left( \frac{N_{V_i'}(u)}{|V_i'|} < p - \alpha  \right) \le e^{-D(p-\alpha||p) |V_i'|}$. We fix $\alpha= \frac{8\sqrt{p} \sqrt{n} \log n}{|V_i'|}$ and then the term $D(p-\alpha||p) |V_i'|$ evaluates to 
    \begin{align*}
        D(p-\alpha||p) |V_i'| \ge \frac{\alpha^2 |V_i'|}{2p} \ge \frac{8  \cdot p  \cdot n \cdot 2\log n \cdot |V_i'|}{|V_i'|^2 \cdot 2p} \ge \frac{8 \cdot \log n \cdot n}{|V_i'|} \ge 8 \log n. 
    \end{align*}
    
    This gives us 
    \begin{equation}
    \label{eq:1}
    \Pr \left( N_{V_i'}(u) <p|V_i'|-8\sqrt{p} \sqrt{n} \log n \right) \le n^{-8}        
    \end{equation}

    \item $\Pr \left( \frac{N_{V' \setminus V_i'}(u)}{|V' \setminus V_i'|} < q-\beta \right) \le e^{-D(q-\beta||q) |V' \setminus V_i'|}$. We fix
    $\beta= \frac{8\sqrt{p} \sqrt{n} \log n}{|V' \setminus V_i'|}$
    and the term $D(q-\beta||q) |V' \setminus V_i'|$ evaluates to 
    \begin{align*}
        D(q-\beta||q)|V' \setminus V_i'| \ge \frac{\beta^2}{2q} \ge \frac{8 \cdot p \cdot n \cdot 2\log n }{|V' \setminus V_i'|2q} \ge \frac{p \cdot 8 \log n}{q} \cdot \frac{n}{|V' \setminus V_i'|} 
        \ge 8 \log n. 
    \end{align*}
    This gives us 
    \begin{equation}
    \label{eq:2}
    \Pr \left(  N_{V' \setminus V_i'}(u) < q|V' \setminus V_i'| - 8\sqrt{p} \sqrt{n} \log n \right) \le n^{-8}
    \end{equation}

Combining Equation~\eqref{eq:1} and \eqref{eq:2} gives us
\begin{align*}
&
\Pr \left( N_{V_i'}(u)+ N_{V' \setminus V_i'}(u)  <p|V_i'|-8\sqrt{p} \sqrt{n} \log n 
+  q|V' \setminus V_i'| - 8\sqrt{p} \sqrt{n} \log n
\right) \le 2n^{-8}
\\
& \implies
\Pr \left(  N_{V'}(u) < q|V'| +(p-q)|V_i'| -16 \sqrt{p} \sqrt{n} \log n \right)
\le 2n^{-8}.
\end{align*}

Now we study the event $N_{V'}(u) \ge q|V'| + (p-q)|V' \cap V_i|   + 48 \cdot \sqrt{p} \cdot \sqrt{n}\log n $ again by breaking into two terms. 

The probability bounds for two terms $N_{V_i'}(u)$ and $N_{V' \setminus V_i'}$ are
$e^{-D(p+3\alpha||p)|V_i'|}$ and $e^{-D(q+3\beta||q)|V' \setminus V_i'|}$ respectively. Here note that we use $3 \alpha$ instead of $\alpha$, to make calculations easier.

For the first case we have $D(p+3\alpha||p)|V_i'| \ge \frac{9\alpha^2|V_i'|}{2(p+3\alpha)}$. 
If $p \ge \alpha$ then $D(p+\alpha||p)|V_i'| \ge {9\alpha^2 |V_i'|}{8p}$ which implies we get the same bound as above. If $p < \alpha$ then 
$D(p+\alpha||p)|V_i'| \ge \frac{9\alpha^2 |V_i'|}{8 \alpha} \ge  \alpha |V_i'|$. 
Now we have $\alpha= \frac{8 \sqrt{p} \sqrt{n} \log n}{|V_i'|}$. Since $p=\Omega(\log n/n)$ we have $\alpha |V_i'| \ge 8 \log n$. Combining we get that $e^{-D(p+2\alpha||p)|V_i'|} \le n^{-8}$.

Next we analyze $D(q+3\beta||q)|V' \setminus V_i'| \ge \frac{9\beta^2|V' \setminus V_i'|}{2(q+3\beta)}$. 
As before, if $q \ge \beta$ we have 
$D(q+3\beta||q)|V' \setminus V_i'| \ge  \frac{9 \beta^2 |V_i|}{8q}$ and the result follows as before. 
Otherwise $D(q+3\beta||q)|V' \setminus V_i'| \ge  \frac{9 \beta^2 |V' \setminus V_i'|}{8\beta} \ge  \beta |V' \setminus V_i'| \ge 8 \sqrt{p} \sqrt{n} \log n \ge 8 \log n$, which completes the proof.

\end{enumerate}

\end{proof}

We also note down a random projection Lemma that we use in our proof.
\begin{lemma}[Expected random projection~\cite{vu2018simple}]
\label{lem:randomproj}
Let $\hProj{k'}$ be a $k'$-dimensional projection matrix, 
and $e_u$ be an $n$ dimensional random vector where each entry is $0$ mean and has a variance of at most $\sigma^2$.
Then we have $\E[\| \hProj{k'}(e_u)\|^2] \le \sigma^2 \cdot k'$.
\end{lemma}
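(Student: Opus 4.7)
The plan is to use the standard identity $\|Pe_u\|^2 = e_u^\top P e_u$ valid for any orthogonal projection $P$ (since $P^\top P = P^2 = P$), combined with the trace trick and an inspection of the diagonal of $P$. Concretely, I would first write the rank-$k'$ projection in the form $\hProj{k'} = UU^\top$ where $U \in \R^{n\times k'}$ has orthonormal columns $u_1,\dots,u_{k'}$, so that $\mathrm{tr}(\hProj{k'}) = \sum_{j=1}^{k'}\|u_j\|^2 = k'$. Then I would expand
\[
\|\hProj{k'}(e_u)\|^2 \;=\; e_u^\top \hProj{k'} e_u \;=\; \mathrm{tr}\!\bigl(\hProj{k'}\, e_u e_u^\top\bigr).
\]

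Next, take expectations and pull the deterministic matrix $\hProj{k'}$ out of the trace:
\[
\E\bigl[\|\hProj{k'}(e_u)\|^2\bigr] \;=\; \mathrm{tr}\!\bigl(\hProj{k'}\, \E[e_u e_u^\top]\bigr).
\]
In the setting where this lemma is invoked, $e_u$ is a column of $\widehat{M}-M$, whose coordinates are independent Bernoulli-type fluctuations of $0$ mean, so $\E[e_u e_u^\top]$ is a diagonal matrix $\Sigma$ whose entries are $\Sigma_{ii} = \mathrm{Var}(e_u[i]) \le \sigma^2$. The remaining step is a trace computation: since $\hProj{k'} = UU^\top$ has non-negative diagonal $(\hProj{k'})_{ii} = \sum_{j}U_{ij}^2 \ge 0$, we get
\[
\mathrm{tr}(\hProj{k'} \Sigma) \;=\; \sum_{i=1}^{n}(\hProj{k'})_{ii}\,\Sigma_{ii} \;\le\; \sigma^2 \sum_{i=1}^{n}(\hProj{k'})_{ii} \;=\; \sigma^2\cdot \mathrm{tr}(\hProj{k'}) \;=\; \sigma^2\cdot k',
\]
which is exactly the desired bound.

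There is essentially no obstacle here: the only mildly delicate point is that the diagonal reduction of $\E[e_u e_u^\top]$ requires the coordinates of $e_u$ to be uncorrelated. This is immediate in our application (edges of the SBM are drawn independently), but I would state explicitly that the hypothesis of the lemma is being used in the stronger form that the coordinates of $e_u$ are mutually independent (or at least pairwise uncorrelated), so that only the diagonal of $\E[e_u e_u^\top]$ contributes. With that remark in place, the argument above constitutes a complete proof.
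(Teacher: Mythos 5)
Your proof is correct, and it is the canonical argument for this fact. The paper itself does not supply a proof here — it states the lemma with a citation to Vu — so there is no internal proof to compare against, but your trace-trick derivation is exactly the standard one (Vu's own proof is an equivalent expansion: writing $\hProj{k'} = \sum_{j=1}^{k'} v_j v_j^\top$ with orthonormal $v_j$, one has $\|\hProj{k'}e_u\|^2 = \sum_j \langle v_j, e_u\rangle^2$ and $\E\langle v_j, e_u\rangle^2 = \sum_i v_{ji}^2\,\mathrm{Var}(e_u[i]) \le \sigma^2$, which is the same bookkeeping). You also correctly flag the one gap in the lemma as literally stated: the hypothesis ``each entry is $0$ mean with variance at most $\sigma^2$'' is not enough on its own, since for a correlated vector $\mathrm{tr}(\hProj{k'}\Sigma)$ can exceed $\sigma^2 k'$ (e.g.\ $n=2$, $k'=1$, $\Sigma$ the all-ones matrix, $P$ projecting onto $(1,1)$ gives $\mathrm{tr}(P\Sigma)=2\sigma^2$). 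The intended and used hypothesis is that the coordinates of $e_u$ are independent (or at least uncorrelated), which holds in the SBM application because $e_u$ is a column of $\widehat{B}-B$ with independent Bernoulli fluctuations. One small remark worth adding for completeness in the application, though not part of the lemma itself: the projection $\hProj{k'}$ built from $\widehat{A}$ is random, but it is independent of $e_u$ (which lives on the disjoint edge set between $Y_2$ and $Z$), so one conditions on $\widehat{A}$ and then applies the lemma with $\hProj{k'}$ treated as fixed.
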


%%% Main Section

\section{The algorithm in the SBM}
\label{sec:algorithmandproof}
\vspace{-0.5em}
We start by giving a high-level view of our algorithm (i.e., Algorithm \ref{alg:mainalgorithm}). Let $G=(V,E)$ be a graph generated from $\SBM(n,k,p,q)$. For a vertex $v$ and a set $T\subset V$, we let $N_{T}(v)$ denote the number of neighbors of $v$ in $T$. 

We first preprocess (in Line \ref{alg:1-preprocess}) the graph $G$ by invoking Algorithm~\ref{alg:preprocessing} \textsc{Preprocessing}, which randomly partitions $V$ into four subsets $Y_1,Y_2,Z,W$ such that each vertex is added to $Y_1, Y_2,Z,W$ with probability $1/8,1/8,1/4,1/2$, respectively. Let $Y=Y_1\cup Y_2, U=Y\cup Z$. See Figure~\ref{fig:partition} for a visual presentation of the partition. Let $\hA$ (resp. $\hB$) be the bi-adjacency matrix between $Y_1$ (resp. $Y_2$) and $Z$. This part is to reduce the correlation between some random variables in the analysis, similar to in \cite{mcsherry2001spectral} and \cite{vu2018simple}. Then we invoke (in Line \ref{alg:1-estimate}) Algorithm~\ref{alg:estimatingsize} \textsc{EstmatingSize} to 
%This step is described in Algorithm~\ref{alg:estimatingsize}, which is another subroutine of our algorithm. Here we use a simple majority voting to 
estimate the size of the largest cluster. It first samples $\sqrt{n} \log n$ vertices from $Y_2$ and then counts their number of neighbors in $W$. These counters allow us to obtain a good approximation $\nsize$ of $s_{\max}$.

We then repeat the following process to find a large cluster (or stop when the number of iterations is large enough). In Line~\ref{alg:sampleavertex}--\ref{alg:setS}, we sample a vertex $u\in Y_2$ and consider the column vector $\hm{u}$ corresponding to $u$ in the bi-adjacency matrix $\hA$ between $Y_2$ and $Z$. Then we consider the projection $P_{\widehat{A}_{k'}}\hm{u}$ of $\hm{u}$ onto the subspace of the first $k'$ singular vectors of $\hA$ for some appropriately chosen $k'$, and the set $S$ of all vertices $v$ in $Y_2$ whose projections are within distance $L/20$ from $\mm{p_u}$, for some parameter $L$. In Lines~\ref{alg:phase1}--\ref{alg:ourfinish}, we give a process that completely recovers a large cluster when $S$ is a plural set. More precisely, we first test if $|S|\geq \bar{s}/21$ and if so, we invoke Algorithm \ref{alg:givenapluralset} to obtain $T_1=\textsc{IdentifyCluster}(S,W,\nsize)$, which simply defines $T_1$ to be the set of all vertices $v \in W$ with $N_S(v) \ge q|S| +(p-q)\frac{\nsize}{56}$. Then we check (Line~\ref{alg:if-condition}) if the set $T_1$ satisfies a few conditions to test if $u$ is indeed a good center (so that $S$ is a plural set) and test if $T_1=V_1\cap W$. If so, we then invoke $\textsc{IdentifyCluster}(T_1,U,\nsize)$ to find $V_1 \cap U$. Note that we use a two-step process to find $V_1$, as  $N_S(u)$ is not a sum of independent events for $u \in U$.

\begin{algorithm}[ht]
\caption{\textsc{Cluster}($G=(V,E),p,q$): Recovering one large cluster 
%of size $s_{\max}/5 \ge \frac{512^4 \sqrt{p(1-q)} \cdot \sqrt{n} \cdot \log n}{5(p-q)}$
} \label{alg:mainalgorithm}
\begin{algorithmic}[1]
%\Procedure{\textsc{Cluster}} {$G=(V,E),p,q,k$}

\State\label{alg:1-preprocess}  
$\hA,\hB,Y_2,Y_1,Z,W \gets \textsc{Preprocessing}(G,p,q)$

\State\label{alg:1-estimate} $\nsize\gets \textsc{EstimatingSize}(G,p,q,W,Y_2)$
%\State\label{alg:endestimate} $\eps \gets 0.002$ and $L_{\eps}\gets   \sqrt{2\eps}(p-q)\sqrt{\nsize}$
%\State\label{alg:choosekpri} Fix $k' \gets (p-q)\sqrt{n}/\sqrt{p(1-q)}$
\For{$i=1,\cdots,h= \sqrt{n} \log n$} \label{alg:oursstart}
 \State\label{alg:sampleavertex} sample a vertex $u$ from $Y_2$ %\;
 \State $\bm{u}\gets$the column vector consisting
of the edges between $u$ and $Z$
 %\State\label{alg:vulast} Project the columns of $\widehat{B}$ on $\hProj{k'}$ to obtain point set $Q:=\{ \hProj{k'}(\bu):u\in Y_2\}$
 \State $\mm{p_u}\gets\hProj{k'}{\hm{u}}$, the projection of $\hm{u}$ onto the subspace of the first $k'$ singular vectors of $\hA$, where $k'=(p-q)\sqrt{n}/\sqrt{p(1-q)}$
 \State\label{alg:setS} $S\gets$ $\{v \in Y_2$:  $\norm{\mm{p_u}-\mm{p_v}}\leq \frac{L}{20}\}$, where $\mm {p_v} \gets \hProj{k'}{\hm{v}}$ and $L=\sqrt{0.004}(p-q)\sqrt{\nsize}$
 %\;
\If{$|S|\geq \frac{\nsize}{21}$}
\State\label{alg:phase1} Invoke \textsc{IdentifyCluster}($S,W,\nsize$) to get set $T_1$
\If{$|T_1|\leq \frac{\nsize}{6}$ %} \label{alg:ifstart}
%\State continue
%\ElsIf{
or $\exists v \in T_1$ s.t. $N_{T_1}(v) \leq (0.9p + 0.1q)\cdot |T_1|$
%}
%\State continue
%\ElsIf{
or $\exists v \in W \setminus T_1$ s.t. $N_{T_1}(v) \ge (0.9p + 0.1q)\cdot |T_1|$
}
\label{alg:if-condition}%{
\State continue 
%}{
\Else \label{alg:ifend}
\State \label{alg:getT_2} Invoke \textsc{IdentifyCluster}($T_1,U,|T_1|$) to obtain a set $T_2$
\State\label{alg:phase2} Merge the two sets to form $T=T_1 \cup T_2$
\State Return $T$. 
%}
%\EndIf
%}
%\EndIf
\EndIf
\EndIf
\EndFor \label{alg:ourfinish}
\State Return $ \emptyset $
\end{algorithmic}
\end{algorithm}

\begin{comment}
\begin{figure}
\begin{multicols}{2}
\begin{algorithm}[H]
\caption{\textsc{Preprocessing}($G,p,q$): Partition and projection
%of size $s_{\max}/5 \ge \frac{512^4 \sqrt{p(1-q)} \cdot \sqrt{n} \cdot \log n}{5(p-q)}$
}\label{alg:preprocessing}
\begin{algorithmic}[1]
%\Procedure{\textsc{Cluster}} {$G=(V,E),p,q,k$}
\State\label{alg:vustart} Randomly partitions $V$ into four subsets $Y_1,Y_2,Z,W$ such that each vertex is added to $Y_1, Y_2,Z,W$ with probability $1/8,1/8,1/4,1/2$, respectively. 
\State Let $Y=Y_1\cup Y_2, U=Y\cup Z$.
%Randomly partition $V$ into two subsets $U$ and $W$ with every vertex being put into  either set i.i.d with probability $1/2$.%\;
%\State Randomly partition $U$ into two subsets $Y$ and $Z$ as in the previous step.%\;
%\State $Y_1\gets$ a random subset of $Y$ obtained by selecting each element with probability $1/2$ independently; % \;
%\State $Y_2\gets Y\setminus Y_1$% \;
\State Let $\hA$ (resp. $\hB$) be the bi-adjacency matrix between $Y_1$ (resp. $Y_2$) and $Z$.
%\State $\widehat{A}\gets$ The bi-adjacency matrix of the bipartite graph between $Y_1$ and $Z$.
%\State $\widehat{B} \gets$ The bi-adjacency matrix of the bipartite graph between $Y_2$ and $Z$.
%\State $\widehat{B} \gets$ the adjacency matrix of the bipartite graph between $Y$ and $Z$ %\;
%\State $\widehat{A}\gets$ the submatrix of $\widehat{B}$ formed by the columns indexed by $Y_1$ %\;
% \State For any $u \in Y_2$, $\bu$ denotes the column vector consisting
% of the edges between $u$ and $Z$.
%\State\label{alg:vulast} Project the columns of $\widehat{B}$ on $\hProj{k'}$ to obtain point set $Q:=\{ \hProj{k'}(\bu):u\in Y_2\}$  %\; 
\State Return $\hA,\hB,Y_2,Y_1,Z,W$
\end{algorithmic}
\end{algorithm} 

\columnbreak
\centering
\includegraphics[scale=0.175]{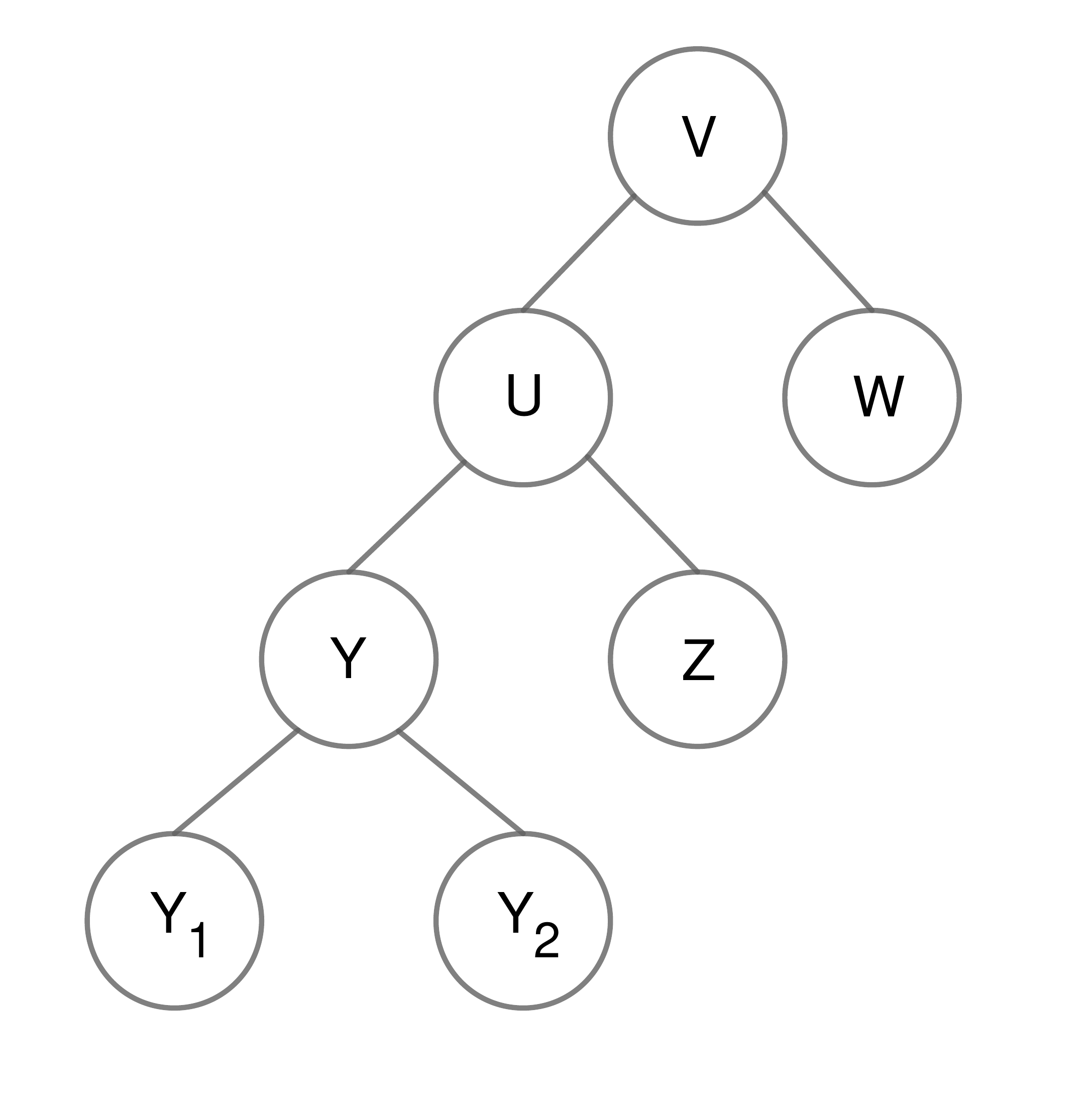}
\caption{Partition of the vertices}
\label{fig:partition}
\end{multicols}
\end{figure}
\end{comment}

\begin{figure}[ht]
\begin{multicols}{2}
\centering
\includegraphics[scale=0.2]{unbalanced-SBM-partition.pdf}
\caption{Partition of the vertices}
\label{fig:partition}
\begin{algorithm}[H]
\caption{\textsc{Preprocessing}($G,p,q$): Partition and projection
%of size $s_{\max}/5 \ge \frac{512^4 \sqrt{p(1-q)} \cdot \sqrt{n} \cdot \log n}{5(p-q)}$
}\label{alg:preprocessing}
\begin{algorithmic}[1]
%\Procedure{\textsc{Cluster}} {$G=(V,E),p,q,k$}
\State\label{alg:vustart} Randomly partitions $V$ into four subsets $Y_1,Y_2,Z,W$ such that each vertex is added to $Y_1, Y_2,Z,W$ with probability $1/8,1/8,1/4,1/2$, respectively. 
\State Let $Y=Y_1\cup Y_2, U=Y\cup Z$.
%Randomly partition $V$ into two subsets $U$ and $W$ with every vertex being put into  either set i.i.d with probability $1/2$.%\;
%\State Randomly partition $U$ into two subsets $Y$ and $Z$ as in the previous step.%\;
%\State $Y_1\gets$ a random subset of $Y$ obtained by selecting each element with probability $1/2$ independently; % \;
%\State $Y_2\gets Y\setminus Y_1$% \;
\State Let $\hA$ (resp. $\hB$) be the bi-adjacency matrix between $Y_1$ (resp. $Y_2$) and $Z$.
%\State $\widehat{A}\gets$ The bi-adjacency matrix of the bipartite graph between $Y_1$ and $Z$.
%\State $\widehat{B} \gets$ The bi-adjacency matrix of the bipartite graph between $Y_2$ and $Z$.
%\State $\widehat{B} \gets$ the adjacency matrix of the bipartite graph between $Y$ and $Z$ %\;
%\State $\widehat{A}\gets$ the submatrix of $\widehat{B}$ formed by the columns indexed by $Y_1$ %\;
% \State For any $u \in Y_2$, $\bu$ denotes the column vector consisting
% of the edges between $u$ and $Z$.
%\State\label{alg:vulast} Project the columns of $\widehat{B}$ on $\hProj{k'}$ to obtain point set $Q:=\{ \hProj{k'}(\bu):u\in Y_2\}$  %\; 
\State Return $\hA,\hB,Y_2,Y_1,Z,W$
\end{algorithmic}
\end{algorithm}

\columnbreak
\begin{algorithm}[H]
\caption{\textsc{EstimatingSize}($G=(V,E),p,q,W,Y_2$): Estimating the size of the largest cluster
%of size $s_{\max}/5 \ge \frac{512^4 \sqrt{p(1-q)} \cdot \sqrt{n} \cdot \log n}{5(p-q)}$
} \label{alg:estimatingsize}
\begin{algorithmic}[1]
%\Procedure{\textsc{Cluster}} {$G=(V,E),p,q,k$}
%
%\State \label{alg:startestimate}
\State $\size \gets\frac{ 2^{13} \cdot \sqrt{p(1-q)} \cdot \sqrt{n} \cdot \log n}{(p-q)}$
\For{$i=1,\cdots,h= \sqrt{n} \log n$}

\State sample $u_i$ from $Y_2$ uniformly at random. 
\State $N_{W}(u_i) \gets \# \text{ of neighbors of $u_i$ in $W$}$.
\EndFor
\State $u \gets \arg\max N_W(u_i)$ 
\State \label{alg:findmaxsize}
$\nsize\gets \frac{N_W(u)-q|W|}{(p-q)}$ 
\If{$\nsize \le \size/3$} \label{alg:big-enough}
\State Exit(0)
\Else
\State Return $\nsize$
\EndIf
\end{algorithmic}
\end{algorithm}
\begin{algorithm}[H]
% \begin{algorithm2e}
% \LinesNumbered
% \DontPrintSemicolon
\caption{\textsc{IdentifyCluster}($S,R,\nsize$): Finding a subcluster $R\cap V_i$ using a $V_i$-plural set $S$ 
}\label{alg:givenapluralset}
\begin{algorithmic}[1]
%\KwData{The sets $S$ and $R$}
%\KwResult{The set $V_i \cap R$}
% \SetKwFunction{FMain}{IdentifyCluster}
% \SetKwProg{Fn}{Function}{:}{}
% \Fn{\FMain{$S,R$}}{
\State 
$T\gets \emptyset$ %\;
\For{each $v\in R$}%{
\If{%the number of neighbors of $v$ in $S$ is at least 
$N_{v,S}\geq q |S| + (p-q) \frac{\nsize}{56}$}%{
\State add $v$ to $T$
%}
%}
\EndIf
\EndFor
\State Return $T$
\end{algorithmic}
\end{algorithm}

\end{multicols}
\end{figure}
\subsection{The analysis of Algorithm \ref{alg:mainalgorithm}}
\vspace{-0.5em}
We first show that \textsc{EstimatingSize} outputs an estimator $\nsize$ approximating the size of the largest cluster within a factor of $2$ with high probability.

\begin{lemma}
\label{cor:choose-maxsize}
Let $\nsize$ be as defined in Line~\ref{alg:findmaxsize} of Algorithm~\ref{alg:estimatingsize}.
Then with probability $1-n^{-8}$ we have
$
0.48 \cdot s_{\max} \le \nsize \le 0.52 \cdot s_{\max}.
$
\end{lemma}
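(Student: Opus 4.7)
The plan is to show that the argmax count $N_W(u)$ concentrates near $q|W| + (p-q)\cdot s_{\max}/2$, so that $\nsize$ concentrates near $s_{\max}/2$. I would establish this via three sub-claims, each holding with probability at least $1 - n^{-9}$, so that a union bound yields the target $1-n^{-8}$.

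First, with high probability at least one of the $h = \sqrt{n}\log n$ samples from $Y_2$ lies in $V_{\max}$. Each vertex lands in $Y_2$ independently with probability $1/8$, so a standard Chernoff bound gives $|V_{\max}\cap Y_2| \ge s_{\max}/16$ and $|Y_2|\le n/4$; a uniform sample then hits $V_{\max}$ with probability at least $s_{\max}/(4n)$. Since the operating regime of Theorem~\ref{thm:mainSBM} forces $s_{\max}\ge \size$, which is at least $2^{13}\sqrt{n}\log n$ (using $\sqrt{p(1-q)}\ge p-q$, which follows from $p(1-p)+q(p-q)\ge 0$), the probability of missing $V_{\max}$ in all $h$ samples is at most $(1 - s_{\max}/(4n))^h \le e^{-s_{\max}\log n/(4\sqrt{n})}$, which is super-polynomially small.

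Second, Lemma~\ref{lem:neighborhood} applied with $V' = W$ (legitimate since membership in $W$ is independent of membership in $Y_2$) and a union bound over the $h$ samples yield \[\bigl|N_W(u_i) - q|W| - (p-q)\,|V_{j(i)} \cap W|\bigr| \;\le\; 48\sqrt{p}\,\sqrt{n}\,\log n\] for every sampled $u_i\in V_{j(i)}$, with total failure probability $O(n^{-6})$. Third, since each vertex is placed in $W$ independently with probability $1/2$, a Chernoff bound yields $\bigl||V_j\cap W| - s_j/2\bigr| = O(\sqrt{s_j\log n})$ simultaneously for every cluster $V_j$.

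Combining the three, let $u^{\star}$ be a sample inside $V_{\max}$. The lower bound on $\nsize$ follows from $N_W(u)\ge N_W(u^\star)$ together with the second and third sub-claims applied to $V_{\max}$, giving $\nsize \ge s_{\max}/2 - O(\sqrt{s_{\max}\log n}) - 48\sqrt{p}\,\sqrt{n}\log n/(p-q)$. The upper bound uses that the argmax sample $u$ lies in some $V_{j(u)}$ with $s_{j(u)}\le s_{\max}$, yielding $\nsize \le s_{\max}/2 + O(\sqrt{s_{\max}\log n}) + 48\sqrt{p}\,\sqrt{n}\log n/(p-q)$. The main remaining obstacle is purely arithmetic: verifying that each error term is at most $0.02\cdot s_{\max}$. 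The additive $O(\sqrt{p}\sqrt{n}\log n/(p-q))$ term is absorbed by the large constant $2^{13}$ hiding in $\size$, and the $O(\sqrt{s_{\max}\log n})$ term is $o(s_{\max})$ since the precondition forces $s_{\max}\gg \log n$, completing the two-sided bound.
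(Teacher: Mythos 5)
Your proposal follows essentially the same approach as the paper's proof: use a Chernoff bound to show that one of the $h$ samples lands in $V_{\max}$, apply Lemma~\ref{lem:neighborhood} to concentrate $N_W(u_i)$ around $q|W|+(p-q)|V_{j(i)}\cap W|$ for each sample, concentrate $|V_j\cap W|$ around $s_j/2$, and then conclude that the argmax gives an estimate within a small multiplicative factor of $s_{\max}/2$ because the error terms are absorbed by the $2^{13}$ constant in $\size$. Your write-up is slightly more explicit than the paper's (you spell out the argmax argument for both directions, whereas the paper states that ``the same bounds apply'' for other clusters), but the decomposition, the lemmas invoked, and the arithmetic are the same.
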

\begin{proof}
We know that $s_{\max} \ge \frac{2^{13} \cdot \sqrt{p(1-q)} \sqrt{n} \log n}{p-q}$ from the problem definition.
Let the corresponding cluster be $V_i$. Then a simple application of Hoeffding bounds gives us that with probability $1-n^{-8}$,
$0.51 \cdot s_{\max} \ge |V_i \cap W| \ge 0.49 \cdot s_{\max}$.

Furthermore, we are interested in the regime where $p \le 3/4$ so $\sqrt{1-q} \ge 1/2$. 

Then if we sample $u \in |V_i \cap Y_2|$, 
Lemma~\ref{lem:neighborhood} states that with probability $1-n^{-8}$, 
\begin{align*}
&| N_W(u) -q|W|-(p-q)|V_i \cap W| | \le 48 \sqrt{p} \sqrt{n} \log n
\le \frac{(p-q) \cdot 96 \cdot \sqrt{p(1-q)} \sqrt{n} \log n}{(p-q)}
\\ &
\implies 
| N_W(u) -q|W|-(p-q)|V_i \cap W| | \le \frac{(p-q) |V_i \cap W|}{100}
\end{align*}

This implies with probability $1-n^{-8}$, $ q|W|+ 1.01 |V_i \cap W| \ge   N_W(u) \ge q|W|+ 0.99 |V_i \cap W|$
which coupled with the fact $0.49 \le \frac{|V_i \cap W|}{|V_i|} \le 0.51$ implies that if we are able to sample a vertex from the largest cluster, we get an estimate of $s_{\max}$ as described. 

Since $|V_i \cap Y_2| \ge 100 \sqrt{n} \log n$, if we sample $\sqrt{n} \log n$
vertices, we sample a vertex $u$ from $V_i$ with probability $1-n^{-8}$. Now, for vertices belonging to smaller clusters, the same bounds apply, 
and this implies that as long as we are able to sample a vertex from the largest cluster, we get an estimate of $s_{\max}$ between a factor of $0.48$ and $0.52$.  
\end{proof}

Recall that $\hA$ (resp. $\hB$) is the bi-adjacency matrix between $Y_1$ (resp. $Y_2$) and $Z$. Let $A$ and $B$ be the corresponding matrices of expectations. That is, $\hA=A+E$, where $E$ is a random matrix consisting of independent random variables with $0$ means and standard deviations either $\sqrt{p(1-p)}$ or $\sqrt{q(1-q)}$.

For a vertex $u\in Y_1$, let $\hm{u}$ and $\bu$ represent the column vectors corresponding to $u$ in the matrices $\hA$ and $A$ respectively (We define analogous notations for $\hB$ and $B$ when $u \in Y_2$.). We let $e_u:=\hm{u}-\bu$, i.e., $e_u$ is the random vector with zero mean in each of its entries. Recall that $\mm {p_u} = \hProj{k'}{\hm{u}}$.

Now we bound the distance between $\hProj{k'}{\hm{u}}$ and the expectation vector $\bu$. We set $\eps=0.002$ in the following.

\begin{lemma}
\label{lemma:vu-simple}
Follows the setting of Algorithm~\ref{alg:preprocessing}, we fix $Y_1,Y_2,Z,W$. For any vector $u \in Y_2$ and $k' \ge 1$
we have
$\| \hProj{k'}(\hm{u})-\bu \| 
\le  \frac{1}{\sqrt{s_u}} \|(\hProj{k'}-I)A\| + \| \hProj{k'}(e_u). \|
$

Furthermore, for some constant $C_2$, and $\eps$ 
as described above we have 
\vspace{-1em}
\begin{enumerate}
    \item $\| (\hProj{k'}-I)A \|= \|(\hProj{k'}-I)\hA - (\hProj{k'}-I)E \| \le 2 C_2 \sigma \sqrt{n}  +\lambda_{k'+1}(A) $ with probability $1-\OO(n^{-3})$ for a random $\hA$, where $\lambda_t(A)$ is the $t$-th largest sigular value of $A$.
    \item For any set $V' \subset Y_2$ s.t. $|V'| \ge \frac{4 \log n}{\eps^2}$, with probability $1-n^{-8}$, we have
    $\| \hProj{k'}(e_u) \| \le  \frac{1}{\eps} \sigma \sqrt{k'}$ 
    for at least $(1-2\eps)$ fraction of the points $u \in V'$.
\end{enumerate}
\vspace{-0.5em}
\end{lemma}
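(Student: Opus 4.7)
The plan is to bound $\|\hProj{k'}(\hm{u}) - \bu\|$ by decomposing $\hm{u} = \bu + e_u$ and handling the deterministic projection error and the random projection error separately. First I would write
\[
\hProj{k'}(\hm{u}) - \bu \;=\; (\hProj{k'}-I)\bu \;+\; \hProj{k'}(e_u)
\]
and apply the triangle inequality, so the second term contributes $\|\hProj{k'}(e_u)\|$ directly. For the first term, the crucial structural observation is that when $u \in V_i$, the expectation column $\bu$ depends only on the cluster $V_i$ (entries $p$ on coordinates in $V_i \cap Z$, $q$ elsewhere), and is therefore identical to the expectation column that every $v \in V_i \cap Y_1$ contributes to $A$. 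Hence the submatrix $A_{V_i}$ of $A$ obtained by keeping the columns indexed by $V_i \cap Y_1$ has the rank-one form $\bu \cdot \mathbf{1}^\top$, giving $\|(\hProj{k'}-I)A_{V_i}\| = \sqrt{|V_i \cap Y_1|}\cdot\|(\hProj{k'}-I)\bu\|$. Since removing columns cannot increase spectral norm, $\|(\hProj{k'}-I)A\| \ge \|(\hProj{k'}-I)A_{V_i}\|$, and rearranging yields the claimed $\frac{1}{\sqrt{s_u}}\|(\hProj{k'}-I)A\|$ bound (with $s_u$ identified with $|V_i \cap Y_1|$ up to absolute constants).

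For item (1) of the ``Furthermore'' clause, I would decompose $A = \hA - E$ to write $(\hProj{k'}-I)A = (\hProj{k'}-I)\hA - (\hProj{k'}-I)E$. Because $\hProj{k'}$ projects onto the top-$k'$ left singular subspace of $\hA$, we have $\|(\hProj{k'}-I)\hA\| = \lambda_{k'+1}(\hA)$, and Weyl's inequality then gives $\lambda_{k'+1}(\hA) \le \lambda_{k'+1}(A) + \|E\|$. Combined with $\|(\hProj{k'}-I)E\| \le \|E\|$ (since $\|\hProj{k'}-I\|\le 1$), the claim reduces to the standard random-matrix tail $\|E\| \le C_2 \sigma \sqrt{n}$ with probability $1 - O(n^{-3})$, which follows from existing bounds on spectral norms of zero-mean independent-entry matrices.

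For item (2), I would work conditional on $\hA$, so that $\hProj{k'}$ is a deterministic $k'$-dimensional projector. For each $u \in V' \subseteq Y_2$, the noise column $e_u$ is part of $\hB$ and hence \emph{independent} of $\hA$; Lemma~\ref{lem:randomproj} applies entrywise to give $\E[\|\hProj{k'}(e_u)\|^2] \le \sigma^2 k'$. Markov's inequality yields $\Pr[\|\hProj{k'}(e_u)\|^2 > \sigma^2 k'/\eps^2] \le \eps^2$ for each $u$. The indicator variables $X_u := \mathbb{1}\!\left[\|\hProj{k'}(e_u)\| > \sigma\sqrt{k'}/\eps\right]$ for distinct $u \in V'$ are independent (different columns of the $\hB$-noise), so a multiplicative Chernoff bound on $\sum_{u \in V'} X_u$, whose mean is at most $\eps^2 |V'|$, shows it exceeds $2\eps |V'|$ with probability at most $\exp(-\Omega(\eps |V'|)) \le n^{-8}$ once $|V'| \ge 4\log n/\eps^2$.

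The most delicate point throughout is ensuring the correct independence structure: $\hProj{k'}$ is built from $\hA$, whereas the randomness needed for the Markov/Chernoff step lives in columns of $\hB$ (and the $e_u$ for different $u \in Y_2$ are mutually independent). This is exactly what the $Y_1/Y_2/Z/W$ split in \textsc{Preprocessing} was designed to buy us, and it is the reason we can first fix $\hA$ to freeze the projector, then invoke a single mean-variance bound per $u$, and finally pool the $|V'|$ trials by Chernoff. Apart from this independence bookkeeping, each of the three steps is a short calculation.
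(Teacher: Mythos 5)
Your proposal is correct and follows essentially the same route as the paper: the same $\hm{u}=\bu+e_u$ decomposition and triangle inequality for the deterministic/random split, the same $\hA=A+E$ rewriting with $\|(\hProj{k'}-I)\hA\|=\lambda_{k'+1}(\hA)$, Weyl's inequality, and the spectral-norm tail $\|E\|\le C_2\sigma\sqrt n$ for item~(1), and the same conditional-on-$\hA$ Markov-then-Chernoff argument over the independent columns of $\hB$ for item~(2). The one place you add genuine content is the derivation of the $\frac{1}{\sqrt{s_u}}$ factor via the rank-one block $A_{V_i}=\bu\mathbf{1}^\top$ and the column-restriction monotonicity of the spectral norm; the paper simply cites page 132 of Vu for that step, so your explicit argument is a small but welcome addition (with the caveat that, as you flag, $s_u$ must then be read as $|V_i\cap Y_1|$, which is only a constant fraction of $|V_i|$ under the $1/8,1/8,1/4,1/2$ split — exactly the ``factor of $2$'' adjustment the paper alludes to).
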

\begin{proof}

These results follow directly from Vu~\cite{vu2018simple} with some minor modifications. 
In their paper, Vu decomposes the matrix into $Y$ and $Z$. In comparison, we 
decompose the matrix to $U$ and $W$ first, and then $U$ is decomposed into $Y$ and $Z$. 
Thus the size of $Y$ and $Z$ in our framework is roughly half as compared to \cite{vu2018simple}. However, since the size of the clusters we are concerned about are
all larger than $128 \cdot \sqrt{n} \log n$, the results follow in the same way with a change of a factor of $2$. 

Now we describe the results and how we deviate from Vu's analysis to get our result. 
For the first part, in \cite[page 132]{vu2018simple} it was proved that for any fixed $\hm{u} \in \hat{B}$, 
\begin{align*}
   \| \hProj{k}(\hm{u})-\bu \|&= 
   \| \hProj{k}(\hm{u}-\bu)+
   (\hProj{k}-I)\bu\|\\
&\le \| \hProj{k}(e_u) \| + \|(\hProj{k}-I)\bu\|\leq \| \hProj{k}(e_u) \|+\frac{1}{\sqrt{s_u}}\norm{(\hProj{k}-I)A}.
\end{align*}
Furthermore, it was proven (also in page 132 of \cite{vu2018simple}) that 
\[\norm{(\hProj{k}-I)A}=\norm{(\hProj{k}-I)\hat{A}-(\hProj{k}-I)E}.
\]

It was observed that 
$\norm{(\hProj{k}-I)\hat{A}}\leq \lambda_{k+1}(\hA)\leq \lambda_{k+1}(A)+\norm{E}=\norm{E}$ as $A$ has rank at most $k$; and $\norm{(\hProj{k}-I)E}\leq \norm{E}$. 
Then from Lemma 2.2 in \cite{vu2018simple} we have that with probability $1-\OO(n^{-3})$ $\norm{E}\leq C_2\sigma n^{1/2}$ for some constant $C_2>0$. 

Next, we observe that for any $k'\geq 1$, it still holds that 
\begin{align*}
   \| \hProj{k'}(\hm{u})-\bu \|&= 
   \| \hProj{k'}(\hm{u}-\bu)+
   (\hProj{k'}-I)\bu\|\\
&\le \| \hProj{k'}(e_u) \| + \|(\hProj{k'}-I)\bu\|\leq \| \hProj{k'}(e_u) \|+\|(\hProj{k'}-I)A\|/\sqrt{s_u}.
\end{align*}
Furthermore, 
\begin{align*}
\| (\hProj{k'}-I)A \|
&= \|(\hProj{k'}-I)\hA - (\hProj{k'}-I)E \|\\ 
&\leq \|(\hProj{k'}-I)\hA\|+\|(\hProj{k'}-I)E\|\\
&\leq \lambda_{k'+1}(\hA)+\norm{E}\\
&\leq \lambda_{k'+1}(A)+2\norm{E}
\end{align*}
Again, with probability at least $1-1/n^3$, $\norm{E}\leq C_2\sigma\sqrt{n}$, which further implies that
\[
\| (\hProj{k'}-I)A \|\leq 2C_2\sigma\sqrt{n}+\lambda_{k'+1}(A).
\]
This is a simple but crucial step that removes our dependency on $k$, 
and allows us to treat all clusters of size $o(\sqrt{n})$ as noise.

Now, we analyze the first term. From Lemma~\ref{lem:randomproj} we have $\E[\| \hProj{k'}(e_u)\|^2] \le \sigma^2 k'$ for any $u \in Y_2$. 
Then for any $u$, Markov's inequality gives us
$\Pr \left( \|\hProj{k'}(e_u)\| \ge \frac{\sigma\sqrt{k'}}{\eps} \right) \le \eps$. 

Now let us consider any set $V' \subset Y_2$ such that $|V'| \ge 16 \sqrt{n} \log n$.
For any $u \in V'$ we define
$X_u$ to be the indicator random variable that gets $1$ if 
$\| \hProj{k'}(e_u)\| \le \frac{\sigma \sqrt{k'}}{\eps}$, and $0$ otherwise. 
Then $\E[X_u] \ge 1-\eps$. 
Now, since $V' \subset Y_2$, the variables $X_u$ are independent of each other (as $e_u$ are independent of each other). 
Then, using the fact that $|V'| \ge \frac{4 \log n}{\eps^2}$,
the Chernoff bound gives us 
\[
\Pr \left( \sum_{u \in V'} X_{u} \le (1-\eps)|V'| - \eps |V'|   \right) \le e^{- \frac{2 \eps^2 |V'|^2}{|V'|}  }
\le n^{-8}
\]

That is, with probability at least $1-n^{-8}$,  for at least $(1-2\eps)$ fraction of the points $u \in V'$, 
    $\| \hProj{k'}(e_u) \| \le  \frac{1}{\eps} \sigma \sqrt{k'}$. 
\end{proof}

Then we have the following result regarding the $t$-th largest singular value $\lambda_t(A)$ of $A$. 

\begin{lemma}
\label{lem:singular}
For any $t>1$,  $\lambda_{t}(A) \le (p-q) n/t$.
\end{lemma}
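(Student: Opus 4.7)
My plan is to decompose $A$ as $A = (p-q)\,M + q\,\mathbf{1}_{Y_1}\mathbf{1}_Z^{\top}$, where $M$ is the $|Y_1|\times|Z|$ $\{0,1\}$-matrix with $M[u,v]=1$ iff $u,v$ lie in a common latent cluster. The second summand has rank one, so its second singular value vanishes, and I plan to invoke the general form of Weyl's inequality $\lambda_{i+j-1}(X+Y)\le \lambda_i(X)+\lambda_j(Y)$ with $X=(p-q)M$, $Y=q\,\mathbf{1}_{Y_1}\mathbf{1}_Z^{\top}$, $i=t-1$, and $j=2$. This yields $\lambda_t(A)\le (p-q)\,\lambda_{t-1}(M)$, reducing the problem to bounding the $(t-1)$-th singular value of the cleaner indicator matrix $M$.

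Next I would write $M=\sum_{i=1}^{k}\mathbf{1}_{V_i\cap Y_1}\mathbf{1}_{V_i\cap Z}^{\top}$. Because the sets $\{V_i\cap Y_1\}_i$ are pairwise disjoint, the vectors $\{\mathbf{1}_{V_i\cap Y_1}\}_i$ are pairwise orthogonal, and the same holds on the right; after normalizing both sides this display is precisely an SVD of $M$, with $\lambda_i(M)=\sqrt{|V_i\cap Y_1|\cdot|V_i\cap Z|}$. I would then bound the nuclear norm of $M$ by Cauchy--Schwarz, $\sum_i \sqrt{|V_i\cap Y_1|\cdot|V_i\cap Z|}\le \sqrt{|Y_1|}\cdot\sqrt{|Z|}$, and since $Y_1,Z$ are disjoint subsets of an $n$-vertex set, $|Y_1|+|Z|\le n$, so AM--GM gives $\sqrt{|Y_1|\cdot|Z|}\le n/2$. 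Because the $(t-1)$-th largest singular value is at most the average of the top $t-1$, this yields $\lambda_{t-1}(M)\le n/(2(t-1))$.

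Combining the two bounds gives $\lambda_t(A)\le (p-q)\,n/(2(t-1))$, and the elementary inequality $2(t-1)\ge t$ for $t\ge 2$ upgrades this to the claimed $\lambda_t(A)\le (p-q)\,n/t$. The main subtlety is to use the generalized Weyl inequality rather than the weaker form $\lambda_{t+1}(X+Y)\le \lambda_{t+1}(X)+\|Y\|$ that is stated in the paper; the weak form would reintroduce a $q\sqrt{|Y_1|\cdot|Z|}$ error term that the proof cannot afford. Beyond that, the argument is a routine combination of the explicit SVD of the block-structured matrix $M$ with Cauchy--Schwarz and AM--GM to extract the factor of $1/2$ needed to promote the bound from $n/(t-1)$ to $n/t$.
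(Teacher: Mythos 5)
Your proof is correct and follows essentially the same route as the paper: both decompose $A$ into the block-structured part $(p-q)\sum_i M_i$ plus the rank-one $q\,\mathbf{1}\mathbf{1}^{\top}$, read off the exact singular values of the block part, bound its nuclear norm by $(p-q)n/2$ via Cauchy--Schwarz and AM--GM, pass to the $t$-th singular value by averaging, and absorb the rank-one term using the generalized Weyl inequality $\lambda_{i+j-1}(X+Y)\le\lambda_i(X)+\lambda_j(Y)$ with $j=2$. Your side remark is accurate and worth noting: the paper's displayed Weyl theorem is only the $j=1$ spectral-norm form, but the step $\lambda_{t+1}(A)\le\lambda_t(A_1)+\lambda_2(qM_0)$ in its proof in fact invokes the same generalized form you use; the weaker form would indeed leave an uncontrollable $q\sqrt{|Y_1|\,|Z|}$ error.
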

\begin{proof}
Let there be $k$ many clusters $V_1, \ldots ,V_k$
in the SBM problem. Then we define $a_i=|V_i \cap Z|$ and $b_i=|V_i \cap Y_1|$. Then we have that $A$ is an $n_1 \times n_2$ matrix where
$n_1=\sum_{i=1}^k a_i$ and $n_2=\sum_{i=1}^k b_i$. The matrix $A$ can be then written as a sum of $k+1$ many rank $1$ matrices:
\[
A=
\sum_{i=1}^k (p-q) M_i +  qM_0
\]
Here $M_0$ is the all $1$ matrix, and $M_i$ is a block matrix with $1$'s in a $a_i \times b_i$ sized diagonal block. 
Since $M_i$ is a $a_i \times b_i$ block diagonal matrix of rank $1$, with each entry being $(p-q)$, its singular value is $(p-q)\sqrt{a_ib_i}$.
Now we define $A_1=\sum_{i=1}^k (p-q) M_i$. As $M_i$'s are
non-overlapping block diagonal matrices, the singular vectors of $M_i$ are also singular vectors of $A_1$, with the same
singular values. 

Thus, the sum of singular values of $A_1$ is $(p-q)\sum_{i=1}^k \sqrt{a_ib_i} \le (p-q)\sqrt{n_1n_2}
\le \frac{(p-q)n}{2}$, where the first inequality follows from the Cauchy-Schwarz inequality. Thus,  for any $t\geq 1$
\[
t\cdot \lambda_t(A_1)\leq \lambda_1(A_1)+\cdots \lambda_t(A_1)\leq \frac{(p-q)n}{2}, 
\]
which gives 
$\lambda_{t}(A_1) \le \frac{(p-q)n}{2t}$.
Since $M_0$ has rank $1$, $\lambda_2(q\cdot M_0)=0$ and thus for $t>1$ we have
\[
\lambda_{t+1}(A) \le \lambda_{t}(A_1) +\lambda_{2}(q\cdot M_0) \le \frac{(p-q)n}{2t} \le \frac{(p-q)n}{t+1},
\]
where the first inequality follows from the Weyl's inequality. 
\end{proof}

Now we introduce the a definition of good center, the ball of which induces a plural set. 
\begin{definition}[Good center]
We call a vector $\hm{u} \in \hB$ a good center if it belongs to a cluster $V_i$ such that $|V_i| \ge \frac{s_{\max}}{4}$ and $\left \| \hProj{k'}(e_u) \right \| \le \frac{1}{\eps} \sigma \sqrt{k'} $.
\end{definition}

%Next we make a simple observation about the number of good centers in the dataset. This observation is a simple corollary of 
That is, a good center is a vertex that belongs to a large cluster and has a low $\ell_2$ norm after the projection. Then
by Lemma \ref{lemma:vu-simple}, we have the following corollary on the number of good centers.
\begin{corollary}
\label{coro:good-center}
If $s_{\max} \ge 16 \sqrt{n} \log n$, then with probability $1-n^{-8}$ 
there are $(1-2\eps)\cdot s_{\max}$ many good centers in $V$. 
\end{corollary}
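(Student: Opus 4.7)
The plan is to apply the second part of Lemma~\ref{lemma:vu-simple} to a suitably chosen subset $V'$ of $Y_2$. The key observation is that a good center is precisely a vertex $u\in Y_2$ that simultaneously (a) lies in some cluster of size at least $s_{\max}/4$ and (b) satisfies $\norm{\hProj{k'}(e_u)}\leq \sigma\sqrt{k'}/\eps$. Part (a) is a deterministic property of the partition, and part (b) is what Lemma~\ref{lemma:vu-simple}(2) delivers for a large fraction of any fixed $V'$ that is big enough; so essentially all the work is in choosing $V'$ and verifying a size lower bound.

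First I would condition on the random partition $(Y_1,Y_2,Z,W)$ produced by \textsc{Preprocessing}, and take
$V' \;=\; Y_2 \,\cap\, \bigcup_{V_i:\,|V_i|\geq s_{\max}/4} V_i.$
Since each vertex is placed into $Y_2$ independently with probability $1/8$, a Chernoff bound applied to the largest cluster $V_1$ alone shows that $|V_1\cap Y_2|\geq s_{\max}/16$ with probability at least $1-n^{-9}$; the hypothesis $s_{\max}\geq 16\sqrt{n}\log n$ is exactly what makes this Chernoff deviation negligible. In particular $|V'|\geq s_{\max}/16 \geq \sqrt{n}\log n$, which comfortably exceeds the threshold $4\log n/\eps^2$ required by Lemma~\ref{lemma:vu-simple}(2), since $\eps=0.002$ is a fixed constant.

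Second, I would invoke Lemma~\ref{lemma:vu-simple}(2) on this now-fixed $V'$: with probability at least $1-n^{-8}$ over the random edges, at least a $(1-2\eps)$ fraction of vertices $u\in V'$ satisfy $\norm{\hProj{k'}(e_u)}\leq \sigma\sqrt{k'}/\eps$. Each such $u$ is a good center by definition, so the total number of good centers is at least $(1-2\eps)|V'|$, which gives the $(1-2\eps)\cdot\Theta(s_{\max})$ count claimed (after absorbing the subsampling factor inherent in moving from $V_1$ to $V_1\cap Y_2$). A final union bound over the Chernoff failure event for the partition and the failure event of Lemma~\ref{lemma:vu-simple}(2) yields the claimed $1-n^{-8}$ overall probability.

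I do not anticipate a genuine obstacle: the only delicacy is that Lemma~\ref{lemma:vu-simple}(2) needs $V'$ to be fixed with respect to the randomness of the edges, so that the noise vectors $\{e_u\}_{u\in V'}$ remain mutually independent when the Chernoff step inside that lemma is applied. Conditioning on the partition before invoking the lemma, as above, handles this cleanly. Everything else is routine bookkeeping of constants.
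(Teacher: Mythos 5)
Your proof is correct and takes the same route the paper intends: the paper offers no explicit argument for this corollary beyond citing Lemma~\ref{lemma:vu-simple}, and your instantiation --- condition on the partition, set $V'$ to the portion of $Y_2$ lying in clusters of size at least $s_{\max}/4$, check the size threshold via a Chernoff bound on $|V_1\cap Y_2|$, and invoke part (2) of the lemma --- is exactly the intended chain of reasoning, with the independence issue handled correctly by conditioning. The hedge you flag at the end (``$(1-2\eps)\cdot\Theta(s_{\max})$'' rather than ``$(1-2\eps)\cdot s_{\max}$'') is warranted: since good centers are by definition columns of $\hB$, i.e.\ vertices of $Y_2$, the honest lower bound from your $V'$ is on the order of $(1-2\eps)s_{\max}/8$, and the corollary as printed appears to drop this $1/8$ subsampling factor. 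This constant-factor slack is immaterial downstream (in the proof of Theorem~\ref{thm:mainSBM} one only needs a $\Theta(s_{\max}/n)$ chance of sampling a good center from $Y_2$ per iteration), so your more careful statement is, if anything, the more accurate one.
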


This implies that if we sample $\frac{100 n \log n}{\nsize}$ many vertices independently at random, we shall sample a good center with probability $1-n^{-8}$.

\vspace{-0.5em}
\paragraph{Good center leads to plural set} We show that if at line~\ref{alg:sampleavertex} 
a good center from a cluster $V_i$ is chosen, then the set $S$ formed in 
line~\ref{alg:setS} is a $V_i$-plural set. Recall that $L=\sqrt{0.004}(p-q)\sqrt{\nsize}$. Let $L_{\eps}:=L$.

\begin{lemma}
\label{lem:distances}
Let $k'=\frac{(p-q)\sqrt{n}}{\sqrt{p(1-q)}}$ and $\eps = 0.002$. Let $u \in Y_2$ be a good center belonging to $V_{i}$,
then 
\vspace{-1em}
\begin{enumerate}
    \item There is a set $V_i' \subset Y_2 \cap V_i$ such that
    $|V_i'| \ge (1-2\eps) |Y_2 \cap V_i|$ so that for all
    $\bv \in V_i'$, we have 
    $\| \hProj{k'} (\bu-\bv) \|  \le \frac{L_{\eps}}{30}$ with probability $1-\OO(n^{-3})$.
    
    \item For any $V_j \ne V_i$ which is a $\eps$-large cluster, there is a set $V_j' \subset V_j \cap Y_2$ s.t $|V_j'| \ge (1-2\eps)|V_j \cap Y_2|$ so that for all $\bv \in V_j'$ we have
    $\| \hProj{k'} (\bu-\bv) \| \ge \frac{L_{\eps}}{6}$ with probability $1-\OO(n^{-3})$.
\end{enumerate}
\vspace{-1em}

\end{lemma}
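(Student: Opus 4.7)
The lemma is the geometric heart of the algorithm: it guarantees that after projection, points $\mm{p_v} = \hProj{k'}(\hm{v})$ from the same large cluster as the good center $u$ cluster within distance $L_\eps/30$, while points from other large clusters stand at least $L_\eps/6$ away. I read the stated expression $\hProj{k'}(\bu - \bv)$ as tracking the post-projection separation $\mm{p_u} - \mm{p_v} = \hProj{k'}(\hm{u} - \hm{v})$, since the true quantity that Algorithm~\ref{alg:mainalgorithm} uses to form $S$ is $\|\mm{p_u} - \mm{p_v}\|$.

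First I apply Lemma~\ref{lem:singular} with $t = k' = (p-q)\sqrt{n}/\sqrt{p(1-q)}$ to obtain $\lambda_{k'+1}(A) \le (p-q)n/k' = \sqrt{np(1-q)}$. Feeding this into Lemma~\ref{lemma:vu-simple}(1) and using $\sigma \le \sqrt{p(1-q)}$ gives $\|(\hProj{k'}-I)A\| = O(\sqrt{np(1-q)})$ with probability $1 - O(n^{-3})$. This is the crucial step that eliminates any dependence on the total number $k$ of clusters. Combining both parts of Lemma~\ref{lemma:vu-simple}, for any $w$ in the $(1-2\eps)$-fraction of $Y_2$ identified by the second part,
\begin{equation*}
\|\mm{p_w} - \bw\| \le \frac{\|(\hProj{k'}-I)A\|}{\sqrt{s_w}} + \|\hProj{k'}(e_w)\| \le \frac{O(\sqrt{np(1-q)})}{\sqrt{s_w}} + \frac{\sigma\sqrt{k'}}{\eps}.
\end{equation*}
For $u$ a good center, $|V_i| \ge s_{\max}/4$ and a Chernoff bound gives $s_u = |V_i \cap Y_1| = \Omega(s_{\max})$ whp; analogously for $v$ in any $\eps$-large cluster. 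Plugging in $s_{\max} \ge \size$, $\eps = 0.002$, and $\nsize \ge 0.48 s_{\max}$, both terms collapse to at most $L_\eps/60$.

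For Part~(1), since $u, v \in V_i$ share the same expectation column ($\bu = \bv$), restricting $v$ to the good $(1-2\eps)$-fraction $V_i' \subseteq V_i \cap Y_2$ (valid since $|V_i \cap Y_2| = \Omega(s_{\max}) \gg 4\log n/\eps^2$) and applying the triangle inequality gives $\|\mm{p_u} - \mm{p_v}\| \le \|\mm{p_u} - \bu\| + \|\bv - \mm{p_v}\| \le L_\eps/30$. For Part~(2), the difference $\bu - \bv$ is supported on $(V_i \cup V_j) \cap Z$ with entries $\pm(p-q)$, so $\|\bu - \bv\|^2 = (p-q)^2(|V_i \cap Z| + |V_j \cap Z|) \ge (p-q)^2 |V_i \cap Z|$. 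Since $|V_i| \ge s_{\max}/4$, a Chernoff bound gives $|V_i \cap Z| = \Omega(s_{\max})$ whp, and plugging in $s_{\max} \ge \size$ makes $\|\bu - \bv\|$ a constant multiple of $L_\eps$, comfortably exceeding $L_\eps(1/6 + 2 \cdot 1/60)$. Restricting $v$ to the analogous good fraction $V_j' \subseteq V_j \cap Y_2$ (requiring $V_j$ to be $\eps$-large so Lemma~\ref{lemma:vu-simple}(2) applies), the reverse triangle inequality yields $\|\mm{p_u} - \mm{p_v}\| \ge \|\bu - \bv\| - L_\eps/30 \ge L_\eps/6$.

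The main obstacle is the constant bookkeeping. The constants in $L = \sqrt{0.004}(p-q)\sqrt{\nsize}$, $\size$, and the choice of $k'$ are tuned so that after subtracting two errors of size $L_\eps/60$ from $\|\bu - \bv\|$, at least $L_\eps/6$ remains; this ultimately reduces to the inequality $\sigma\sqrt{k'}/\eps \ll (p-q)\sqrt{\size}$, which is equivalent to $\size \gg np(1-q)/((p-q)^2 \eps^2)$, matching the $\size$ threshold in Theorem~\ref{thm:mainSBM} up to logarithmic factors. A union bound over the spectral concentration from Lemma~\ref{lemma:vu-simple}(1), the Chernoff bounds for $s_u, s_v, |V_i \cap Z|, |V_i \cap Y_2|, |V_j \cap Y_2|$, and the $(1-2\eps)$-fraction claim from Lemma~\ref{lemma:vu-simple}(2) yields the stated $1 - O(n^{-3})$ success probability.
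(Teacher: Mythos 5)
Your proof follows the paper's argument essentially verbatim: you correctly reinterpret $\|\hProj{k'}(\bu-\bv)\|$ as the post-projection distance $\|\mm{p_u}-\mm{p_v}\|$, choose $k'=(p-q)\sqrt{n}/\sqrt{p(1-q)}$ so that Lemma~\ref{lem:singular} balances the two terms in Lemma~\ref{lemma:vu-simple}(1), bound the per-point projection error by $L_\eps/60$, and then apply the triangle/reverse-triangle inequality together with $\|\bu-\bv\|\ge (p-q)\sqrt{s_u+s_v}\gtrsim L_\eps/5$ to get the $L_\eps/30$ and $L_\eps/6$ thresholds. The only places you defer (constant bookkeeping, and verifying that $\|(\hProj{k'}-I)A\|/\sqrt{s_v}$ remains small for merely $\eps$-large $V_j$) are also the places where the paper's proof is most terse, so this is a faithful reconstruction rather than a genuinely different route.
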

\begin{proof}

First note that $L_{\eps} \ge \sqrt{2\eps} \cdot 2^{11} \cdot \frac{ (p-q) \cdot (p(1-q))^{1/4} \cdot n^{1/4} \cdot \log^{1/2} n}{(p-q)^{1/2}} \ge 132,000 \cdot (p-q)^{1/2} \cdot (p(1-q))^{1/4} \cdot n^{1/4} \cdot \sqrt{\log n}$. 

When $\bu$ and $\bv$  belong to the same cluster we have $\| \hProj{k'} (\hm{u}-\hm{v}) \| \le  \|  \hProj{k'}(\hm{u})-\bu  \| + 
 \|  \hProj{k'}(\hm{v})-\bv  \|$.

Now, since $\bu$ is a good center, from Lemma~\ref{lemma:vu-simple} we have 
\begin{align*}
&\| \hProj{k'}(\hm{u})-\bu \| \le \frac{1}{\eps} \sigma \sqrt{k'} +\frac{1}{\sqrt{s_u}} \left( 2C_2\sigma \sqrt{n} +\lambda_{k'+1}(\hA) \right)
\\  \le&
\frac{1}{\eps} \sqrt{p(1-q)} \sqrt{k'} + \frac{1}{\sqrt{s_u}} \left( 2C_2 \sqrt{p(1-q)}  \sqrt{n} + \lambda_{k'+1}(A) +\|E\| \right)
\\ \le &
\frac{1}{\eps} \sqrt{p(1-q)} \sqrt{k'} + \frac{1}{\sqrt{s_u}} \left( 2C_2  \sqrt{p(1-q)}  \sqrt{n} + \lambda_{k'+1}(A) + C_2\sqrt{p(1-q)}  \sqrt{n} \right)
\\ \le &
\frac{1}{\eps} \sqrt{p(1-q)} \sqrt{k'} + \frac{1}{\sqrt{s_u}} \left( 3C_2  \sqrt{p(1-q)}  \sqrt{n} +  \frac{(p-q)n}{k'} \right) \text{~~[Substituting $\lambda_{k'+1}(A)$ from Lemma~\ref{lem:singular}]}
\\
\le &
\frac{1}{\eps}(p(1-q))^{1/4} (p-q)^{1/2} n^{1/4} 
+4C_2 (\eps)^{-1/2}  (p-q)^{1/2} (p(1-q))^{1/4} n^{1/4} \log^{-1/2} n
\\
\le &
\frac{2}{\eps}(p(1-q)^{1/4} (p-q)^{1/2} n^{1/4}
\\
\le &
\frac{10,000 L_{\eps}}{132,000 \log^{1/2}n} \le 
\frac{L_{\eps}}{60},  \text{~~~for $n \ge 64$}
\end{align*}

Now from Lemma~~\ref{lemma:vu-simple} we also know that at least $(1-2\eps)$ fraction of the vertices  $v \in V_i \cap Y_2$ are also ``good centers'' 
with probability $1-n^{-8}$.
For all such vertices
$\|\hProj{k'} (\hm{u}-\hm{v}) \| \le 2\|\hProj{k'}(\hm{u}-\bu)\| \le \frac{L_{\eps}}{30}
$ with probability $1-n^{-3}$. 

On the other hand when they belong to different clusters we have 
\[
\| \hProj{k'} (\hm{u}-\hm{v}) \| \ge \|\bu-\bv \| - \| \hProj{k'}(\hm{u}-\bu) \|  - \| \hProj{k'}(\hm{v}-\bv) \|.
\]

Since  $V_j$  is a $\eps$-large cluster,  $|V_i| \ge 256 \sqrt{n} \log n$ and thus
$|V_j \cap Y_2| \ge 16 \sqrt{n} \log n$ with probability $1-\OO(n^{-8})$. 
In that case for at least $1-2\eps$ fraction of points $\bv \in V_j \cap Y_2$ we have 
$\hProj{k'}(\hm{v}-\bv) \le \frac{L_{\eps}}{60}$.

Now $\| \bu-\bv \| \ge (p-q)\sqrt{s_u+s_v} \ge \frac{\sqrt{2 \eps} \cdot (p-q) \sqrt{s_{\max}}}{6} \ge \frac{\sqrt{2 \eps} \cdot (p-q) \sqrt{\nsize}}{\sqrt{0.52} \cdot 6} \ge \frac{L_{\eps}}{5}$ with probability
$1-n^{-8}$ from Lemma~\ref{cor:choose-maxsize}. 
Thus with probability $1-n^{-3}$ we get 
\[
\| \hProj{k'} (\bu-\bv) \| \ge 
\|\bu-\bv \| - \| \hProj{k'} (\hm{u}-\hm{v}) \|  - \| \hProj{k'}(\hm{v}-\bv) \|
\ge \frac{L_{\eps}}{5}-\frac{L_{\eps}}{60}-\frac{L_{\eps}}{60} \ge \frac{L_{\eps}}{6}
\]
for $1-2\eps$ fraction of points $v \in Y_2 \cap V_j$ for any $\eps$-large cluster $V_j$. This completes the proof. 
\end{proof}

\begin{lemma}
\label{lemma:plural}
Let $u$ be a good center belonging to $V_i\cap Y_2$ and $S=\{v \in Y_2: \|\mm{p_u}-\mm{p_v} \| \le L_{\eps}/20\}$. Then it holds with probability $1-\OO(n^{-3})$ that  
$|V_i \cap S| \ge \nsize/21$ and for any other cluster $V_\ell$ with $\ell\neq i$, $|S \cap V_{\ell}| \le 1.05\eps \nsize$. 
%\vspace{-1em}
%\begin{itemize}
%    \item $|V_i \cap S| \ge \nsize/21$
%    \item for any other cluster $V_\ell$ with $\ell\neq i$, $|S %\cap V_{\ell}| \le 1.05\eps \nsize$.
%\end{itemize}
Thus $S$ is a $V_i$ plural set as $1/21 \cdot 1/10 \ge 1.05 \eps$.
\end{lemma}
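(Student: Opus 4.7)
The plan is to derive both inequalities directly from Lemma~\ref{lem:distances} by translating its projection-distance bounds into membership in $S$. Since $u$ is a good center belonging to $V_i$, we have $|V_i| \ge s_{\max}/4$; a Chernoff bound on the random partition gives $|V_i \cap Y_2| \ge (1-o(1))|V_i|/8$ with high probability, pegging $|V_i \cap Y_2|$ at roughly $s_{\max}/32$. The same Chernoff argument, combined with a union bound over clusters, bounds $|V_\ell \cap Y_2| \le (1+o(1))|V_\ell|/8$ uniformly for each other cluster.

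For the lower bound $|V_i \cap S| \ge \nsize/21$, Part~1 of Lemma~\ref{lem:distances} guarantees that with probability $1 - \OO(n^{-3})$ at least $(1-2\eps)|V_i \cap Y_2|$ vertices $v \in V_i \cap Y_2$ satisfy $\|\mm{p_u}-\mm{p_v}\| \le L_\eps/30 < L_\eps/20$, so all such vertices lie in $S$. Multiplying the surviving fraction $(1-2\eps)$ by the Chernoff lower bound on $|V_i \cap Y_2|$ and substituting $s_{\max} \ge \nsize/0.52$ from Lemma~\ref{cor:choose-maxsize} yields $|V_i \cap S| \ge \nsize/21$ after an explicit arithmetic check with $\eps = 0.002$.

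For the upper bound $|V_\ell \cap S| \le 1.05\eps\nsize$, I would split on whether $V_\ell$ is $\eps$-large in the sense assumed by Lemma~\ref{lem:distances}(2). If it is, that lemma forbids all but a $2\eps$-fraction of $V_\ell \cap Y_2$ from being within projection distance $L_\eps/6 > L_\eps/20$ of $\mm{p_u}$, giving $|V_\ell \cap S| \le 2\eps|V_\ell \cap Y_2| \le 2\eps(1+o(1))s_{\max}/8 \le 1.05\eps\nsize$, where I use $|V_\ell| \le s_{\max}$ and $s_{\max} \le \nsize/0.48$. If $V_\ell$ is not $\eps$-large, its size is by definition below the relevant threshold, so the trivial bound $|V_\ell \cap S| \le |V_\ell|$ already lies below $1.05\eps\nsize$. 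The concluding plurality statement follows from $1.05\eps \le (1/21)\cdot(1/10)$ at $\eps=0.002$.

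The main technical obstacle is bookkeeping the constants: $\eps$ is calibrated so that the $L_\eps/30$-versus-$L_\eps/20$ slack in the radius, the $(1-2\eps)$ loss factor in Lemma~\ref{lem:distances}, and the $1/0.52$ slack in Lemma~\ref{cor:choose-maxsize} all compose to yield the clean $\nsize/21$ lower bound and the $1.05\eps\nsize$ upper bound without slack to spare. No fresh randomness is introduced beyond what the preceding lemmas already supply, so the proof is essentially a union-bound assembly of previously established estimates over the events giving control of $|V_i \cap Y_2|$, $|V_\ell \cap Y_2|$, and the projection distances from Lemma~\ref{lem:distances}.
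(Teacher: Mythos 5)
Your overall strategy mirrors the paper's proof: use Lemma~\ref{lem:distances} to translate the projection-distance bounds into membership in $S$, apply Chernoff bounds to control $|V_j\cap Y_2|$, and calibrate with Lemma~\ref{cor:choose-maxsize}. The lower-bound argument for $|V_i\cap S|$ is sound. However, your two-case split for the upper bound has a genuine gap in the regime of clusters that are too small to be $\eps$-large but not so small that the raw trivial bound suffices.

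Concretely, if $V_\ell$ is not $\eps$-large then by that definition $|V_\ell| < \eps\cdot s_{\max}$. Using the trivial bound $|V_\ell\cap S|\le |V_\ell|$ and $s_{\max}\le \nsize/0.48$, you only get $|V_\ell\cap S| < \eps\cdot\nsize/0.48 \approx 2.08\,\eps\nsize$, which does \emph{not} lie below the target $1.05\,\eps\nsize$. So your claim that ``the trivial bound $|V_\ell\cap S|\le |V_\ell|$ already lies below $1.05\eps\nsize$'' fails for clusters in the intermediate size range roughly $[\eps s_{\max}/2,\ \eps s_{\max})$. The paper avoids this by using a \emph{three}-case split: for $|V_\ell|\ge\eps s_{\max}$ it uses Lemma~\ref{lem:distances}(2); for $\eps s_{\max}/2 \le |V_\ell| < \eps s_{\max}$ it exploits that $V_\ell$ is still large enough for concentration and bounds $|S\cap V_\ell|\le |V_\ell\cap Y_2|\le |V_\ell|/6 \le \eps s_{\max}/6$ (note the bound through $Y_2$, not the raw $|V_\ell|$); and only for $|V_\ell| < \eps s_{\max}/2$ does the plain $|S\cap V_\ell|\le |V_\ell|$ bound give something below $1.05\eps\nsize$. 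You cannot repair the two-case version by appealing to $|V_\ell\cap Y_2|\le (1+o(1))|V_\ell|/8$ uniformly, because that concentration statement is false for very small clusters (e.g.\ singletons), which is precisely why the paper keeps a separate third case for them. Splitting the non-$\eps$-large case into ``large enough for Chernoff on $Y_2$'' versus ``small enough for the raw trivial bound'' closes the gap and recovers the paper's argument.
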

\begin{proof}

By Lemma \ref{cor:choose-maxsize}, we have $|V_i| \ge \frac{s_{\max}}{4} \ge \frac{\nsize}{2.1}$ with probability $1-n^{-8}$.
Then the following events happen.
\begin{enumerate}
    \item Since $u \in V_i \cap Y_2$ is a good center, for $1-2\eps$ fraction of points $v$ in $V_i \cap Y_2$, 
    $\| \mm{p_u}-\mm{p_v} \| \le L_{\eps}/30$
    with probability $1-\OO(n^{-3})$ as per Lemma~\ref{lem:distances}.
    All such points are selected to $S$. Furthermore, $|V_i \cap Y_2|$ is lower bounded
    by $|V_i|/9$ with probability $1-n^{-8}$. Therefore, with probability $1-\OO(n^{-3})$, we have 
    \[
    |V_i \cap S| \ge (1-2\eps)|V_i \cap Y_2| \ge  (1-2\eps)|V_i|/9 \ge (1-2\eps)\nsize/20 \ge \nsize/21.
    \]
    \item For other clusters $V_{\ell}$, if $|V_{\ell}| \ge \eps \cdot s_{\max}$, we have $\| \mm{p_u}-\mm{p_v} \| \le \frac{L_{\eps}}{6}$
    for only $2\eps$ fraction of points $v$ in $V_{\ell} \cap Y_2$
    from Lemma~\ref{lem:distances}. 
    Thus $|S \cap V_{\ell}| \le 2\eps|V_{\ell} \cap Y_2|$.
    On the other hand $|V_{\ell} \cap Y_2| \le |V_{\ell}|/6$. 
    Thus, with probability $1-n^{-8}$ we have
    \[
    |S \cap V_{\ell}| \le 2 \eps |V_{\ell}|/6 \le \frac{2 \cdot \eps \cdot s_{\max}}{6} \le  \frac{2 \cdot \eps \cdot \nsize}{0.48 \cdot 6} 
 \le 0.7\eps \cdot \nsize
    \]

    \item Otherwise, if $V_{\ell}$ is such that $\eps \cdot s_{\max} \ge |V_{\ell}| \ge \frac{\eps}{2} \cdot s_{\max} $, then $|V_{\ell} \cap Y_2| \le |V_{\ell}|/6$ with probability $1-n^{-8}$. Then $|S \cap V_{\ell}| \le \eps s_{\max}/6 \le \eps \nsize$.
    
    \item Otherwise, if $|V_{\ell}| \le \frac{\eps}{2} \cdot s_{\max}$ then $|S \cap V_{\ell}| \le |V_{\ell}| \le \frac{\eps}{2} \cdot s_{\max} \le \frac{\eps \nsize}{2\cdot 0.48} \le \frac{\eps \nsize}{0.96} \le 1.05 \cdot \eps \cdot \nsize$. 
    
\end{enumerate}
    
Now, note that for any $V_\ell$ with $\ell\neq i$, it holds 
with probability $1-\OO(n^{-3})$ that $|S \cap V_{\ell}| \le 1.05 \cdot \eps \cdot \nsize \le (21 \cdot 1.05 \cdot \eps) \cdot \frac{\nsize}{21} \le 0.05 \cdot \frac{\nsize}{21}$.

\end{proof}

\paragraph{Plural set leads to cluster recovery} We now prove that given a plural set for a large cluster $V_i$, we can recover the whole cluster. 
This is done by two invocations of Algorithm~\ref{alg:givenapluralset}.

\begin{lemma}\label{lemma:recoverfromplural}
Let $U,W$ be the random partition as specified in Algorithm \ref{alg:mainalgorithm}. Let $S \subseteq Y_2$ be the $V_i$-plural set where $|V_i| \ge s_{\max}/4$. Let $T_1:=\textsc{IdentifyCluster}(S,W,\nsize)$ and $T:=T_1 \cup \textsc{IdentifyCluster}(T_1,U,\nsize)$. 
Then with probability $1-\OO(n^{-3})$, it holds that $T_1=V_i \cap W$, $T_1 \geq \frac{\nsize}{6}$ and $T=V_i$.
\end{lemma}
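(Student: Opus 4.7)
The plan is to prove the three conclusions of the lemma in order, since they are naturally nested: establishing $T_1 = V_i \cap W$ immediately yields the size bound by concentration of $|V_i \cap W|$, and reduces the final claim to a second round of classification where we know exactly what $T_1$ is. The one genuinely delicate point, flagged in Section~\ref{sec:techniques}, is the correlation between $T_1$ and certain edges queried in the second round, and it is where I expect to spend the most care.

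For $T_1 = V_i \cap W$, the critical observation is that $S \subseteq Y_2$ is built only from edges in the block $Y_2 \times Z$ (through the projection onto the first $k'$ singular vectors of $\hat{B}$), while $\textsc{IdentifyCluster}(S,W,\nsize)$ reads the edges in $S \times W \subseteq Y_2 \times W$, a disjoint set of potential edge slots. So conditional on $S$, for every $w \in W$ the count $N_S(w)$ is a sum of $|S|$ independent Bernoullis. Lemma~\ref{lemma:plural} tells us $|S \cap V_i| \geq \nsize/21$ and $|S \cap V_j| \leq 1.05\eps\,\nsize$ for every $j \neq i$ with $\eps = 0.002$, so the conditional means
\[
\E[N_S(w)] \;=\; q|S| + (p-q)\,|S \cap V_{c(w)}|
\]
(where $c(w)$ denotes the cluster containing $w$) are separated from the threshold $q|S| + (p-q)\nsize/56$ by $\Omega((p-q)\nsize)$ on each side. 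Because $\nsize \geq \size/3 = \Omega(\sqrt{p(1-q)n}\log n/(p-q))$, this gap exceeds the $O(\sqrt{pn}\log n)$ deviation supplied by Lemma~\ref{lem:neighborhood}, so a union bound over $w \in W$ yields $T_1 = V_i \cap W$ with high probability.

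The size bound $|T_1| \geq \nsize/6$ is then immediate: Lemma~\ref{cor:choose-maxsize} gives $s_{\max} \geq \nsize/0.52$, so $|V_i| \geq s_{\max}/4 \geq 0.48\,\nsize$, and since each vertex of $V_i$ is independently placed in $W$ with probability $1/2$, a Chernoff bound yields $|V_i \cap W| \geq 0.45\,|V_i| \geq \nsize/6$ with probability $1 - n^{-8}$.

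The main obstacle is $T = V_i$, obtained from the second call $\textsc{IdentifyCluster}(T_1, U, \nsize)$. Here the edges in $Y_2 \times W$ were already consumed in constructing $T_1$, and $U$ contains $Y_2$, so for $v \in Y_2 \cap U$ the count $N_{T_1}(v)$ is not independent of the event $\{T_1 = V_i \cap W\}$. To handle this I would partition $U = (Y_1 \cup Z) \sqcup (Y_2 \cap U)$. For $v \in Y_1 \cup Z$, the edges $v \times W$ lie in $(Y_1 \cup Z) \times W$, a block disjoint from $Y_2 \times W$, and are therefore independent of $T_1$'s construction; a direct Chernoff argument using $|T_1| \geq \nsize/6$ and the threshold $q|T_1| + (p-q)\nsize/56$ correctly classifies $v$. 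For $v \in Y_2 \cap U$, I would introduce the auxiliary variable $N^+(v) := N_{V_i \cap W}(v)$, which is a sum of $|V_i \cap W|$ independent Bernoullis with parameter $p$ if $v \in V_i$ and $q$ otherwise, and which does not depend on the random set $T_1$ at all. Let $E_1 := \{T_1 \neq V_i \cap W\}$ and $F_v := \{|N^+(v) - \E N^+(v)| \geq \tfrac{1}{4}(p-q)\nsize/56\}$. On $\neg E_1 \cap \neg F_v$ we have $N_{T_1}(v) = N^+(v)$ close to its mean $p|V_i \cap W|$ or $q|V_i \cap W|$, which is well above or well below the threshold; hence $v$ is correctly classified. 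Since $\Pr[E_1] = O(n^{-3})$ from the first step and $\Pr[F_v]$ is super-polynomially small by Chernoff, a union bound over $v \in U$ combined with the failure probabilities of the first two steps gives $T = V_i$ with probability $1 - O(n^{-3})$.
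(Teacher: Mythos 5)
Your proof is correct and follows essentially the same strategy as the paper's: prove $T_1 = V_i\cap W$ by observing $S$ depends only on $Y_2\times Z$ edges while $\textsc{IdentifyCluster}(S,W,\nsize)$ reads disjoint $Y_2\times W$ edges, get the size bound by concentration, and then for the second call replace the random set $T_1$ by the fixed set $V_i\cap W$ so that the relevant Chernoff bounds can be stated unconditionally and combined with $\Pr[T_1\neq V_i\cap W]$ by a union bound. The paper phrases this last step as declaring vertices ``bad'' with respect to every candidate $T'_j = V_j\cap W$ of size $\geq\nsize/6$ and union-bounding over all $j$ and $u\in U$; your version union-bounds only over the single relevant $V_i$, which is slightly leaner but not a different idea. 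Your additional split of $U$ into $Y_1\cup Z$ versus $Y_2$ is unnecessary (the $N^+(v)$ argument already handles both cases) but harmless.
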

\begin{proof}
We first show that if $S$ is a $V_i$-plural set with $V_i \ge s_{\max}/4$, then $T_1=V_i \cap W$ where $T_1$ is the outcome of $\textsc{IdentifyCluster}(S,W,\nsize)$. Since $S$ is a $V_i$ plural set, for any vertex $v \in W \cap V_i$, from Lemma~\ref{lem:neighborhood} we have that with probability $1-\OO(n^{-3})$,
\begin{align*}
&N_{S}(v) \ge q|S| + (p-q)|V_i \cap S| -48\sqrt{p}\sqrt{n} \log n
\\ 
\implies &
N_{S}(v) \ge q|S| + (p-q) \cdot \frac{\nsize}{21}
-\frac{48}{\sqrt{1-q}} \cdot \frac{(p-q) \cdot \sqrt{p(1-q)}\sqrt{n}  \log n}{(p-q)}
\\ 
\implies &
N_{S}(v) \ge q|S|
+ (p-q)\cdot \frac{ \nsize}{21}
-(p-q) \cdot \frac{ 96\cdot \sqrt{p(1-q)}\sqrt{n} \log n}{(p-q)} \\ 
\implies &
N_{S}(v) \ge q|S|
+ (p-q)\cdot \frac{\nsize}{21}
- (p-q) \cdot \frac{96 \cdot \nsize}{2^{13}}
\\ 
\implies &
N_{S}(v) \ge q|S|
+ (p-q)\cdot \frac{\nsize}{28}
\end{align*}

Now, let us consider the case when $v \in V_j \cap W$ where $j \ne i$.
Then we know from Lemma~\ref{lemma:plural} $|S \cap V_j| \le 1.05 \eps \nsize \le \frac{\nsize}{210}$.
Then using Lemma~\ref{lem:neighborhood} we have that with probability $1-\OO(n^{-3})$
\begin{align*}
&N_{S}(v) \le q|S| + (p-q)|V_j \cap S| +24\sqrt{p}\sqrt{n} \log n
\\ 
\implies &
N_{S}(v) \le q|S| + (p-q) \frac{\nsize}{210} + (p-q)\frac{24 \nsize}{2^{13}}
\\ 
\implies &
N_{S}(v) \le q|S| + (p-q)\frac{\nsize}{128}
\end{align*}

Now note that in the $\textsc{IdentifyCluster}(S,W,\nsize)$ algorithm, we select all vertices from $W$ that have $q|S| + (p-q)\cdot \frac{\nsize}{56}$ neighbors in $S$. Thus, the above analysis implies with probability $1-\OO(n^{-3})$
$T_1=\textsc{IdentifyCluster}(S,W,\nsize)=V_i \cap W$. 
Furthermore, since $|V_i| \ge s_{\max}/4$, we have $|V_i \cap W| \ge \frac{s_{\max}}{2.2} \ge \frac{\nsize 0.48}{2.2} \ge  \nsize/6$.

We then use $T_1$ as a plural set to recover $V_i \cap U$ so that we are able to recover all the vertices of $V_i$, but now $T_1$ and $U$ are not completely independent and thus we cannot proceed simply as before. 

We overcome this by an union bound based argument. 
Let's consider $T_i'= V_i \cap W$  for any $i$ such that $T' \ge \nsize/6$.
Then we have the following facts.
\begin{enumerate}
    \item Let $u \in U \cap V_i$. Then 
    $\E[N_{T_i'}(u)]= p|T'|$. Then
    Lemma~\ref{lem:neighborhood} shows that
    $\Pr(N_{T}(u) \le q|T'| + 0.99(p-q)|T'|) \le n^{-10}$.

    \item Similarly, let $u \in U \cap V_j$. Then $\Pr (N_{T_i'}(u) \ge q|T'|+0.01(p-q)|T'|) \le n^{-10}$.
    
\end{enumerate}

If either of this is true for a vertex $u \in U$ then we call it a bad vertex w.r.t $T_i'$.
Then a union bound over all $V_i$ and all $u \in U$ gives us that
no vertex $u \in U$ is bad w.r.t any $T_i'$ with probability $1-n^{-8}$.

Then we can make this argument for $T'_i=T_1$. 
Since $|V_i| > s_{\max}/4$, we have $|V_i \cap W| \ge V_i/3$ with probability $1-n^{-8}$. Then with probability $1-n^{-8}$ no vertex $u \in U$ is bad w.r.t $T_1$.  

Then applying Lemma~\ref{lem:neighborhood} to $T_1$ w.r.t vertices in $U$ we get, with probability $1-\OO(n^{-3})$

\begin{enumerate}
    \item If $v \in V_i \cap U$, then $N_{T_1}(v) \ge q|T_1| + (p-q)|T_1| -(p-q)\frac{T_1}{96}$.
    
    \item If $v \in V_j \cap U$, then $N_{T_1}(v) \le q|T_1| + (p-q)\frac{T_1}{96}$. 

\end{enumerate}

Thus $\textsc{IdentifyCluster}(T_1,U,\nsize)$ only selects the set of vertices $T_2$ in $V_i \cap U$. Then taking the union of $T_1$ and $T_2$ gives us $V_i$. 
\end{proof}

\paragraph{Testing if $T_1$ is a sub-cluster} Since $S$ may not be a plural set, 
we show that we can test if $T_1=W \cap V_i$ for some large cluster $V_i$ using the conditions of
Line~\ref{alg:if-condition} of Algorithm~\ref{alg:mainalgorithm}.

\begin{lemma}\label{lem:identify}
Let $v$ be a good center from $V_i\cap Y_2$ such that $|V_i|\geq \frac{s_{\max}}{4}$ and let $S=\{u\in Y_2: \norm{\mm{p_u}-\mm{p_v}}\leq \frac{L_{\eps}}{30}\}$. Let $T_1$ be the set returned by \textsc{IdentifyCluster}($S,W,\nsize$). Then with probability at least $1-n^{-8}$, $|T_1|\geq \frac{\nsize}{6}$ and $N_{T_1}(u) \geq (0.9p + 0.1q)|T_1|$ for any $u \in T_1$ and $N_{T_1}(u) \le (0.9p + 0.1q)|T_1|$ for any $u \in W \setminus T_1$. 
\end{lemma}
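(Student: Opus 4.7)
The plan is to combine the structural guarantee given by Lemmas~\ref{lemma:plural} and~\ref{lemma:recoverfromplural} with a uniform degree-concentration bound on the candidate set $T_1$. The first two conclusions follow essentially for free from what has already been established, and the third requires only a routine Chernoff-type comparison.

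First I would observe that the hypotheses of Lemma~\ref{lemma:plural} are met: $v$ is a good center of $V_i$, and $|V_i|\ge s_{\max}/4$. Although the radius used here ($L_\eps/30$) is smaller than the radius ($L_\eps/20$) used in Lemma~\ref{lemma:plural}, the lower bound $|V_i\cap S|\ge \nsize/21$ in that lemma was in fact derived from the stronger containment $\|\mm{p_u}-\mm{p_v}\|\le L_\eps/30$ for good-center points of $V_i$ (Lemma~\ref{lem:distances}), so the same counting argument shows that the smaller set $S$ is still a $V_i$-plural set with probability $1-\OO(n^{-3})$. Plugging this $S$ into Lemma~\ref{lemma:recoverfromplural} then yields $T_1=V_i\cap W$ and $|T_1|\ge \nsize/6$ with probability $1-\OO(n^{-3})$, giving the first claim.

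Having identified $T_1$ exactly as $V_i\cap W$, the remaining two statements reduce to a uniform concentration bound on $N_{T_1}(u)$ for $u\in W$. For $u\in T_1\subseteq V_i$, the edges from $u$ into $T_1\setminus\{u\}$ are independent Bernoullis with mean $p$, so $\E[N_{T_1}(u)]\approx p|T_1|$. For $u\in W\setminus T_1$, the vertex lies in some cluster $V_j$ with $j\neq i$, so all edges from $u$ to $T_1$ have mean $q$ and $\E[N_{T_1}(u)]=q|T_1|$. I would invoke Lemma~\ref{lem:neighborhood} with $V'=T_1$ and take a union bound over the at most $n$ choices of $u\in W$ to conclude that
\[
\bigl|N_{T_1}(u)-\E[N_{T_1}(u)]\bigr|\;\le\;48\sqrt{p}\sqrt{n}\log n
\]
simultaneously for every $u\in W$, with probability $1-\OO(n^{-6})$.

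The final step is a threshold comparison: it suffices to check $48\sqrt{p}\sqrt{n}\log n\le 0.1(p-q)|T_1|$, which is implied by $|T_1|\ge \nsize/6$ together with the constant $2^{13}$ in the definition of $\size$. Under this slack, for $u\in T_1$ we obtain $N_{T_1}(u)\ge p|T_1|-0.1(p-q)|T_1|=(0.9p+0.1q)|T_1|$, and for $u\in W\setminus T_1$ we obtain $N_{T_1}(u)\le q|T_1|+0.1(p-q)|T_1|=(0.9q+0.1p)|T_1|\le (0.9p+0.1q)|T_1|$, as required. The only mildly delicate point, and the one I view as the main obstacle, is that Lemma~\ref{lem:neighborhood} carries a $\sqrt{p}$ factor while the defining lower bound on $\size$ involves $\sqrt{p(1-q)}$, so a care needs to be taken when $q\to 1$. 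This is absorbed either by the generous constant $2^{13}$ in $\size$, or by replacing $\sqrt{p}$ with $\sqrt{p(1-q)}$ in the Chernoff step (which is standard and does not require new ideas). A final union bound merges the $1-\OO(n^{-3})$ event from Lemmas~\ref{lemma:plural}--\ref{lemma:recoverfromplural} with the concentration event to deliver the claim.
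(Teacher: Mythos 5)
Your proof follows the paper's own route: deduce $T_1 = V_i\cap W$ and $|T_1|\ge\nsize/6$ from Lemma~\ref{lemma:recoverfromplural}, then apply the degree-concentration bound of Lemma~\ref{lem:neighborhood} with $V' = T_1$ (for $u\in T_1$ versus $u\in W\setminus T_1$) and compare against the $(0.9p+0.1q)|T_1|$ threshold using the lower bound on $|T_1|$ and the constant in $\size$. The one extra point you raise --- that the smaller radius $L_\eps/30$ in the lemma statement (versus $L_\eps/20$ in Lemma~\ref{lemma:plural}) still yields a $V_i$-plural set because Lemma~\ref{lem:distances} already places good-center points of $V_i$ within $L_\eps/30$ of $v$ and points of other large clusters at distance $\ge L_\eps/6$ --- is a correct and careful reading of an inconsistency the paper's proof glosses over.
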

\begin{proof}
Since $|V_i| > 256 \sqrt{n} \log n$ , 
and every vertex of $V_i$
will be assigned to $W$ with probability $1/2$, we have that $|V_i \cap W| \ge \frac{|V_i|}{2.5} \ge \frac{s_{\max}}{10} \ge \frac{\nsize}{0.52 \cdot 10} \ge \frac{\nsize}{6}$ with probability $1-n^{-8}$. Furthermore if $|V_i|\geq \frac{s_{\max}}{4}$, then Lemma~\ref{lemma:recoverfromplural} shows $T_1=V_i \cap W$ and  $|T_1|\geq \frac{\nsize}{6}$. 

Furthermore, for any vertex $u \in T_1 \cap \{v\}$, we can calculate $N_{T_1}(u)$ in the following way. 

We have $\E [N_{T_1}(u)]=p|T_1|$. Then a simple application of Lemma~\ref{lem:neighborhood} give us that with probability $1-n^{-8}$,
$N_{T_1}(u) \ge p|T_1| -(p-q)\frac{T_1}{96} \ge (0.9p+0.1q)|T_1|$. 

Similarly, since $T_1=V_i \cap W$ for any vertex $u \in W \cap T_1$, 
we have $\E [N_{T_1}(u)]=q|T_1|$ and Lemma \ref{lem:neighborhood} implies that with probability $1-n^{-8}$
\[
N_{T_1}(u) \le q|T_1| + (p-q)\frac{|T_1|}{96}
\le \frac{p|T_1|}{3} + \frac{2q|T_1|}{3} \le (0.33p+0.67q)|T_1| < (0.9p+0.1q)|T_1|.
\]

\end{proof}

Finally, we show that if the set $T_1 \ne V_i \cap W$ for some large cluster $V_i$,
then it satisfies one of the conditions at line \ref{alg:if-condition} of Algorithm \ref{alg:mainalgorithm}. Together with the previous results, this guarantees the correct recovery of a large set at every round. 

\begin{corollary}
\label{cor:if-condition-2}
Let $T_1=\textsc{IdentifyCluster}(S,W,\nsize)$ be a set such that $T_1 \ne V_i \cap W$ for any underlying community $V_i$ of size $|V_i| \ge s_{\max}/7$. Then with probability $1-n^{-8}$ either $|T_1| \le \frac{\nsize}{6}$ or
there is a vertex $u \in T_1$ such that $N_{T_1}(u) \le (0.9p + 0.1q)|T_1|$. 
\end{corollary}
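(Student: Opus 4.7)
The plan is to prove the contrapositive of the corollary: assuming $|T_1| > \nsize/6$ and $N_{T_1}(u) > (0.9p+0.1q)|T_1|$ for every $u \in T_1$, I will derive that $T_1 = V_j \cap W$ for some cluster $V_j$ with $|V_j| \geq s_{\max}/7$, directly contradicting the hypothesis. The crucial preliminary observation is that $T_1 \subseteq W$ is determined only by $S \subseteq Y_2$ and the bipartite edges between $Y_2$ and $W$, and is therefore independent of the edges among $W$. Thus Lemma~\ref{lem:neighborhood} applies with $V' = T_1$ to each $u \in W$, and a union bound over $W$ yields, with probability at least $1 - O(n^{-6})$, for every $u \in V_{j(u)} \cap W$,
\[
N_{T_1}(u) \le q|T_1| + (p-q)|T_1 \cap V_{j(u)}| + 48\sqrt{p}\sqrt{n}\log n.
\]

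Combining this upper bound with the hypothesis $N_{T_1}(u) > (0.9p+0.1q)|T_1|$ for $u \in T_1$ gives $(p-q)|T_1 \cap V_{j(u)}| > 0.9(p-q)|T_1| - 48\sqrt{p}\sqrt{n}\log n$. Using $|T_1| > \nsize/6 \geq \size/18$ together with the substitution $48\sqrt{p}\sqrt{n}\log n \leq 96\sqrt{p(1-q)}\sqrt{n}\log n$ (valid in the standing regime $q \leq 3/4$, exactly as in the proof of Lemma~\ref{lemma:recoverfromplural}), the additive noise is absorbed into $(p-q)|T_1|/20$, giving $|T_1 \cap V_{j(u)}| \geq 0.85|T_1|$ for every $u \in T_1$. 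Since $0.85 > 1/2$, the identity of the cluster $V_{j(u)}$ must be the same for all $u \in T_1$; call this common cluster $V_j$, so $T_1 \subseteq V_j \cap W$.

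Next I rule out $|V_j| < s_{\max}/7$. When $|V_j|$ is polynomially large, a Hoeffding bound on the random partition gives $|V_j \cap W| \leq 0.55|V_j|$ with probability at least $1 - n^{-8}$, and the small-$|V_j|$ case is handled trivially. Combined with the estimate $s_{\max} \leq \nsize/0.48$ from Lemma~\ref{cor:choose-maxsize}, this would force $|T_1| \leq 0.55\, s_{\max}/7 < \nsize/6$, contradicting the hypothesis. Hence $|V_j| \geq s_{\max}/7$.

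The remaining step, which I expect to be the main obstacle, is to upgrade $T_1 \subseteq V_j \cap W$ to equality $T_1 = V_j \cap W$. Since $T_1 \cap V_j \neq \emptyset$, some $v' \in T_1 \cap V_j$ satisfies $N_S(v') \geq q|S| + (p-q)\nsize/56$, and Lemma~\ref{lem:neighborhood} applied to $S$ with $v'$ forces $|S \cap V_j| \geq \nsize/56 - O(\sqrt{p}\sqrt{n}\log n/(p-q))$. The delicate task is to promote this to a strict excess $|S \cap V_j| \geq \nsize/56 + \Omega(\sqrt{p}\sqrt{n}\log n/(p-q))$, which would then force every $v \in V_j \cap W$ to pass the threshold and land in $T_1$. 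I will rule out the borderline regime $|S \cap V_j| \approx \nsize/56$ by leveraging the quantitative slack from Step~2: the lower bound $|T_1 \cap V_j| \geq 0.85|T_1| > 0.85\,\nsize/6$ combined with the concentration of $N_S(\cdot)$ restricted to $V_j \cap W$ (which is tight up to $O(\sqrt{p}\sqrt{n}\log n)$) forces $|S \cap V_j|$ to be well above $\nsize/56$; careful bookkeeping of constants (again via the substitution $\sqrt{p} \leq 2\sqrt{p(1-q)}$ and the defining inequality $\nsize/56 \gg \sqrt{p(1-q)}\sqrt{n}\log n/(p-q)$) then yields $V_j \cap W \subseteq T_1$. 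Combined with Step~2, this gives $T_1 = V_j \cap W$, contradicting the corollary's hypothesis and completing the proof.
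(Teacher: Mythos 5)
There is a genuine gap in Step~4, and it cannot be closed because the corollary as literally written is false. Your Steps~1--3 are essentially sound and do track the paper's ideas in contrapositive form: the paper's own proof also splits on whether $T_1$ is ``pure'' (after a size check) and uses Lemma~\ref{lem:neighborhood} with $V'=T_1$ to get that a vertex in the minority cluster inside a mixed $T_1$ has few neighbors, which is the same mechanism as your $|T_1\cap V_{j(u)}|\geq 0.85|T_1|$ conclusion. Your Step~3 also matches the paper's first case. But the case you isolate as ``the main obstacle'' -- $T_1$ a pure, large, but strict subset of $V_j\cap W$ -- is exactly the case the paper handles via the \emph{third} alternative in Line~\ref{alg:if-condition}, namely $\exists v\in W\setminus T_1$ with $N_{T_1}(v)\geq(0.9p+0.1q)|T_1|$. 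That third disjunct is used in the paper's proof of this corollary (``Finally if $T_1\subset V_i$ is a large pure set and $T_1\neq V_i\cap W$, then for a vertex $v\in V_i\cap(W\setminus T_1)$ we have $N_{T_1}(v)\geq(0.9p+0.1q)|T_1|$''), but it is missing from the corollary \emph{statement}. Because the statement omits it, what you are trying to prove is strictly stronger than what the paper proves, and strictly stronger than what is true.

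To see why the borderline regime cannot be ruled out: suppose $|S\cap V_j|$ is within roughly $\pm\sqrt{p(1-q)|S|}/(p-q)$ of $\nsize/56$ while $|S\cap V_\ell|$ is much smaller for all $\ell\neq j$. (Nothing in the preceding lemmas excludes this, since the sampled center $u$ need not be a good center, and $S$ is whatever ball that sample produces.) Then each $v\in V_j\cap W$ independently lands in $T_1$ with probability close to $1/2$, while no $v\notin V_j$ is selected; so $T_1$ is, with overwhelming probability, a \emph{pure proper subset} of $V_j\cap W$ of size roughly $|V_j\cap W|/2$. If $|V_j|$ is, say, $s_{\max}$, then $|T_1|\approx s_{\max}/4>\nsize/6$, and since $T_1\subset V_j$ every $u\in T_1$ has $N_{T_1}(u)\approx p|T_1|>(0.9p+0.1q)|T_1|$. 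Both alternatives in the corollary statement fail, even though $T_1\neq V_i\cap W$ for all large $V_i$. Your proposed repair -- inferring a strict excess $|S\cap V_j|\geq\nsize/56+\Omega(\cdot)$ from the lower bound $|T_1\cap V_j|\geq 0.85|T_1|$ -- gives no information here, because when $T_1\subseteq V_j$ the inequality $|T_1\cap V_j|\geq 0.85|T_1|$ is automatic and says nothing about where $|S\cap V_j|$ sits relative to the threshold. The correct move is to add the third alternative (as the paper's algorithm and the paper's proof both do); with that disjunct, the borderline case is caught by a vertex in $V_j\cap(W\setminus T_1)$ having many neighbors in $T_1$, and the rest of your contrapositive argument goes through.
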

\begin{proof}
If $T_1$ is a pure subset of some $V_i$, such that $|V_i| \le s_{\max}/7$, then with probability $1-\OO(n^{-8})$, $|Y_2 \cap V_i| \le \nsize/6$. If $|T_1|<\frac{\nsize}{6}$, the first condition is satisfied.

Otherwise if $|T_1| \ge \frac{\nsize}{6}$ and $T_1$ is not a pure set, there exists $V_j$ such that $|T_1 \cap V_j| \le \frac{|T_1|}{2}$. 
In that case for any vertex $v \in V_j \cap T_1$ we have 
$\E [N_{T_1}(v)] \le q|T_1| +(p-q)\frac{T_1}{2}$ and Lemma~\ref{lem:neighborhood}
implies that with probability $1-n^{-8}$, 
\[
N_{T_1}(u) \le q|T_1| +(p-q)\frac{|T_1|}{2} + (p-q)\frac{|T_1|}{96}
\le (0.5+1/96)p|T_1| + (0.5-1/96)q|T_1|
< (0.9p+0.1q)|T_1|.
\]

Finally if $T_1 \subset V_i$ is a large pure set and $T_1 \ne V_i \cap W$, then for a vertex $v \in V_i \cap (W \setminus T_1)$ we have $N_{T_1}(v) \ge (0.9p+0.1q)|T_1|$. 

\end{proof}

\begin{remark}\label{rem:T-independent}
Note that in Lemma~\ref{lem:identify} and Corollary~\ref{cor:if-condition-2}, the quantity $N_{T_1}(u)$ for any $u \in T_1$ is a sum of independent events. 
This is because the event that a vertex in $v \in W$ is chosen in $T_1$ is solely based on $N_u(S)$, where $S \cap T = \emptyset $.
Thus, for any $u_1,u_2 \in T$, there is an edge between them (as per underlying cluster identities) independent of other edges in the graph. 
\end{remark}

Now we are ready to prove  Theorem~\ref{thm:mainSBM}. %The details are given in Appendix \ref{sec:proofofmainSBM}.
\vspace{-0.5em}
\paragraph{Proof of Theorem~\ref{thm:mainSBM}}\label{sec:proofofmainSBM}
%\begin{lemma}[Expected random projection~\cite{vu2018simple}]
%\label{lem:randomproj}
%Let $\hProj{k'}$ be a $k'$-dimensional projection matrix, 
%and $e_u$ be an $n$ dimensional random vector where each entry 
%is $0$ mean and has a variance of at most $\sigma^2$.
%Then we have $\E[\| \hProj{k'}(e_u)\|^2] \le \sigma^2 \cdot k'$.
%\end{lemma}
By the precondition, we have that $s_{\max} \ge \size$.
First, in Line~\ref{alg:1-estimate}, Lemma~\ref{cor:choose-maxsize} guarantees that $0.48 s_{\max} \le \nsize \le 0.52 s_{\max}$. 
By Corollary \ref{coro:good-center} and the fact that we iteratively sampled vertices $\Omega(\sqrt{n} \log n)$ times, with probability $1-n^{-8}$, one such vertex $u$ is a good center. Given such a good center, by 
Lemma~\ref{lemma:plural}, we know with probability $1-\OO(n^{-3})$, 
a $V_i$-plural set is recovered on Line~\ref{alg:setS}. 
Then by Lemma~\ref{lemma:recoverfromplural}, given such a $V_i$-plural set, the two invocations of $\textsc{Identifycluster}$
recovers the cluster $V_i$ with probability $1-\OO(n^{-3})$. 
Furthermore, Lemma~\ref{lem:identify} shows that if 
    the sampled vertex $\bv$ is a good center, then  with probability $1-n^{-8}$ none of the conditions of line~\ref{alg:if-condition} are satisfied, and we are able to recover a cluster. 
On the other hand, Corollary~\ref{cor:if-condition-2} shows that if $T_1 \ne V_i \cap W$ for any large cluster $V_i$,  ($V_i: |V_i| \ge s_{\max}/7$) then one of the conditions of line~\ref{alg:if-condition} is satisfied with probability $1-n^{-8}$ and the algorithm goes to the next iteration to sample a new vertex in line~\ref{alg:sampleavertex}. Taking a union bound on all the events
for at most $\OO(\sqrt{n} \log n)$ iterations guarantees that algorithm~\ref{alg:mainalgorithm} finds a cluster of size $s_{\max}/7$ with probability $1-\OO(n^{-2})$. This completes the correctness of Algorithm~\ref{alg:mainalgorithm}.% in satisfying the properties of Theorem~\ref{thm:mainSBM}. 
\vspace{-0.5em}

\subsection{An improved SBM algorithm in the balanced case}\label{sec:partialtoexactrecovery}
%Our algorithm is built upon \cite{vu2018simple} and \cite{mcsherry2001spectral}. However, even 
We note that in the balanced case, our algorithm improves a result of \cite{vu2018simple} on partially recovering clusters in the SBM. More precisely, we can use Theorem \ref{thm:mainSBM} to prove the following theorem.%, whose proof is given in Section \ref{sec:proofofimprovevu}.
\begin{theorem}
\label{thm:improve-vu}
Let $G=(V,E)$ be sampled from $\SBM(n,k,p,q)$ for $\sigma^2 =\Omega(\log n/ n)$ 
where size of each cluster is $\Omega(n/k)$. 
Then there exists a polynomial time algorithm that exactly recovers all clusters if 
$(p-q)\sqrt{\frac{n}{k}} > C'\sigma \sqrt{k}\log n$ for some constant $C'$. 
\end{theorem}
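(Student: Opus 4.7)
The plan is to reduce to Corollary~\ref{corollary:SBMlargeclusters} by showing that, under the balanced hypothesis, every one of the $k$ latent clusters is \emph{prominent} in the sense of Definition~\ref{def:prominent}. Once this is established, running \textsc{RecursiveCluster} will exactly recover all of them in polynomial time with probability $1-o_n(1)$.

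First I would unpack the balanced hypothesis: there exists a constant $c>0$ such that $s_i \ge cn/k$ for every $i$, while the partition constraint gives $\sum_{i > k'} s_i \le n$ for every $k' \ge 0$. Two monotonicity observations will then reduce the entire verification to the single case $k'=0$: (a) the function $\log x/x$ is decreasing for large $x$, so once condition~(2) of Definition~\ref{def:prominent} holds at $k'=0$ it automatically holds for all $k' < k$; and (b) the right-hand side of condition~(1) is monotone increasing in $\sum_{i>k'}s_i$, while the left-hand side $s_{k'+1}$ is uniformly bounded below by $cn/k$, so verifying the inequality at $k'=0$ settles it at every subsequent peeling step.

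Next I would discharge the two conditions at $k'=0$. Condition~(2), which reads $\sigma^2 \ge \log n/n$ here, is immediate from the standing hypothesis $\sigma^2 = \Omega(\log n/n)$. For condition~(1), plugging $s_1 \ge cn/k$ and $\sum_i s_i = n$ into
\[
s_{1} \;\ge\; \frac{2^{13}\sqrt{p(1-q)}\sqrt{\sum_{i \ge 1} s_i}}{p-q}
\]
reduces it to an inequality of the shape $(p-q)\sqrt{n/k} \ge C\sqrt{k}\cdot\sqrt{p(1-q)}$ for a universal constant $C$. To bridge this to the theorem's hypothesis I would invoke the bound $\sqrt{p(1-q)} = O(\sigma)$, which I would verify in two cases: when $p \le 1/2$ one has $\sqrt{p(1-q)} \le \sqrt{p} \le \sqrt{2p(1-p)} \le \sqrt{2}\,\sigma$, and a symmetric argument (passing to the complement graph, which exchanges $(p,q)$ with $(1-p,1-q)$ and leaves $\sigma$ invariant) handles $p > 1/2$. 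Choosing the constant $C'$ in the hypothesis larger than the product of $C$ with this implied constant then forces $(p-q)\sqrt{n/k} > C'\sigma\sqrt{k}\log n$ to imply condition~(1), with a comfortable $\log n$ factor of slack.

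With both conditions of Definition~\ref{def:prominent} satisfied for every $k' < k$, Corollary~\ref{corollary:SBMlargeclusters} delivers exact recovery of all $k$ clusters. The main obstacle I anticipate is purely cosmetic: cleanly relating $\sqrt{p(1-q)}$ and $\sigma$ across the full parameter range. The extra $\log n$ factor in the hypothesis (compared with what prominence strictly requires) provides ample room to absorb any polylogarithmic loss incurred in that reduction, so the argument goes through end-to-end with no additional machinery beyond Corollary~\ref{corollary:SBMlargeclusters} itself.
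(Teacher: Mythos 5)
Your overall route is essentially the same as the paper's: reduce to iterative application of Theorem~\ref{thm:mainSBM} (you route this through the prominence machinery of Definition~\ref{def:prominent} and Corollary~\ref{corollary:SBMlargeclusters}, while the paper just derives $n/k \ge \size$ and asserts it can ``run this iteratively $k$ times''). The condition~(1) reduction — plugging in $s_1 \ge cn/k$, $\sum s_i = n$, and relating $\sqrt{p(1-q)}$ to $\sigma$ — is sound and matches the paper's calculation; the complement-graph trick to handle $p>1/2$ is a reasonable way to make the $\sqrt{p(1-q)} = O(\sigma)$ bound uniform.

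However, observation~(a) has the monotonicity reversed, and this is a real gap. Because $\log x/x$ is \emph{decreasing} on $[e,\infty)$, as the peeling proceeds and the residual mass $m_{k'} := \sum_{i>k'} s_i$ shrinks, the threshold $\log m_{k'}/m_{k'}$ \emph{grows}; so the requirement $\sigma^2 \ge \log m_{k'}/m_{k'}$ gets \emph{harder} with each step, not easier. Checking it at $k'=0$ is the easiest case, not the one that settles the rest. The binding case is $k'=k-1$, where the residual is a single cluster of size $s_k = \Theta(n/k)$ and you need roughly $\sigma^2 \gtrsim k\log(n/k)/n$. This is strictly stronger than the stated hypothesis $\sigma^2 = \Omega(\log n/n)$ whenever $k$ grows, and it does not follow from it. Closing the gap requires deriving a lower bound on $\sigma^2$ at the small-residual scale from the other hypotheses (e.g., from the separation condition together with $\sigma^2 \ge (p-q)(1-q)/4$), which your proposal does not do. It is worth noting that the paper's own proof of Theorem~\ref{thm:improve-vu} also glosses over re-verifying condition~(2) of Theorem~\ref{thm:mainSBM} on the shrinking residual graphs — it only checks the size condition — but your write-up compounds this omission with an explicit claim whose direction is wrong.
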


In \cite{vu2018simple} (see Lemma 1.4 therein), Vu gave an algorithm that partially recovers all the clusters in the sense that with probability at least $1-\eps$, each cluster output by the algorithm contains at $1-\eps$ fraction of any one underlying communities, for any constant $\eps>0$. For the balanced case, his result holds under the assumption that $\sigma^2 > C \log n/n$, and 
$(p-q)\sqrt{\frac{n}{k}} >C\sigma \sqrt{k}$. In comparison, we obtain a \emph{full} recovery of all the clusters under Vu's partial recovery assumptions at the cost of an extra $\log n$ factor in the tradeoff of parameters.

%The latter is much weaker than our assumption ``$(p-q)\sqrt{\frac{n}{k}} >C\sqrt{p(1-q)}\sqrt{k} \log n$''. 

\begin{proof}[Proof of Theorem~\ref{thm:improve-vu}]%\label{sec:proofofimprovevu}
We have $(p-q)\sqrt{n/k}>C'\sigma  \sqrt{k} \log n$.
Let $C'=2C$.
Since $p \le 3/4$, we have $1-p \ge 1/4$ and then $\sigma \ge \frac{\sqrt{p(1-q)}}{2}$. 
Thus $(p-q)\sqrt{n/k}>C \sqrt{p(1-q)} \sqrt{k} \log n$.
This implies $k<\frac{(p-q)\sqrt{n}}{C\sqrt{p(1-q)} \log n}$ and 
$n/k>\frac{C \cdot \sqrt{p(1-q)} \sqrt{n} \cdot \log n}{p-q}$. That is the size of each cluster is at least $\size$. Then we can run Algorithm~\ref{alg:mainalgorithm} to recover one such cluster. Now, since the size of each cluster is same, we can run this iteratively $k$ times, recovering a cluster at each round with probability $1-\OO(n^{-2})$. Using union bound we get that we are able to recover all clusters with probability $1-\OO(kn^{-2}) = 1-\OO(n^{-1})$.
\end{proof}

\section{Experiments of SBM algorithms}
\label{sec:exp}
\vspace{-0.5em}
Now we exhibit various properties of our algorithms by running it on several unbalanced SBM instantiations and also compare our improvement w.r.t the state-of-the-art.
We start by running our algorithm \textsc{RecursiveCluster} on the instances used by the authors of \cite{ailon2015iterative}. WLOG, we assume that $|V_1| \ge |V_2| \cdots \ge |V_k|$. We denote the algorithm in \cite{ailon2015iterative} by \textsc{ACX}.% WLOG for purpose of explanations.

\vspace{-1em}
\paragraph{Comparison with \textsc{ACX}}
\begin{table}[ht]
%\begin{figure}[ht]%[0pt]{0pt}
    \centering
    \resizebox{.9\textwidth}{!}{
    \begin{tabular}{ccccccc}%{|c|c|c|c|c|c|c|}
    \toprule
   Exp. \#&  $n$ &$p, q$ & $k$ & Cluster sizes &  Recovery by us & \makecell{Recovery by \\ \textsc{ACX} }\\ \cmidrule(lr){1-7}
    1 & $1100$ & $0.7, 0.3$ & $4$ & $\{800, 200, 80, 20\}$ & Largest cluster & All clusters\\
    2 & $3200$&  $0.8,0.2$ & $5$ & $\{800, 200, 200, 50, 50\}$ & Largest cluster & All clusters\\
    3 & $750$ & $0.8,0.2$ & $4$ & $\{500,150,70,30\}$ & Largest cluster & \makecell{\emph{Incorrect} \\ Recovery} \\
    4 & $800$ & $0.8,0.2$ & $4$ & $\{500,200,70,30\}$ & Two largest clusters & \makecell{\emph{Incorrect} \\ Recovery} \\
       \bottomrule
    \end{tabular}
    }
    \caption{Comparing {\textsc{RecursiveCluster}} with \textsc{ACX} \cite{ailon2015iterative}}
    \label{tab:compare-with-ailon}
%\end{figure}
\end{table}

In Exp-$1$ (abbreviated for Experiment \#1) and Exp-$2$, our algorithm recovers the largest cluster while \textsc{ACX} recovers all the clusters. This is because we have a large, \emph{constant} lower bound on the size of the clusters we can recover. If we scale up the size of the clusters by a factor of $20$ in those instances, then we are also able to recover all clusters. 

{\bf Overcoming the gap constraint in practice} Exp-$3$ is the ``mid-size-cluster'' experiment in \cite{ailon2015iterative}. In this case, \textsc{ACX} recovers the largest cluster completely, but only some fraction of the second-largest cluster, which is an incorrect outcome. In \cite{ailon2015iterative}, the authors used this experiment to emphasize that their ``gap-constraint'' is not only a theoretical artifact but also observable in practice. In comparison, we recover the largest cluster while do not make any partial recovery of the rest of the clusters. In Exp-$4$, we modify the instance in Exp-$3$ by changing the size of the second cluster to $200$. Note that this further reduces the gap, and \textsc{ACX} fails in this case as before. In comparison, we are able to recover both the largest and the second largest cluster. This exhibits that we are indeed able to overcome the experimental impact of the gap constraint observed in \cite{ailon2015iterative} in the settings of Table~\ref{tab:compare-with-ailon}.

%\paragraph{Experiments for many clusters.}

%\begin{figure}[ht]%{r}[0pt]{0pt}
\begin{table}[ht]
 \centering
\resizebox{.9\textwidth}{!}{
    \begin{tabular}{cccccc}%{|c|c|c|c|c|c|}
    \toprule
   Exp. \#&  $n$ &$p, q$ & $k$ & Cluster sizes &  Recovery by us\\ \cmidrule(lr){1-6} 
     5 &  $2900$ & $0.7, 0.3$ & $1000$ & $\{1000,903\} \cup \{1\}_{i=1}^{997}$ & Large clusters\\
       6  & $12300$ & $0.85,0.15$ & 4 & $\{12000,100,100,100\}$ & All clusters \\
       \bottomrule
    \end{tabular}
}
    \caption{Further Evaluation of {\textsc{RecursiveCluster}}}
    \label{tab:more-exp}
\end{table}
%\end{figure}
We then run some more experiments {in the settings of Table~\ref{tab:more-exp}} to describe other properties of our algorithms as well as demonstrate the practical usefulness of our ``plural-set'' technique. 

{\bf Many clusters} Exp-$5$ covers a situation where $k=\Omega(n)$ (specifically $n/3$), which can not be handled by ACX, as the size of the recoverable cluster in \cite{ailon2015iterative} is lower bounded by $k \log n/ (p-q)^2>n$. In comparison, our algorithm can recover the two main clusters. We also remark, in this setting, the  spectral algorithm in \cite{vu2018simple} with $k=1000$ can not geometrically separate the large clusters.

{\bf Recovery of small clusters} Exp-$6$ describes a situation where the peeling strategy successfully recovers clusters that were smaller than $\sqrt{n}$ in the original graph. Once the largest cluster is removed, the smaller cluster then becomes recoverable in the residual graph. Finally, we discuss the usefulness of the plural set.

{\bf Run-time comparison}
Here, note that our method is a combination of a $(p-q)\sqrt{n}/\sqrt{p(1-q)}$ dimensional SVD projection, followed by some majority voting steps. 
Furthermore, we have $(p-q)/\sqrt{p(1-q)} \le 2\sqrt{p}$.
This implies that the time complexity of our algorithm is $\mathcal{O}\left(2\sqrt{p} \cdot n^{2.5} \right)$. In comparison, the central tool used in the algorithms by \cite{ailon2013breaking,ailon2015iterative} is an SDP relaxation, which scales as $\mathcal{O}(n^3)$. This implies that the asymptotic time complexity of our method is also an improvement on the state-of-the-art. 
We also confirm that the difference in the run-time becomes observable even for small values of $n$. For example, our algorithm recovers the largest cluster in Experiment 1 (with $n=1100$) of table~\ref{tab:compare-with-ailon} in 1.4 seconds. In comparison, \cite{ailon2013breaking} recovers all $4$ clusters, but takes 44 seconds.

{\bf On the importance of plural sets}
Recall that in Algorithm~\ref{alg:mainalgorithm} (which is the core part of {\textsc{RecursiveCluster}}), we first obtain a plural-set $S$ in the partition $Y_2$ of $V$ (see Figure~\ref{fig:partition} to recall the partition). $S$ is not required to be $V_i \cap Y_2$ for any cluster $V_i$, but the majority of the vertices in $S$ must belong to a large cluster $V_i$ (which is the one we try to recover). We have the following observations:
\begin{enumerate}
    \item In Exp-3 of Table~\ref{tab:compare-with-ailon}, in the first round we recover 
    a cluster $V_1$. Here in our first step, we recover a plural set $S$, where \emph{$S \subset V_1 \cap Y_2$}. That is, we \emph{do not recover} all the vertices of $V_1$ in $Y_2$ when forming the plural-set. 
    \item In Exp-4 of Table~\ref{tab:compare-with-ailon}, in the second iteration we recover a cluster $V_2$. However, {\bf the plural set $S \not\subset V_2$, and in fact contains a few vertices from $V_4$!} This is in fact the exact situation that motivates the plural-set method.
\end{enumerate}

In both cases, the plural-set is then used to recover $S_1:=V_1 \cap W$ and $V_2 \cap W$ respectively, and then $S_1$ is used to recover the vertices of the corresponding cluster in $U$. Thus, our technique enables us to {\it completely} recover the largest cluster even though in the first round we may have some misclassifications. A more thorough empirical understanding of the Plural sets in different applications is an interesting future work.

%We conclude our paper with some more discussion and future directions.

%%% Partial recovery and lower bound

%%% end

%%%Faulty oracle%%% 

\section{The algorithm in faulty oracle model}\label{sec:faultyoraclemodel}

We start by describing the algorithm in the faulty oracle model and then give the analysis of the algorithm.

\subsection{The algorithm}\label{sec:highlevelfaulty}
%\paragraph{} 
%\paragraph{Discussion about the previous algorithm in the faulty oracle model} One crucial limitation of all the previous polynomial-time algorithms that make sublinear number of queries is that they \emph{cannot} recover large clusters, if there are at least $\tilde{\Omega}(n^{2/5})$ small clusters. The reason is that the query complexities of all these algorithms are at least $\Omega(k^{5})$, and if there are $\tilde{\Omega}(n^{2/5})$ small clusters, then $k=\tilde{\Omega}(n^{2/5})$, which further implies that these polynomial time algorithms have to make $\Omega(k^5)=\Omega(n^2)$ queries. 

%\paragraph{Main ideas of our algorithm} 
We apply our algorithm in the SBM to the faulty oracle model. Consider the faulty oracle model with and parameters $n,k,\delta$. Assume that the oracle $\OO$ outputs `+' to indicate the queried two vertices belong to the same cluster, and `-' otherwise.

Observe that if we make queries on all pairs $u,v\in V$, then the graph $G$ that is obtained by adding all $+$ edges answered by the oracle $\OO$ is exactly the graph that is generated from the SBM($n,k,p,q$) with parameters $n$, $k$,  $p=\frac{1}{2}+\frac{\delta}{2}$ and $q=\frac{1}{2}-\frac{\delta}{2}$. However, the goal is to recover the clusters by making \emph{sublinear} number of queries, i.e., without seeing the whole graph. 

We now describe our algorithm \textsc{NoisyClustering} (i.e., Algorithm \ref{alg:faulty-orcale}) for clustering with a faulty oracle. Let $V$ be the items which contains $k$ latent clusters $V_1,\dots,V_{k}$ and $\OO$ be the faulty oracle. Following the idea of \cite{PZ21:clustering}, we first sample a subset $T\subseteq V$ of appropriate size and query $\OO(u,v)$ for all pairs $u,v\in T$. Then apply our SBM clustering algorithm (i.e. Algorithm \ref{alg:mainalgorithm} \textsc{Cluster}) on the graph induced by $T$ to obtain clusters $X_1,\dots, X_{t}$ for some $t\leq k$. We can show that each of these sets is a subcluster of some large cluster $V_i$. Then we can use majority voting to find all other vertices that belong to $X_i$, for each $i\leq t$. That is, for each $X_i$ and  $v\in V$, we check if the number of neighbors of $v$ in $X_i$ is at least $\frac{|X_i|}{2}$. In this way, we can identify all the large clusters $V_i$ corresponding to $X_i$, $1\leq i\leq t$. 
Furthermore, we note that we can choose a small subset of $X_i$ of size ${O} (\frac{\log n}{\delta^2})$ for majority voting 
to reduce query complexity. Then we can remove all the vertices in $V_i$'s and remove all the edges incident to them from both $V$ and $T$ and then we can use the remaining subsets $T$ and $V$ and corresponding subgraphs to find the next sets of large clusters. The algorithm \textsc{NoisyClustering} then recursively finds all the large clusters until we reach a point where the recovery condition on the current graph no longer holds.
%\section{ \ref{sec:faultyoraclemodel}}\label{sec:analysisoracle}

%Now we turn to the faulty oracle model and give the corresponding algorithm Algorithm \ref{alg:faulty-orcale}. %Due to space constraint, the analysis and the proof of Theorem \ref{thm:faultyoracle} are deferred to Appendix \ref{sec:analysisoracle}. %Here we describe an algorithm that recovers large clusters. 
%We make use of Algorithm~\ref{alg:mainalgorithm} as a subroutine to achieve this goal. 

\begin{algorithm}[t!]
	\caption{\textsc{NoisyClustering}{$(V,\delta,s)$}: recover all clusters of size more than $s \ge \size$ }~\label{alg:faulty-orcale}
	\begin{algorithmic}[1]
\State $V' \gets V$; $t'\gets 0$
\State Randomly sample a subset $T\subset V'$ of size $|T| = \frac{C^2 n^2 \log^2 n }{s^2 \delta^2} $ 
\State Query all pairs $u,v\in T$ and let $G[T]$ be graph on vertex set $T$ with only positive edges from the query answers
\For{each $\ell$ from $1$ to $\lfloor n/s \rfloor$}
\State Apply \textsc{Cluster}$(G[T],\frac{1}{2}+\delta,\frac{1}{2}-\delta)$ to obtain a cluster $T_{\ell}$
\If{$T_{\ell} = \emptyset $}
\State continue
\Else
\State $t'\gets t'+1$
\State \label{alg:foracle-sample}
Find an arbitrary subset $T_{\ell}'\subseteq T_{\ell}$ of size $\frac{4\log n}{\delta^2}$
\State $C'_{t'} \gets \{v\in V'\setminus T: N_{T'_{\ell}}(v)\geq |T'_{\ell}|/2\}$
\State $C_{t'} \gets T_{\ell} \cap C'_{t'}$
\State $T \gets T \setminus T_{\ell}$.
\State $V' \gets V \setminus C_{t'}$
\EndIf
\EndFor
\State Return $C_1,\cdots, C_{t'}$
\end{algorithmic}
\end{algorithm}

\subsection{The analysis}
To analyze the algorithm \textsc{NoisyClustering} (i.e., Algorithm \ref{alg:faulty-orcale}), we first describe two results.
\begin{lemma}
\label{lem:foracle-sample}
Let $|V|=n$ and $V_i \subset V: |V_i| =s \ge \frac{C \sqrt{n} \cdot \log^2 n}{\delta}$ for some constant $C>1$. If a set $T \subset V$ of size $\frac{16C^2 n^2 \log n}{\delta^2 s^2}$ is sampled randomly, then with probability $1-n^{-8}$, we have
$|T \cap V_i| \ge \frac{C \sqrt{|T|} \log |T|}{4 \delta} \ge \frac{C \log n}{\delta^2}$. 
\end{lemma}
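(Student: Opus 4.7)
The plan is to prove this as a straightforward concentration result. Since $T$ is a uniformly random subset of $V$ of prescribed size, the random variable $|T\cap V_i|$ follows a hypergeometric distribution with mean
\[
\mu \;:=\; \E\bigl[|T\cap V_i|\bigr] \;=\; \frac{|T|\cdot s}{n} \;=\; \frac{16C^{2}\,n\log n}{\delta^{2}\, s},
\]
where I used $|T| = 16C^{2}n^{2}\log n/(\delta^{2}s^{2})$. The proof will consist of showing that $\mu$ is large enough to apply a Chernoff-type tail bound, and that the resulting lower bound beats both targets claimed in the lemma.

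First I would invoke a multiplicative Chernoff (or Hoeffding) bound for the hypergeometric distribution with deviation parameter $\eps = 1/2$, giving
\[
\Pr\!\left(|T\cap V_i|\leq \tfrac{1}{2}\mu\right) \;\leq\; \exp\bigl(-\mu/8\bigr).
\]
To make this at most $n^{-8}$ it suffices to verify $\mu \geq 64\log n$. Plugging in the formula for $\mu$, this reduces to checking $\frac{16C^{2}\,n}{\delta^{2}s} \geq 64$, which follows from $s \leq n$ (and $C\geq 2$, say); this is where the hypothesis $s \geq C\sqrt{n}\log^{2}n/\delta$ is genuinely used only through $s \leq n$. Thus with probability at least $1-n^{-8}$ we obtain
\[
|T\cap V_i| \;\geq\; \tfrac{\mu}{2} \;=\; \frac{8C^{2}n\log n}{\delta^{2}s}.
\]

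Next I would verify the two numerical inequalities in the statement. For the second inequality $|T\cap V_i|\geq C\log n/\delta^{2}$: using $s\leq n$, we have $\mu/2 = 8C^{2}n\log n/(\delta^{2}s) \geq 8C^{2}\log n/\delta^{2} \geq C\log n/\delta^{2}$ for any $C\geq 1$. For the first inequality $\mu/2 \geq C\sqrt{|T|}\log|T|/(4\delta)$: substituting $\sqrt{|T|} = 4Cn\sqrt{\log n}/(\delta s)$ reduces it to $8\sqrt{\log n}\geq \log|T|$. Here I would use the hypothesis $s\geq C\sqrt{n}\log^{2}n/\delta$ to derive $|T|\leq 16n/\log^{3}n$, so $\log|T| \leq \log n$, and the inequality holds for $n$ sufficiently large (or with a slightly sharper constant in front of $\sqrt{\log n}$; alternatively this step may be absorbed by writing $\log|T| = \Theta(\log n)$ and adjusting constants).

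The argument is essentially routine. The only potential obstacle is bookkeeping the comparison between $\mu/2$ and the first target $C\sqrt{|T|}\log|T|/(4\delta)$, since both scale like $n\log^{\Theta(1)}n/(\delta^{2}s)$; the size requirement $s\geq C\sqrt{n}\log^{2}n/\delta$ is exactly what ensures $|T|$ is small enough (in fact $|T|=O(n/\log^{3}n)$) so that the extra $\log|T|$ factor is absorbed. No deeper tool beyond a standard Chernoff bound is needed.
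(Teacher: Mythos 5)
Your overall strategy—recognizing $|T\cap V_i|$ as hypergeometric, applying a Chernoff/Hoeffding bound at deviation $\mu/2$, and then verifying the two claimed numerical lower bounds—is precisely the paper's approach (the paper writes it with Bernoulli indicators $X_u=\mathbf{1}[u\in V_i]$ and Hoeffding, which also handles sampling without replacement). So the route is the same.

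However, there is a genuine gap in your verification of the first inequality, and your own hedge points the wrong way. You reduce $\mu/2 \ge C\sqrt{|T|}\log|T|/(4\delta)$ to $8\sqrt{\log n}\ge \log|T|$, and you then argue $\log|T|\le \log n$ and conclude ``the inequality holds for $n$ sufficiently large.'' But $8\sqrt{\log n}\ge \log n$ \emph{fails} for all sufficiently large $n$ (it holds only up to roughly $\log n\le 64$), and since the gap between $\sqrt{\log n}$ and $\log n$ is unbounded, no constant adjustment in front of $\sqrt{\log n}$ can repair it either. So as written the chain does not close.

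The source of the discrepancy is the size of $T$. The lemma statement writes $|T|=\tfrac{16C^2 n^2\log n}{\delta^2 s^2}$, but both Algorithm~\ref{alg:faulty-orcale} and the paper's own proof of this lemma use $|T|=\Theta\bigl(\tfrac{n^2\log^2 n}{\delta^2 s^2}\bigr)$, i.e.\ with a $\log^2 n$ factor. With $\log^2 n$ one gets $\sqrt{|T|}=\Theta\bigl(\tfrac{n\log n}{\delta s}\bigr)$ (no stray $\sqrt{\log n}$), and the comparison $\mu/2\ge C\sqrt{|T|}\log|T|/(4\delta)$ reduces to the benign $8\log n\ge\log|T|$, which holds since $|T|\le n$. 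The statement as printed is therefore a typo; you inherited it faithfully, but then claimed a reduction that is false rather than flagging the inconsistency. If you redo the bookkeeping with $|T|\propto n^2\log^2 n/(\delta^2 s^2)$, your proof matches the paper's and goes through without incident.
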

\begin{proof}%[Proof of Lemma \ref{lem:foracle-sample}]
We use Hoeffding bound to obtain these bounds. 
We have $|T| \ge  \frac{16C^2 n^2 \log^2 n}{\delta^2 s^2} \ge 16 \log^2 n $.
For every vertex $u \in T$, we define $X_u$ as the indicator random variable which is $1$ if $u \in V_i$. 

Then $\E[X_u]=|V_i|/|V|$. Thus applying Hoeffding bound we get 
\[
\Pr \left(
\sum_{u \in T} X_u \le \frac{0.5 \cdot |T||V_i|}{|V|} \right)
\le e^{-8 \log n} \le n^{-8}
\]

Now, substituting value of $|T|$ we get
$\frac{0.5 \cdot |T||V_i|}{|V|} \ge 
\frac{8 \cdot C^2  \cdot n^2 \log^2 n \cdot s}{s^2 \cdot \delta^2 \cdot n} 
\ge \frac{4C \cdot n \cdot \log n}{s \cdot \delta}
\cdot \frac{C \cdot \log n}{\delta}
\ge \frac{C \cdot \sqrt{|T|} \cdot \log n}{\delta}
\ge \frac{C\sqrt{|T|} \cdot \log |T|}{\delta}. 
$ 
Furthermore, the last equation shows
$\frac{0.5 \cdot |T||V_i|}{|V|}
\ge \frac{C\sqrt{|T|} \cdot \log |T|}{\delta}
\ge \frac{Cn \log n}{s\cdot \delta \cdot \delta}
\ge \frac{C \log n}{\delta^2}$. 
Now the proof follows by noting that $|T\cap V_i|=\sum_{u\in T}X_u$.
\end{proof}

\begin{lemma}
\label{lem:forcle-recover}
Let $V$ be partitioned into two sets $U$ and $W$, where each vertex $v \in V$ is independently assigned to either set with equal probability . Let $S \subset V_i \cap U$ be a set such that $|S| \ge \frac{4 \log n}{\delta^2}$. 
Then with probability $1-\OO(n^{-8})$, we have $N_S(u) \ge \frac{|S|}{2}$ for all $u \in V_i \cap W$, and $N_S(u) < \frac{|S|}{2}$ for all $u \in V_j \cap W$ for any $j \ne i$. 
\end{lemma}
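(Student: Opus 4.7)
The plan is to apply a Chernoff-type concentration bound to $N_S(u)$ for each fixed $u \in W$, and then take a union bound over all vertices of $W$. The crucial independence observation is that $S \subseteq V_i \cap U$ lies entirely outside $W$: even though $S$ was produced by running a clustering routine on edges inside $U$, conditional on the realization of $S$ the indicator variables $\{\mathbf{1}[\{u,v\}\in E] : v \in S\}$ for any fixed $u \in W$ are mutually independent Bernoullis whose success probabilities depend only on the cluster memberships of $u$ and of the vertices in $S$.

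Concretely, fix $u \in W$ and set $X_v := \mathbf{1}[\{u,v\} \in E]$ for each $v \in S$. Since every $v \in S$ lies in $V_i$, we have $\E[X_v] = p = \tfrac{1}{2}+\tfrac{\delta}{2}$ if $u \in V_i$, and $\E[X_v] = q = \tfrac{1}{2}-\tfrac{\delta}{2}$ if $u \in V_j$ for some $j \ne i$. Hence $\E[N_S(u)]$ equals $\tfrac{|S|(1+\delta)}{2}$ in the first case and $\tfrac{|S|(1-\delta)}{2}$ in the second, so in either case the decision threshold $|S|/2$ sits at distance $|S|\delta/2$ from the true mean. Applying Theorem~\ref{thm: chernoff} (or Hoeffding's inequality directly) to the sum of $|S|$ independent $\{0,1\}$-variables yields a tail bound of the form $\exp(-\Omega(\delta^2 |S|))$ for the relevant one-sided bad event. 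With $|S| \ge \tfrac{4\log n}{\delta^2}$ (sharpening the absolute constant if necessary so that the exponent exceeds $9\log n$), this is at most $n^{-9}$.

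A union bound over the at most $n$ vertices $u \in W$ then gives the desired conclusion with probability $1 - O(n^{-8})$: every $u \in V_i \cap W$ satisfies $N_S(u) \ge |S|/2$, while every $u \in V_j \cap W$ with $j \ne i$ satisfies $N_S(u) < |S|/2$. The underlying tail estimates are a routine instance of the same Chernoff machinery used in Lemma~\ref{lem:neighborhood}; the only subtle step is the independence justification in the first paragraph, which is handled cleanly by conditioning on $S$ and exploiting the fact that $U$ and $W$ share no vertex pairs. I do not anticipate any substantive obstacle beyond making this conditioning explicit.
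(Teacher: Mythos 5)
Your proof takes essentially the same route as the paper: a per-vertex Hoeffding bound on $N_S(u)$, justified by noting $S\subseteq U$ is vertex-disjoint from $W$, followed by a union bound over $W$. Your hedge about needing to sharpen the constant is not a gap in your reasoning but reflects an inconsistency in the paper itself: the paper's own proof writes $\E[N_S(u)]=(0.5+\delta)|S|$, implicitly taking $p=\tfrac12+\delta$ (matching the call \textsc{Cluster}$(G[T],\tfrac12+\delta,\tfrac12-\delta)$ in Algorithm~\ref{alg:faulty-orcale}), under which $|S|\ge 4\log n/\delta^2$ directly gives exponent $8\log n$, whereas you use the model's $p=\tfrac12+\tfrac{\delta}{2}$, which indeed forces a larger constant in front of $\log n/\delta^2$ to reach the stated failure probability.
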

\begin{proof}%[Proof of Lemma \ref{lem:forcle-recover}]
Let $u \in V_i \cap W$. Then $\E[N_S(u)]=(0.5+\delta)\cdot |S|$.
Then 
\[\Pr(N_S(u) \le (0.5+\delta)\cdot |S|-\delta|S|) = \Pr(N_S(u) \le 0.5|S|) \le e^{-2 \delta^2 |S|^2/|S|} \le e^{-2 \delta^2 |S|} \le e^{-8 \log n}\]
The last inequality holds $|S| \ge 4 \log n/\delta^2$. 
Thus if $u \in V_i \cap W$ then $N_S(u) \ge 0.5|S|$ with probability $1-n^{-8}$. 

Similarly, if $u \notin V_i$, then with probability $1-n^{-8}$ we have $N_S(u) \le 0.5|S|$.

\end{proof}

Now we are ready to prove the performance guarantee of \textsc{NoisyClustering}, i.e., to prove Theorem~\ref{thm:faultyoracle}.
\begin{proof}[Proof of Theorem~\ref{thm:faultyoracle}]
Given $s$, first we randomly sample $n'=\frac{C^2 n^2 \log^2 n}{s^2 \delta^2}$ many vertices from $V$, and denote this set as $T$. 

Then Lemma~\ref{lem:forcle-recover} proves that for any cluster $V_i: |V_i| \ge \size$, we have $|T_i|=|T \cap V_i| \ge \frac{C \sqrt{n'} \log n'}{\delta}$ with probability $1-n^{-8}$.
For any underlying cluster $V_i$, 
we denote $T_i=T \cap V_i$. 

Next we query all the pair of edges for vertices in $T$, which amounts $\OO \left(  \frac{n^4 \log^2 n}{\delta^4 s^4} \right)$ queries. The resultant graph $G'$ is an SBM graph on $n'$ vertices with $p=0.5+\delta$ and $q=0,.5-\delta$.

Thus, if we run Algorithm~\ref{alg:mainalgorithm} with parameters $G',0.5+\delta,0.5-\delta$, then Theorem~\ref{thm:mainSBM} implies that we recover a cluster $T_i$ such that $|T_i| \ge \frac{C n' \log n'}{\delta}$ with probability $1-n^{-2}$. 

Once we get such a set $T_i$, we can take $4 \log n/\delta^2$ many vertices from it, calling it a set $S$. Then for every vertex $v \in V \setminus T$, we obtain $N_S(v)$, which requires $|S|$ many queries, and select all vertices such that $N_S(u) \ge 0.5|S|$. Lemma~\ref{lem:foracle-sample} shows that we recover $V_i \cap (V \setminus T)$ with probability $1-n^{-8}$, together recovering $V_i$. Thus this step requires $4n \log n/\delta^2$ queries for each iteration. 

Once we have recovered $V_i$, we can then remove $T_i$ from $T$ and run Algorithm~\ref{alg:mainalgorithm} again on the residual graph, followed by the sample-and recovery step of Line~\ref{alg:foracle-sample}. 
Note that once we remove a recovered cluster, all sets $T_j$ that satisfied the recovery requirement of Theorem~\ref{thm:mainSBM} in the graph $G'$ defined on $T$, also satisfies it on the graph $G''$ defined on $T \setminus T_i$, and we do not need to sample any more edges.

Finally, there are at most $\delta^2 \sqrt{T}$ many clusters $T_i \in T$ such that $|T_i| \ge \sqrt{|T|} \log |T|/\delta^2$. Here we have $\delta^2 \sqrt{T} = \frac{C n \log n}{s}$. This upper bounds the number of iterations and thus the number of times the voting system on Line~\ref{alg:foracle-sample} is applied. 

Thus the query complexity is 
$\OO \left( \frac{n^4 \log^2 n}{\delta^4 s^4}
+ \frac{n \log n}{s} \cdot \frac{4 n \log n}{\delta^2}
\right)= \OO \left( \frac{n^4 \log^2 n}{s^4 \cdot \delta^4 }
+ \frac{n^2 \log^2 n}{ s \cdot \delta^2} \right)$. This finishes the proof of Theorem \ref{thm:faultyoracle}. 
\end{proof}
\section{Lower bounds}\label{sec:whyourresultisinteresting}

First, we show that our algorithm is optimal up to logarithmic factors when $p$ and $q$ are constant. To do so, we make use of the well-known planted clique conjecture.

\begin{conjecture}[Planted clique hardness]\label{conj:planted}
Given an Erd\H{o}s-R\'{e}nyi random graph $G(n,q)$ with $q=1/2$,
if we plant in $G(n,q)$ a clique of size $t$
where $t \in [3\cdot \log n, o(\sqrt{n})]$,
then there exists no polynomial time algorithm 
to recover the largest clique in this planted model.
\end{conjecture}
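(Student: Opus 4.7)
The statement is a well-known open conjecture, so any honest proposal must sketch the route I would take while flagging that a full proof would be a major breakthrough. The first step I would carry out is a self-reducibility argument reducing recovery to detection: a polynomial-time algorithm that recovers the planted clique immediately yields a distinguisher for the planted and null distributions, and conversely a detector can be used to peel off clique vertices one by one by testing conditional distributions on induced subgraphs obtained by conditioning on vertex memberships. This reduces the conjecture to the cleaner claim that no polynomial-time algorithm distinguishes $G(n,1/2)$ from $G(n,1/2)$ with a planted clique of size $t = o(\sqrt{n})$ with nontrivial advantage.

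The second step would be a two-layer attack on detection. In the information-theoretic layer, I would compute the chi-squared divergence (equivalently, the second moment of the likelihood ratio) between the planted and null ensembles and verify that it stays bounded for $t$ in the stated range; this matches the regime in which no natural test --- spectral, triangle counting, semidefinite programming --- succeeds, and it provides the quantitative benchmark a computational lower bound should match. In the computational layer, I would try to argue that any polynomial-time algorithm's output is a polynomial-size circuit in the edge indicators, and use boundedness of small moments to force that circuit's output distribution to look nearly identical under the two ensembles, so that no efficient algorithm can achieve noticeable distinguishing advantage.

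The main obstacle, and the reason the conjecture has resisted proof for roughly three decades, lies entirely in this computational layer. All known lifts from moment bounds to algorithmic indistinguishability work only against restricted classes of computation (low-degree polynomial tests, statistical query algorithms, the sum-of-squares hierarchy, local algorithms, message-passing, spectral methods, and so on). Moreover, the Alon--Krivelevich--Sudakov spectral algorithm succeeds at $t = \Omega(\sqrt{n})$, so the conjecture demands a lower bound that is tight up to constants at exactly the threshold where an efficient algorithm first appears --- leaving essentially no slack. Extending any restricted-model bound to unrestricted polynomial-time Turing machines would produce a circuit lower bound of a kind that currently seems to require progress on $\mathbf{P}$ versus $\mathbf{NP}$-style separations, which is beyond existing techniques. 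My honest plan therefore terminates with the strongest unconditional lower bound I can establish against a restricted class of algorithms, together with an explicit identification of the remaining gap, rather than with a complete proof of the conjecture as stated.
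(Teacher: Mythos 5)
This statement is stated in the paper as a \emph{conjecture} (the planted clique conjecture) and is used purely as a hardness assumption; the paper offers no proof of it, and none is expected. Your response is therefore the correct one: you rightly identify that no proof is currently possible, that the detection-to-recovery reduction and second-moment calculation only reduce the problem to a computational indistinguishability claim, and that all known lower bounds apply only to restricted models of computation. There is no gap to report beyond the one you have already honestly flagged, which is the open problem itself.
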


Under the planted clique conjecture, we note that there is no polynomial time algorithm for the SBM problem that recovers clusters of size 
$o(\sqrt{n})$ irrespective of the number $k$ of clusters present in the graph, for any constants $p$ and $q$. This can be seen by defining the partition of $V$ as $V=\cup_{i=1}^kV_i$, where $V_1$ is a clique of size $t=o(\sqrt{n})$, and $V_2,\cdots, V_k$ are singleton vertices,  $k=n-t$. Finally, let    $p=1$, $q=\frac{1}{2}$. Then an algorithm for finding a cluster of size $o(\sqrt{n})$ in a graph $G$ that is sampled from the SBM with the above partition solves the planted clique problem.

Thus, the dependency of our algorithm in Theorem \ref{thm:mainSBM} on $n$ is optimal under the planted clique conjecture up to logarithmic factors. 
%However, it is important to note that the planted clique conjecture does not give us any information on the dependencies of $p$ and $q$, as any polynomial on $p$ and $q$ of some constant degree yields a constant value in this range.

The following result was given in \cite{mazumdar2017clustering}, we give a proof here for the sake of completeness.
\begin{theorem}[\cite{mazumdar2017clustering}]\label{thm:hardness}
Let $A$ be a polynomial time algorithm in the faulty oracle model with parameters $n,k,\delta$. Suppose that $A$ finds a cluster of size $t$ irrespective of the value of $k$.   
Then under the planted clique conjecture, it holds that $t=\Omega(\sqrt{n})$.
\end{theorem}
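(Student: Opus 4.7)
The plan is to prove the contrapositive: assuming a polynomial-time algorithm $A$ in the faulty oracle model with parameters $(n,k,\delta)$ that, for every $k$, exactly recovers some latent cluster of size $t=o(\sqrt{n})$, I construct a polynomial-time solver for the planted clique problem, contradicting Conjecture~\ref{conj:planted}. Fix any constant $\delta\in(0,1/3]$ for which the conjecture applies. Given a planted clique instance $G\sim G(n,1/2)$ with a planted clique $K$ of size $t\in[3\log n,\,o(\sqrt{n})]$, I consider the partition of $V=[n]$ where $V_1:=K$ and $V_2,\ldots,V_{n-t+1}$ are singletons, so the number of latent clusters is $k=n-t+1$.

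The heart of the reduction is to turn the single realized graph $G$ into a random graph $\widetilde G$ whose edge distribution matches $\SBM(n,k,p,q)$ with $p=\tfrac12+\tfrac{\delta}{2}$ and $q=\tfrac12-\tfrac{\delta}{2}$; the faulty oracle $\OO$ will then answer ``+'' on a query $(u,v)$ iff $(u,v)$ is an edge of $\widetilde G$. For each pair $(u,v)$ independently, with probability $2\delta$ I copy the edge indicator from $G$, and with the remaining probability $1-2\delta$ I draw a fresh Bernoulli coin with bias $\beta:=(1-3\delta)/(2(1-2\delta))\in[0,1/2]$. A direct calculation gives the edge probability in $\widetilde G$: when $u,v\in K$, $G$ deterministically has the edge, yielding $2\delta\cdot 1+(1-2\delta)\beta=p$; when at most one endpoint lies in $K$, the $G(n,1/2)$ marginal gives $2\delta\cdot\tfrac12+(1-2\delta)\beta=q$. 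Since the per-pair auxiliary coins are mutually independent and independent of $G$, and the edges of $G$ are themselves mutually independent, the pairs of $\widetilde G$ are jointly independent conditional on the partition, so $\widetilde G\sim \SBM(n,k,p,q)$ exactly and $\OO$ is a valid faulty oracle with bias $\delta$.

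I now feed $\OO$ and parameters $(n,k,\delta)$ to the hypothesized algorithm $A$. By assumption, $A$ runs in polynomial time and outputs a latent cluster of size $t$, regardless of $k$. But our partition contains a single non-singleton cluster, namely $K$, which has size exactly $t$; therefore the unique latent cluster of size $\ge t$ that $A$ can output is $K$ itself. This yields a polynomial-time recovery of $K$ with $t=o(\sqrt{n})$, contradicting the planted clique conjecture and forcing $t=\Omega(\sqrt{n})$.

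The main delicate point is justifying that the simulated oracle $\OO$ is distributed exactly as in the faulty oracle model; this reduces to the independence claim for the pairs of $\widetilde G$, which is immediate from the use of fresh randomness for each pair. A secondary point, which is not really an obstacle, is that the reduction is stated for $\delta\le 1/3$ so that $\beta\ge 0$; since the planted clique conjecture is believed for any constant edge density, fixing any such $\delta$ is enough to rule out $t=o(\sqrt n)$ in the faulty oracle model, which is the claim of Theorem~\ref{thm:hardness}.
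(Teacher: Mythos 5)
Your proof is correct and takes essentially the same approach as the paper: reduce planted clique to the faulty-oracle model by randomizing each pair of $G$ to produce an exact $\SBM(n,k,p,q)$ instance with $p=\tfrac12+\tfrac\delta2$, $q=\tfrac12-\tfrac\delta2$, clique $K$ as the only non-singleton cluster, and conclude that recovering a size-$t$ cluster recovers $K$. Your mixture construction (copy an edge of $G$ with probability $2\delta$, else resample Bernoulli($\beta$)) is a slight generalization of the paper's, which fixes $\delta=1/3$ and simply deletes each edge independently with probability $1/3$ (your construction specializes to this at $\delta=1/3$, where $\beta=0$), and you spell out the per-pair independence argument that the paper leaves implicit.
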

\begin{proof}
Let $G$ be a graph generated from the planted clique problem with parameter $t$. Note that each potential edge in the size-$t$ clique, say $K$, appears with probability $1$, and each of the remaining potential edges appear with probability $\frac{1}{2}$. Now we delete each edge in $G$ with probability $\frac{1}{3}$. Then the resulting graph can be viewed as an instance generated from the faulty oracle model with parameters $n$, $k=n-t+1$ and $\delta=\frac{1}{3}$: there are $k$ clusters, one being $H$, and $n-t$ clusters being singleton vertices. Furthermore, each intra-cluster edge is removed with probability $\frac{1}{3}$ and each inter-cluster is added with probability $\frac{1}{2}\cdot (1-\frac{1}{3})=\frac13$. If there is a polynomial time algorithm that recovers the cluster $H$, no matter how many queries it performs, then it also solves the planted clique problem with clique size $t$. Under the planted clique conjecture, $t=\Omega(\sqrt{n})$.
\end{proof}
\section{Conclusion}
\label{sec:conclusion}
In this work,  we design a spectral algorithm that recovers large clusters in the SBM model in the presence of arbitrary numbers of small clusters and compared to previous work,  we do not require gap constraint in the size of consecutive clusters.
Some interesting directions that remain open are as follows. 

\begin{enumerate}

    \item We note that both our algorithm and \cite{ailon2015iterative} require knowledge of the probability parameters $p$ and $q$ (\cite{ailon2015iterative} also need the knowledge of $k$, the number of clusters). Thus, whether parameter-free community recovery algorithms can be designed with similar recovery guarantees is a very interesting question.

    \item Both our result (Algorithm~\ref{alg:mainalgorithm}) as well as \cite{ailon2015iterative} have a multiplicative $\log n$ term in our recovery guarantees. In comparison, the algorithm by Vu~\cite{vu2018simple}, which is the state-of-the-art algorithm in the dense case when ``all'' clusters are large, only has an additive logarithmic term. This raises the question of whether this multiplicative logarithmic factor can be further optimized when recovering large clusters in the presence of small clusters. 
    
\end{enumerate}

Additionally, we note that the constant in our recovery bound is quite large ($2^{13}$), and we have not made efforts to optimize this constant. We believe this constant can be optimized significantly, such as through a more careful calculation of the Chernoff bound in Theorem~\ref{thm: chernoff}.

\bibliographystyle{plain}
\bibliography{}

\begin{thebibliography}{10}

\bibitem{abbe2017community}
Emmanuel Abbe.
\newblock Community detection and stochastic block models: recent developments.
\newblock {\em The Journal of Machine Learning Research}, 18(1):6446--6531,
  2017.

\bibitem{abbe2015community}
Emmanuel Abbe and Colin Sandon.
\newblock Community detection in general stochastic block models: Fundamental
  limits and efficient algorithms for recovery.
\newblock In {\em 2015 IEEE 56th Annual Symposium on Foundations of Computer
  Science}, pages 670--688. IEEE, 2015.

\bibitem{ailon2013breaking}
Nir Ailon, Yudong Chen, and Huan Xu.
\newblock Breaking the small cluster barrier of graph clustering.
\newblock In {\em International conference on machine learning}, pages
  995--1003. PMLR, 2013.

\bibitem{ailon2015iterative}
Nir Ailon, Yudong Chen, and Huan Xu.
\newblock Iterative and active graph clustering using trace norm minimization
  without cluster size constraints.
\newblock {\em J. Mach. Learn. Res.}, 16:455--490, 2015.

\bibitem{bansal2004correlation}
Nikhil Bansal, Avrim Blum, and Shuchi Chawla.
\newblock Correlation clustering.
\newblock {\em Machine learning}, 56(1-3):89--113, 2004.

\bibitem{bollobas2004max}
B{\'e}la Bollob{\'a}s and Alex~D Scott.
\newblock Max cut for random graphs with a planted partition.
\newblock {\em Combinatorics Probability and Computing}, 13(4-5):451--474,
  2004.

\bibitem{boppana1987eigenvalues}
Ravi~B Boppana.
\newblock Eigenvalues and graph bisection: An average-case analysis.
\newblock In {\em 28th Annual Symposium on Foundations of Computer Science
  (sfcs 1987)}, pages 280--285. IEEE, 1987.

\bibitem{bui1987graph}
Thang~Nguyen Bui, Soma Chaudhuri, Frank~Thomson Leighton, and Michael Sipser.
\newblock Graph bisection algorithms with good average case behavior.
\newblock {\em Combinatorica}, 7(2):171--191, 1987.

\bibitem{chaudhuri2012spectral}
Kamalika Chaudhuri, Fan Chung, and Alexander Tsiatas.
\newblock Spectral clustering of graphs with general degrees in the extended
  planted partition model.
\newblock In {\em Conference on Learning Theory}, pages 35--1. JMLR Workshop
  and Conference Proceedings, 2012.

\bibitem{chen2012clustering}
Yudong Chen, Sujay Sanghavi, and Huan Xu.
\newblock Clustering sparse graphs.
\newblock In {\em Proceedings of the 25th International Conference on Neural
  Information Processing Systems-Volume 2}, pages 2204--2212, 2012.

\bibitem{cole2019nonuniform}
Sam Cole.
\newblock Recovering nonuniform planted partitions via iterated projection.
\newblock {\em Linear Algebra and its Applications}, 576(1):79--107, 2019.

\bibitem{cole2017simple}
Sam Cole, Shmuel Friedland, and Lev Reyzin.
\newblock A simple spectral algorithm for recovering planted partitions.
\newblock {\em Special Matrices}, 5(1):139--157, 2017.

\bibitem{davis1969some}
Chandler Davis and William~M Kahan.
\newblock Some new bounds on perturbation of subspaces.
\newblock {\em Bulletin of the American Mathematical Society}, 75(4):863--868,
  1969.

\bibitem{del2022clustering}
Alberto Del~Pia, Mingchen Ma, and Christos Tzamos.
\newblock Clustering with queries under semi-random noise.
\newblock {\em arXiv preprint arXiv:2206.04583. To appear at Conference on
  Learning Theory (COLT) 2022}, 2022.

\bibitem{dyer1989solution}
Martin~E. Dyer and Alan~M. Frieze.
\newblock The solution of some random np-hard problems in polynomial expected
  time.
\newblock {\em Journal of Algorithms}, 10(4):451--489, 1989.

\bibitem{fellegi1969theory}
Ivan~P Fellegi and Alan~B Sunter.
\newblock A theory for record linkage.
\newblock {\em Journal of the American Statistical Association},
  64(328):1183--1210, 1969.

\bibitem{goldman1990exact}
Sally~A Goldman, Michael~J Kearns, and Robert~E Schapire.
\newblock Exact identification of circuits using fixed points of amplification
  functions.
\newblock In {\em Proceedings [1990] 31st Annual Symposium on Foundations of
  Computer Science}, pages 193--202. IEEE, 1990.

\bibitem{Chernoff}
Wassily Hoeffding.
\newblock Probability inequalities for sums of bounded random variables.
\newblock {\em Journal of the American Statistical Association},
  58(301):13--30, 1963.

\bibitem{holland1983stochastic}
Paul~W Holland, Kathryn~Blackmond Laskey, and Samuel Leinhardt.
\newblock Stochastic blockmodels: First steps.
\newblock {\em Social networks}, 5(2):109--137, 1983.

\bibitem{xiahuang2022}
Zengfeng~Huang Jinghui~Xia.
\newblock Optimal clustering with noisy queries via multi-armed bandit.
\newblock {\em To appear at the 39th International Conference on Machine
  Learning (ICML 2022)}, 2022.

\bibitem{green2020clustering}
Kasper~Green Larsen, Michael Mitzenmacher, and Charalampos Tsourakakis.
\newblock Clustering with a faulty oracle.
\newblock In {\em Proceedings of The Web Conference 2020}, pages 2831--2834,
  2020.

\bibitem{leskovec2010predicting}
Jure Leskovec, Daniel Huttenlocher, and Jon Kleinberg.
\newblock Predicting positive and negative links in online social networks.
\newblock In {\em Proceedings of the 19th international conference on World
  wide web}, pages 641--650, 2010.

\bibitem{mazumdar2017clustering}
Arya Mazumdar and Barna Saha.
\newblock Clustering with noisy queries.
\newblock In {\em Advances in Neural Information Processing Systems}, pages
  5788--5799, 2017.

\bibitem{mazumdar2017theoretical}
Arya Mazumdar and Barna Saha.
\newblock A theoretical analysis of first heuristics of crowdsourced entity
  resolution.
\newblock In {\em Proceedings of the Thirty-First AAAI Conference on Artificial
  Intelligence}, pages 970--976, 2017.

\bibitem{mcsherry2001spectral}
Frank McSherry.
\newblock Spectral partitioning of random graphs.
\newblock In {\em Proceedings 42nd IEEE Symposium on Foundations of Computer
  Science}, pages 529--537. IEEE, 2001.

\bibitem{mitzenmacher2016predicting}
Michael Mitzenmacher and Charalampos~E Tsourakakis.
\newblock Predicting signed edges with $o(n^{1+o(1)}\log n)$ queries.
\newblock {\em arXiv preprint arXiv:1609.00750}, 2016.

\bibitem{PZ21:clustering}
Pan Peng and Jiapeng Zhang.
\newblock Towards a query-optimal and time-efficient algorithm for clustering
  with a faulty oracle.
\newblock In {\em Conference on Learning Theory}, 2021.

\bibitem{vu2018simple}
Van Vu.
\newblock A simple svd algorithm for finding hidden partitions.
\newblock {\em Combinatorics, Probability \& Computing}, 27(1):124, 2018.

\bibitem{yang2012defining}
Jaewon Yang and Jure Leskovec.
\newblock Defining and evaluating network communities based on ground-truth.
\newblock In {\em Proceedings of the ACM SIGKDD Workshop on Mining Data
  Semantics}, pages 1--8, 2012.

\end{thebibliography}

%%%end%%%

\end{document}